\renewenvironment{proof}[1][\proofname]{{\bfseries #1.}}{}
\newcommand{\eqals}[1]{\begin{align*}#1\end{align*}}
\newcommand{\eqal}[1]{\begin{align}#1\end{align}}
\newcommand{\bpr}{\begin{proof}}
\newcommand{\epr}{\end{proof}}
\newcommand{\be}{\begin{equation}}
\newcommand{\ee}{\end{equation}}
 \newtheorem{example}{Example}
  \newtheorem{theorem}{Theorem}
  \newtheorem{lemma}{Lemma}
  \newtheorem{proposition}{Proposition}
  \newtheorem{remark}{Remark}
  \newtheorem{corollary}{Corollary}
  \newtheorem{definition}{Definition}
\newcommand{\bd}{\begin{definition}}
\newcommand{\ed}{\end{definition}}
\newcommand{\bi}{\begin{itemize}}
\newcommand{\ei}{\end{itemize}}
\newtheorem{ass}{Assumption}
\newcommand{\ba}{\begin{ass}}
\newcommand{\ea}{\end{ass}}
\newtheorem{mtd}{Method}
\newcommand{\bmtd}{\begin{mtd}}
\newcommand{\emtd}{\end{mtd}}
\newcommand{\bre}{\begin{restatable}}
\newcommand{\ere}{\end{restatable}}
\newcommand{\br}{\begin{remark}}
\newcommand{\er}{\end{remark}}
\newcommand{\bp}{\begin{proposition}}
\newcommand{\ep}{\end{proposition}}
\newcommand{\blm}{\begin{lemma}}
\newcommand{\elm}{\end{lemma}}
\newcommand{\bt}{\begin{theorem}}
\newcommand{\et}{\end{theorem}}
\newcommand{\bcor}{\begin{corollary}}
\newcommand{\ecor}{\end{corollary}}
\newcommand{\bex}{\begin{example}}
\newcommand{\eex}{\end{example}}
\crefname{proposition}{Proposition}{Propositions}
\crefname{definition}{Definition}{Definitions}
\crefname{ass}{Assumption}{Assumptions}
\crefname{equation}{Eq.}{Eqs.}
\crefname{figure}{Fig.}{Figs.}
\crefname{table}{Table}{Tables}
\crefname{section}{Sec.}{Secs.}
\crefname{theorem}{Thm.}{Thms.}
\crefname{lemma}{Lemma}{Lemmas}
\crefname{corollary}{Cor.}{Cors.}
\crefname{example}{Example}{Examples}
\crefname{appendix}{Appendix}{Appendixes}
\crefname{remark}{Remark}{Remark}
\newcommand{\X}{\mathcal{X}}
\newcommand{\Y}{\mathcal{Y}}
\newcommand{\hh}{\mathcal{H}}
\newcommand{\cx}{\mathcal{X}}
\newcommand{\cy}{\mathcal{Y}}
\newcommand{\la}{\lambda}
\newcommand{\R}{\mathbb{R}}
\newcommand{\N}{\mathbb{N}}
\newcommand{\set}[1]{\left\{#1\right\}}
\DeclareMathOperator{\tr}{Tr}
\DeclareMathOperator{\diag}{Diag}
\DeclareMathOperator{\lspan}{span}
\DeclareMathOperator{\range}{range}
\DeclareMathOperator{\noy}{Ker}
\DeclareMathOperator{\argmax}{argmax}
\DeclareMathOperator{\argmin}{argmin}
\DeclareMathOperator{\dom}{dom}
\renewcommand{\leq}{\leqslant}
\renewcommand{\geq}{\geqslant}
\author{\bfseries
Ulysse Marteau-Ferey}
\author{\bfseries
Francis Bach}
\author{\bfseries
Alessandro Rudi
}
\affil{
INRIA - D{\'e}partement d'Informatique de l'{\'E}cole Normale Sup{\'e}rieure \\
PSL Research University\\
Paris, France
}
\title{Non-parametric Models for Non-negative Functions}
\begin{document}

\maketitle

\begin{abstract}
Linear models have shown great effectiveness and flexibility in many fields such as machine learning, signal processing and statistics. They can represent  rich spaces of functions while preserving the convexity of the optimization problems where they are used, and are simple to evaluate, differentiate and integrate. However, for modeling non-negative functions, which are crucial for unsupervised learning, density estimation, or non-parametric Bayesian methods, linear models are not applicable directly. Moreover, current state-of-the-art models like generalized linear models either lead to non-convex optimization problems, or cannot be easily integrated. In this paper we provide the first model for non-negative functions which benefits from the same good properties of linear models. In particular, we prove that it admits a representer theorem and provide an efficient dual formulation for convex problems. We study its representation power, showing that the resulting space of functions is strictly richer than that of generalized linear models. Finally we extend the model and the theoretical results to functions with outputs in convex cones. The paper is complemented by an experimental evaluation of the model showing its effectiveness  in terms of formulation, algorithmic derivation and practical results on the problems of density estimation,  regression with heteroscedastic errors, and multiple quantile regression.

\end{abstract}

\begin{bibunit}[plainnat]

\section{Introduction}
The richness and flexibility of linear models, with the aid of possibly infinite-dimensional feature maps, allowed to achieve great effectiveness from a theoretical, algorithmic, and practical viewpoint in many supervised and unsupervised learning problems, becoming one of the workhorses of statistical machine learning in the past decades \cite{friedman2001elements,scholkopf2002learning}. Indeed linear models preserve convexity of the optimization problems where they are used. Moreover they can be evaluated, differentiated and also integrated very easily. 

Linear models are adapted to represent functions with unconstrained real-valued or vector-valued outputs. However, in some applications, it is crucial to learn functions with constrained outputs, such as functions which are non-negative or whose outputs are in a convex set, possibly with additional constraints like an integral equal to one, such as in density estimation, regression of multiple quantiles~\cite{bondell2010noncrossing}, and isotonic regression~\cite{barlow1972isotonic}. Note that the convex pointwise constraints on the outputs of the learned function must hold everywhere and not only on the training points. In this context, other models have been considered, such as generalized linear models \cite{mccullagh1989generalized}, at the expense of losing some important properties that hold for linear ones.


In this paper, we make the following contributions:
\begin{itemize}[leftmargin=.5cm]

    \item[-] We consider a class of models with non-negative outputs, as well as outputs in a chosen convex cone, which exhibit the same key properties of linear models. They can be used within empirical risk minimization with convex risks, preserving convexity. They are defined in terms of an arbitrary feature map and they can be evaluated, differentiated and integrated exactly.  
    
    \item[-] We derive a representer theorem for our models and provide a convex finite-dimensional dual formulation of the learning problem, depending only on the training examples. Interestingly, in the proposed formulation, the convex pointwise constraints on the outputs of the learned function are naturally converted to convex constraints on the coefficients of the model. 
    
    \item[-] We prove that the proposed model is a universal approximator and is strictly richer than commonly used generalized linear models. Moreover, we show that its Rademacher complexity is comparable with the one of linear models based on kernels.
  
    \item[-] To show the effectiveness of the method in terms of formulation, algorithmic derivation and practical results, we express naturally the problems of density estimation, regression with Gaussian heteroscedastic errors, and multiple quantile regression. We derive the corresponding learning algorithms for convex dual formulation, and compare it with standard techniques used for the specific problems on a few reference simulations.
  
\end{itemize}

\section{Background}

In a variety of fields ranging from {\em supervised learning}, to {\em Gaussian processes} \cite{williams2006gaussian}, {\em inverse problems}~\cite{engl1996regularization}, 
{\em scattered data approximation techniques} \cite{wendland2004scattered}, and {\em quadrature methods} to compute multivariate integrals~\cite{bach2017equivalence}, prototypical problems can be cast as
 \eqal{
 \label{eq:prototypical-problem}
 f^* \in \arg\min_{f \in {\cal  F} }\  L(f(x_1), \dots, f(x_n)) + \Omega(f).
 }
Here $L:\R^n \to \R$ is a (often convex) functional, ${\cal F}$ a class of real-valued functions, $x_1,\dots, x_n$ a given set of points in $\X$, and $\Omega$ a suitable regularizer \cite{scholkopf2002learning}.

 Linear models for the class of functions ${\cal F}$ are particularly  suitable to solve such problems.
 They are classically defined in terms of a feature map $\phi: \X \to \hh$ where $\X$ is the input space and $\hh$ is a separable Hilbert space. Typically, $\hh = \R^D$ with $D \in \N$, but $\hh$ can also be infinite-dimensional.  
A linear model is determined by a parameter vector $w \in \hh$ as
\eqal{
f_w(x) = \phi(x)^\top w,
}
leading to the space ${\cal F} = \{ f_w ~|~ w \in \hh\}$.
These models are particularly effective for problems in the form \cref{eq:prototypical-problem} because they satisfy the following key properties.

\paragraph{\bf P1. They preserve convexity of the loss function.} Indeed, given $x_1,\dots, x_n \in \X$, if
$L: \R^n \to \R$ is convex, then $ L(f_w(x_1),\dots,f_w(x_n))$ is convex in $w$.

\paragraph{\bf P2. They are universal approximators.} Under mild conditions on $\phi$ and $\hh$ (universality of the associated kernel function \cite{micchelli2006universal}) linear models can approximate any continuous function on $\X$. 
Moreover they can represent many classes of functions of interest, such as the class of polynomials, analytic functions, smooth functions on subsets of $\R^d$ or on manifolds, or Sobolev spaces  \cite{scholkopf2002learning}.

\paragraph {\bf P3. They admit a finite-dimensional representation.} Indeed, there is a so-called {\em representer theorem} \cite{cucker2002mathematical}.
Let $L$ be a possibly non-convex functional, ${\cal F} = \{ f_w ~|~ w \in \hh\}$, and assume $\Omega$ is an increasing function of $w^\top w$ (see \cite{scholkopf2001generalized} for more generality and details).
Then, the optimal solution $f^*$ of \eqref{eq:prototypical-problem} corresponds to $f^* = f_{w^*}$, with $w^* = \sum_{i=1}^n \alpha_i \phi(x_i)$, and $\alpha_1,\dots \alpha_n \in \R$. Denoting by $k$ the {\em kernel function} $k(x,x') := \phi(x)^\top \phi(x')$ for $x,x' \in \X$ (see, e.g., \cite{scholkopf2002learning}), $f^*$ can be rewritten as
\eqal{
\label{eq:linear-model-kernel-representation}
 f^*(x) = \sum_{i=1}^n \alpha_i k(x,x_i).
}
\paragraph{\bf P4. They are differentiable/integrable in closed form.} Assume that the kernel $k(x,x')$ is differentiable in the first variable. Then $\nabla_x f_{w^*}(x) = \sum_{i=1}^n \alpha_i \nabla_x k(x,x_i)$. Also the integral of $f_{w^*}$ can be computed in closed form if we know how to integrate $k$. Indeed, for $p :\X \to \R$ integrable, we have
$  \int f_{w^*}(x) p(x) dx = \sum_{i=1}^n \alpha_i \int k(x,x_i) p(x) dx.$

\paragraph{\bf Vector-valued models.} By juxtaposing scalar-valued linear models, we obtain a vector valued linear model, i.e. $f_{w_1\cdots w_p}:\X \to \R^p$ defined as $f_{w_1\cdots w_p}(x) = (f_{w_1}(x),\dots,f_{w_p}(x)) \in \R^p$.

\subsection{Models for non-negative functions or functions with constrained outputs}\label{sec:background-other-models}

While linear models provide a powerful formalization for functions from $\X$ to $\R$ or $\R^p$, in some important applications arising in the context of unsupervised learning, non-parametric Bayesian methods, or graphical models, additional conditions on the model are required. In particular, we will focus on {\em pointwise output constraints}. That is, given $\Y \subsetneq \R^p$, we want to obtain functions satisfying $f(x) \in \Y$ {\em for all} $x \in \X$. A prototypical example is the problem of density estimation.  
\begin{example}[density estimation problem]\label{ex:density-estimation}
The goal is to estimate the density of a probability $\rho$ on $\X$, given some i.i.d.~samples $x_1,\dots,x_n$. It can be formalized in terms of \cref{eq:prototypical-problem} (e.g., through maximum likelihood), with the constraint that $f$ is a density, i.e.,
$f(x) \geq 0, ~~ \forall x \in \X, \ \textrm{and} \  \int_{\X} f(x) dx = 1.$
\end{example}
Despite the similarity with \cref{eq:prototypical-problem}, linear models cannot be applied because of the constraint $f(x) \geq 0$.
Existing approaches to deal with the problem above are reported below, but lack some of the crucial properties {\bf P1-4} that make linear models so effective for problems of the form \cref{eq:prototypical-problem}.

\paragraph{\bf Generalized linear models (GLM).} Given a suitable map $\psi:\R^p \to \Y$, 
these models are of the form
$ f(x) = \psi(w^\top \phi(x))$.
In the case of non-negative functions, common choices are $\psi(z) = e^z$, leading to the {\em exponential family}, or the positive part function $\psi(z) = \max(0,z)$. GLM have an expressive power comparable to linear models, being able to represent a wide class of functions, and admit a finite-dimensional representation \cite{cheney2009course} (they thus satisfy {\bf P2} and {\bf P3}). However, in general they do not preserve convexity of the functionals where they are used  (except for specific cases, such as $L = -\sum_{i=1}^n \log z_i$ and $\psi(z) = e^z$ \cite{mccullagh1989generalized}).
Moreover they cannot be integrated in closed form, except for specific $\phi$, requiring some Monte Carlo approximations \cite{robert2013monte} (thus missing {\bf P1} and {\bf P4}).
An elegant way to obtain a GLM-like non-negative model is via {\em non-parametric mirror descent} \cite{yang2019learning} (see, e.g., their Example 4). A favorable feature of this approach is that the map $\psi$ is built implicitly according to the geometry of $\Y$. However, still the resulting model does not always satisfy  {\bf P3}, does not satisfy  {\bf P1}  and  {\bf P4} , and is only efficient in small-dimensional input spaces.

\paragraph{\bf Non-negative coefficients models (NCM).} Leveraging the finite-dimensional representation of linear models in \cref{eq:linear-model-kernel-representation}, the NCM models represent non-negative functions as
$f(x) = \sum_{i=1}^n \alpha_i k(x,x_i)$, with $\alpha_1,\dots \alpha_n \geq 0$,
given a kernel $k(x,x') \geq 0$ for any $x,x' \in \X$, such as the Gaussian kernel $e^{-\|x-x'\|^2}$ or the Abel kernel $e^{-\|x-x'\|}$. By construction these models satisfy {\bf P1}, {\bf P3}, {\bf P4}. However, they do not satisfy {\bf P2}. Indeed the fact that $\alpha_1,\dots,\alpha_n \geq 0$ does not allow cancellation effects and thus strongly constrains the set of functions that can be represented, as illustrated below.
\begin{example}\label{ex:NCM}
The NCM model cannot approximate arbitrarily well a function with a width strictly smaller that the width of the kernel. Take $k(x,x') = { e^{-\|x-x'\|^2}}$ and try to approximate the function ${ e^{-\|x\|^2/2}}$ on $[-1,1]$. Independently of the chosen $n$ or the chosen locations of the points $(x_i)_{i=1}^n$, it will not be possible to achieve an error smaller than a fixed constant (\cref{app:details-other-models} for a simulation).
\end{example} 
\paragraph{\bf Partially non-negative linear models (PNM).}
A partial solution  to have a linear model that is pointwise non-negative is to require non-negativity only on the observed points $(x_i)_{i=1}^n$. That is, the model is of the form $w^\top \phi(x)$, with $w \in \{ w \in \hh ~|~ w^\top \phi(x_1) \geq 0,\dots,w^\top \phi(x_n) \geq 0 \}$. While this model is easy to integrate in \cref{eq:prototypical-problem}, this does not guarantee the non-negativity outside of a neighborhood of $(x_i)_{i=1}^n$. It is possible to enrich this construction with a set of points that cover the whole space $\X$ (i.e., a fine grid, if $\X = [-1, 1]^d$), but this usually leads to exponential costs in the dimension of $\X$ and is not feasible when $d \geq 4$.
%

\section{Proposed Model for Non-negative Functions}







In this section we consider a non-parametric model for non-negative functions and we show that it enjoys the same benefits of linear models. In particular, we prove that it satisfies at the same time all the properties {\bf P1}, \dots, {\bf P4}. 
%
As linear models, the model we consider has a simple formulation in terms of a feature map $\phi: \X \to \hh$. 

Let ${\cal S}(\hh)$ be the set of bounded Hermitian linear operators from $\hh$ to $\hh$ (set of symmetric $D \times D$ matrices  if $\hh = \R^D$ with $D \in \N$) and denote by $A \succeq 0$ the fact that $A$ is a positive semi-definite operator (a positive semi-definite matrix, when $\hh$ is finite-dimensional) \cite{reed1980methods,horn2012matrix}. The model is defined for all $x \in \X$ as
\eqal{
\label{eq:model-non-negative}
f_A(x) = \phi(x)^\top A \phi(x), \qquad \textrm{where} \qquad A \in {\cal S}(\hh),~~A \succeq 0.
}
The proposed model
\footnote{Note that the model in \cref{eq:model-non-negative} has already been considered in \cite{bagnell2015learning} with a similar goal as ours. However, this workshop publication has only be lightly peer-reviewed, the representer theorem they propose is incorrect, the optimization algorithm is based on an incorrect representation and inefficient at best. See Appendix~\ref{app:kernelsos} for details.} 
is parametrized in terms of the operator (or matrix when $\hh$ is finite dimensional)~$A$, like in~\cite{blondel2015convex}, but with an additional positivity constraint. Note that, by construction, it is linear in $A$ and at the same time non-negative for any $x \in \X$, due to the positiveness of the operator~$A$, as reported below (the complete proof in \cref{app:proof-prop:model-non-negative-and-linear}).
\bp[Pointwise positivity  and linearity in the parameters]\label{prop:model-non-negative-and-linear}
Given $A, B \in {\cal S}(\hh)$ and $\alpha, \beta \in \R$, then $f_{\alpha A + \beta B}(x) = \alpha f_{A}(x) + \beta f_{B}(x)$. Moreover,  $
A \succeq 0  \ \Rightarrow \  f_A(x) \geq 0, ~ \forall x \in \X.$
\ep
An important consequence of linearity of $f_A$ in the parameter is that, despite the pointwise non-negativity in $x$,  it preserves {\bf P1}, i.e.,  the convexity of the functional where it is used. 
First define the set ${\cal S}(\hh)_+$ as ${\cal S}(\hh)_+ = \{A \in {\cal S}(\hh) ~|~ A \succeq 0\}$ and note that ${\cal S}(\hh)_+$ is convex \cite{boyd2004convex}. 
\bp[The model satisfies {\bf P1}]\label{prop:model-satisfies-P1}
Let $L:\R^n \to \R$ be a jointly convex function and $x_1,\dots, x_n \in \X$. 
Then the function $A \mapsto L(f_A(x_1),\dots, f_A(x_n))$ is convex on ${\cal S}(\hh)_+$.
\ep
\cref{prop:model-satisfies-P1} is proved in \cref{app:proof-prop:model-satisfies-P1}. The property above provides great freedom in choosing the functionals to be optimized with the proposed model. However, when $\hh$ has very high dimensionality or it is infinite-dimensional, the resulting optimization problem may be quite expensive. In the next subsection we provide a representer theorem and finite-dimensional representation for our model, that makes the optimization independent from the dimensionality of $\hh$.

\subsection{Finite-dimensional representations, representer theorem, dual formulation}

Here we will provide a finite-dimensional representation for the solutions of the following problem,
\eqal{
\label{eq:problem-for-representer}
 \inf_{A \succeq 0} L(f_A(x_1),\dots,f_A(x_n)) + \Omega(A),
}
given some points $x_1,\dots, x_n \in \hh$.
However, the existence and uniqueness of solutions for the problem above depend crucially on the choice of the regularizer $\Omega$ as it happens for linear models when~$\hh$ is finite-dimensional \cite{engl1996regularization}.
To derive a representer theorem for our model, we need to specify the class of regularizers we are considering. In the context of linear models a typical regularizer is Tikhonov regularization, i.e., $\Omega(w) = \la w^\top w$, for $w \in \hh$. Since the proposed model is expressed in terms of a symmetric operator (matrix, if $\hh$ is finite-dimensional), the equivalent of the Tikhonov regularizer is a functional that penalizes the squared Frobenius norm of $A$, i.e., $\Omega(A) = \la \tr(A^\top A)$, for $A \in {\cal S}(\hh)$ also written as $\Omega(A) = \la \|A\|^2_F$ \cite{engl1996regularization}. However, since $A$ is an operator, we can also consider different norms on its spectrum. From this viewpoint, an interesting regularizer corresponds to the {\em nuclear norm} $\|A\|_\star$, which induces sparsity on the spectrum of $A$, leading to low-rank solutions \cite{recht2010guaranteed,blondel2015convex}.
In this paper, for the sake of simplicity we will present the results for the following regularizer, which is the matrix/operator equivalent of the {\em elastic-net} regularizer \cite{zou2005regularization}:
\eqal{
\label{eq:base-regularizer}
\Omega(A) = \la_1 \|A\|_\star + \la_2 \|A\|^2_F, \quad \forall A \in {\cal S}(\hh),
}
 with $\la_1, \la_2 \geq 0$ and $\la_1+\la_2 > 0$. Note that $\Omega$ is strongly convex as soon as $\la_2 > 0$; we will therefore take $\lambda_2 > 0$ in practice in order to have easier optimization. 
 Recall the definition of the kernel $k(x,x') := \phi(x)^\top \phi(x')$, $x,x' \in \X$ \cite{scholkopf2002learning}.
 We have the following theorem.
\begin{restatable}[Representer theorem, {\bf P3}]{theorem}{thmnonneg}\label{thm:representer-non-negative}
Let $L:\R^n \to \R\cup \set{+\infty}$ be lower semi-continuous and bounded below, and $\Omega$ as in \cref{eq:base-regularizer}. Then \cref{eq:problem-for-representer} has a solution $A_*$ which can be written as
\eqal{
\label{eq:char-optimal-solution-representer}
 \sum_{i,j=1}^n \mathbf{B}_{ij} \phi(x_i)\phi(x_j)^\top, \qquad \textrm{for some matrix} ~ \mathbf{B} \in \R^{n\times n}, ~ \mathbf{B} \succeq 0.
}
$A_*$ is unique if $L$ is convex and $\la_2 > 0$. By \cref{eq:model-non-negative}, $A_*$ corresponds to a function of the form
$$f_*(x)= \sum_{i,j=1}^n \mathbf{B}_{ij} k(x,x_i)k(x,x_j),  \qquad \textrm{for some matrix} ~ \mathbf{B} \in \R^{n\times n}, ~ \mathbf{B} \succeq 0.$$

\end{restatable}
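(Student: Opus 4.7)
The plan is a finite-rank reduction around the projection onto the span of the features. Let $V = \lspan\set{\phi(x_1),\dots,\phi(x_n)} \subseteq \hh$, set $r := \dim V \leq n$, and let $P \in {\cal S}(\hh)$ denote the orthogonal projection onto $V$. For any $A \in {\cal S}(\hh)_+$ the compressed operator $PAP$ is still PSD (since $v^\top PAP v = (Pv)^\top A (Pv) \geq 0$) and satisfies $f_{PAP}(x_i) = f_A(x_i)$ for every $i$, because $P\phi(x_i) = \phi(x_i)$. I would then show $\Omega(PAP) \leq \Omega(A)$ termwise: viewing $A$ as a block operator in the decomposition $\hh = V \oplus V^\perp$, the Frobenius norm decomposes as a Pythagorean sum $\|A\|_F^2 = \|PAP\|_F^2 + \|PA(I-P)\|_F^2 + \|(I-P)AP\|_F^2 + \|(I-P)A(I-P)\|_F^2$, giving $\|PAP\|_F^2 \leq \|A\|_F^2$; and positivity $A \succeq 0$ gives $\|A\|_\star = \tr(A)$ and $\|PAP\|_\star = \tr(PAP) = \tr(A^{1/2} P A^{1/2}) \leq \tr(A)$, the last bound coming from $P \preceq I$. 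Consequently the infimum in \cref{eq:problem-for-representer} coincides with the infimum over the finite-dimensional PSD cone ${\cal S}(V)_+$.

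Existence on this finite-dimensional cone then follows by a standard Weierstrass argument: each $A \mapsto f_A(x_i)$ is linear, hence continuous, so the composed data-fitting term is lower semi-continuous; $\Omega$ is coercive on ${\cal S}(V)_+$ (via $\la_2\|A\|_F^2$ when $\la_2>0$, otherwise via $\la_1 \tr(A)$ on the PSD cone, using equivalence of norms in finite dimension); and $L$ is bounded below. To obtain the announced representation, pick an orthonormal basis $e_1,\dots,e_r$ of $V$ and coefficients $C \in \R^{r \times n}$ with $e_k = \sum_i C_{ki}\phi(x_i)$. Every element of ${\cal S}(V)_+$ writes $\sum_{k,l} \tilde{\mathbf{B}}_{kl} e_k e_l^\top$ with $\tilde{\mathbf{B}} \succeq 0$, and substitution yields $A_* = \sum_{i,j} \mathbf{B}_{ij} \phi(x_i)\phi(x_j)^\top$ with $\mathbf{B} = C^\top \tilde{\mathbf{B}} C \succeq 0$. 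Uniqueness when $L$ is convex and $\la_2 > 0$ comes from strict convexity of the full objective: \cref{prop:model-satisfies-P1} gives convexity of the data-fitting term, the nuclear norm is convex, and $\la_2 \|\cdot\|_F^2$ is strictly convex on ${\cal S}(\hh)$.

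The delicate step is the nuclear-norm monotonicity $\|PAP\|_\star \leq \|A\|_\star$. It is false for indefinite operators (compression can destroy cancellations between positive and negative spectral parts and inflate the sum of absolute singular values), and it genuinely relies on the constraint $A \succeq 0$, which collapses the nuclear norm to the trace and makes the inequality a direct consequence of $P \preceq I$. This compatibility between positivity and elastic-net regularization is precisely the ingredient that is absent from the standard Hilbert-space representer argument for linear models, and is what makes the PSD parametrization of \cref{eq:model-non-negative} admit a clean finite-dimensional representer theorem.
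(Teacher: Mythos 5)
Your proof is correct and follows the same overall blueprint as the paper's: compress to the span of the features via the orthogonal projection $P$, show the data-fitting term is unchanged, show the regularizer does not increase, reduce to a Weierstrass argument on the finite-dimensional PSD cone, and pass to coordinates in the span to get the $\mathbf{B}\succeq 0$ representation. Where you diverge is in the key inequality $\Omega(PAP)\leq\Omega(A)$. The paper (Appendix~B.3) proves this for the general class of spectral regularizers in \cref{ass:omega} by a singular-value interlacing argument ($\sigma_k(\Pi B^*B\Pi)\leq\sigma_k(B^*B)$ for $A=B^*B$, then monotonicity of $q$); this is what lets them handle arbitrary increasing lower semi-continuous $q$, not just the elastic net. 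You instead handle the two terms of \cref{eq:base-regularizer} separately: a Pythagorean block decomposition for $\|\cdot\|_F^2$ (valid for any Hermitian $A$), and $\|PAP\|_\star=\tr(PAP)=\tr(A^{1/2}PA^{1/2})\leq\tr(A)=\|A\|_\star$ via $P\preceq I$, which collapses because $A\succeq0$. Your version is more elementary and self-contained for the stated theorem; the paper's version is needed for the generality of \cref{thm:1bis}. You correctly flag that the nuclear-norm monotonicity genuinely relies on positivity (it can fail for indefinite operators if one takes the nuclear norm rather than the trace); the paper encodes the same point through the hypothesis $A\succeq 0$ in \cref{lm:main_properties_ass}(ii). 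One small omission: you do not explicitly dispose of the degenerate case where the objective is identically $+\infty$ (in which case $A=0$ trivially works with $\mathbf{B}=0$); the paper mentions this, but it is a one-line fix.
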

The proof of the theorem above is in \cref{app:proof-thm:representer-non-negative}, where it is derived for the more general class of spectral regularizers (this thus extends a result from~\cite{abernethy2009new}, from linear operators between potentially different spaces to positive self-adjoint operators). A direct consequence of \cref{thm:representer-non-negative} is the following finite-dimensional representation of the optimization problem in \cref{eq:problem-for-representer}. Denote by ${\bf K} \in \R^{n\times n}$ the matrix ${\bf K}_{i,j} = k(x_i, x_j)$ and assume w.l.o.g. that it is full rank (always true when the $n$ observations are distinct and $k$ is a {\em universal kernel} such as the Gaussian kernel \cite{micchelli2006universal}). Let ${\bf V}$ be the Cholesky decomposition of ${\bf K}$, i.e., ${\bf K} = {\bf V}^\top {\bf V}$. Define the finite dimensional model 
\eqal{
\label{eq:emp-feature-map-solution}
\tilde{f}_{\mathbf{A}}(x) = \Phi(x)^\top \mathbf{A} \Phi(x), \qquad {\mathbf A} \in \R^{n\times n},~ \mathbf{A} \succeq 0,
}
where $\Phi: \X \to \R^n$, defined as $\Phi(x) = \mathbf{V}^{-\top} v(x)$, with $ v(x) = \left(k(x,x_i)\right)_{i=1}^n \in \R^n$, is the classical \emph{empirical feature map}. In particular, $\tilde{f}_{\mathbf{A}} = f_A$ where $A$ is of the form \cref{eq:char-optimal-solution-representer} with $ \mathbf{B} = {\bf V}^{-1} \mathbf{A}{\bf V}^{-\top}$. We will say that $\tilde{f}_{\mathbf{A}}$ is a solution of \cref{eq:problem-for-representer} if the corresponding $A$ is a solution of \cref{eq:problem-for-representer}.

\bp[Equivalent finite-dimensional formulation in the primal]\label{prop:finite-dimensional-primal-characterization}
Under the assumptions of \cref{thm:representer-non-negative}, the following problem has at least one solution, which is unique if $\lambda_2 > 0$ and $L$ is convex :
\eqal{
\label{eq:finite-dimensional-primal-characterization}
 \min_{\mathbf{A} \succeq 0} L(\tilde{f}_\mathbf{A}(x_1), \dots, \tilde{f}_\mathbf{A}(x_n)) + \Omega(\mathbf{A}).
}
Moreover, for any given solution  $\mathbf{A}^* \in \R^{n\times n}$ of \cref{eq:finite-dimensional-primal-characterization}, the function $\tilde{f}_{\mathbf{A}^*}$ is a minimizer of \cref{eq:problem-for-representer}. 
Finally, note that problems \cref{eq:problem-for-representer} and \cref{eq:finite-dimensional-primal-characterization} have the same condition number if it is exists.
\ep
The proposition above (proof in \cref{app:proof-prop:finite-dimensional-primal-characterization})  characterizes the possibly infinite-dimensional optimization problem of \cref{eq:problem-for-representer} in terms of an optimization on $n\times n$ matrices. A crucial property is that the formulation in \cref{eq:finite-dimensional-primal-characterization} {\em preserves convexity}, i.e., it is convex as soon as $L$ is convex. To conclude,  \cref{app:proof-prop:finite-dimensional-primal-characterization} provides a construction for ${\bf V}$ valid for possibly rank-deficient ${\mathbf K}$.
We now provide a finer characterization in terms of a dual formulation on only $n$ variables.

\paragraph{Convex dual formulation.}
We have seen above that the problem in \cref{eq:problem-for-representer} admits a finite-dimensional representation and can be cast in terms of an equivalent problem on $n\times n$ matrices. Here, when $L$ is convex, we refine the analysis and provide a dual optimization problem on only $n$ variables. The dual formulation is particularly suitable when $L$ is a sum of functions as we will see later. 
%
%
In the following theorem $[{\bf A}]_{-}$ corresponds to the negative part\footnote{Given the eigendecomposition ${\bf A} = {\bf U} {\bf \Lambda}{\bf U}^{\top}$ with ${\bf U} \in \R^{n\times n}$ unitary and ${\bf \Lambda} \in \R^{n\times n}$ diagonal, then $[{\bf A}]_{-} = {\bf U} {\bf \Lambda}_{-}{\bf U}^{\top}$, with ${\bf \Lambda}_{-}$ diagonal, defined as $({\bf \Lambda}_{-})_{i,i} = \min(0,{\bf \Lambda}_{i,i})$ for $i=1,\dots,n$.} of ${\bf A} \in {\cal S}(\R^n)$.

\bt[Convex dual problem]\label{thm:dual}
Assume $L$ is convex, lower semi-continuous and bounded below. Assume $\Omega$ is of the form \cref{eq:base-regularizer} with $\lambda_2 > 0$. Assume that the problem has at least a strictly feasible point, i.e.,  there exists $A_0 \succeq 0$ such that $L$ is continuous in $ (f_{A_0}(x_1),...,f_{A_0}(x_n))\in \R^n$ (this condition is satisfied in simple cases; see examples in \cref{app:proof-thm:dual}).
Denoting with $L^*$ the Fenchel conjugate of $L$ (see \cite{boyd2004convex}), problem \cref{eq:finite-dimensional-primal-characterization} has the following dual formulation:
\begin{align}
\label{dualfinited}
\sup_{\alpha \in \R^n} -L^*(\alpha) - \tfrac{1}{2\la_2}\|[{\bf V}\diag(\alpha){\bf V}^\top+\la_1 {\bf I}]_{-}\|^2_{F},
\end{align}
and this supremum is atteined. Moreover, if $\alpha^* \in \R^n$ is a solution of \eqref{dualfinited}, a solution of  \eqref{eq:problem-for-representer} is obtained via \eqref{eq:char-optimal-solution-representer}, with ${\mathbf B} \in \R^{n\times n}$ defined as 
\begin{align}
    \label{expression_from_dual}
    {\mathbf B} =  \la_2^{-1} {\bf V}^{-1}[{\bf V}\diag(\alpha^*){\bf V}^\top+\la_1 {\bf I}]_{-} {\bf V}^{-\top}.
\end{align} 
\et
The result above (proof in \cref{app:proof-thm:dual}) is particularly interesting when $L$ can be written in terms of a sum of functions, i.e., $L(z_1,\dots, z_n) = \sum_{i=1}^n \ell_i(z_i)$ for some functions $\ell_i :\R \to \R$. Then the Fenchel dual is $L^*(\alpha) = \sum_{i=1}^n \ell^*_i(\alpha_i)$, where $\ell^*_i$ is the Fenchel dual of $\ell_i$, and the optimization can be carried by using accelerated proximal splitting methods as FISTA \cite{beck2009fast}, since $\|[{\bf V}\diag(\alpha){\bf V}^\top+\la_1 {\bf I}]_{-}\|^2_{F}$ is differentiable in $\alpha$. This corresponds to a complexity of $O(n^3)$ per iteration for FISTA, due to the computation of \cref{expression_from_dual}, and can be made comparable with fast algorithms for linear models based on kernels \cite{rudi2017falkon}, by using techniques from randomized linear algebra and Nystr\"om approximation \cite{halko2011finding} (see more details in \cref{app:proof-thm:dual}).

\section{Approximation Properties of the Model}

%
%

The goal of this section is to study the approximation properties of our model and to  understand its ``richness'', i.e., which functions it can represent. In particular, we will prove that, under mild assumptions on $\phi$, (a) the proposed model satisfies the property {\bf P2}, i.e., it is a {\em universal approximator} for non-negative functions, and (b) that it is strictly richer than the family of exponential models with the same $\phi$.
%
%
First, define the set of functions belonging to our model
\[{\cal F}^\circ_{\phi} = \{ f_A ~~|~~ A \in {\cal S}(\hh), ~ A \succeq 0, \|A\|_\circ < \infty\},\]
where $\|\cdot\|_\circ$ is a suitable norm for ${\cal S}(\hh)$. In particular, norms that we have seen to be relevant in the context of optimization are the nuclear norm $\|\cdot\|_\star$ and the Hilbert-Schmidt (Frobenius) norm $\|\cdot\|_{F}$. 
Given norms $\|\cdot\|_a,\|\cdot\|_b$, we denote the fact that $\|\cdot\|_a $ is stronger (or equivalent) than $\|\cdot\|_b$ with $\|\cdot\|_a \trianglerighteq \|\cdot\|_b$ (for example, $\|\cdot\|_\star \trianglerighteq \|\cdot\|_{F}$).
%
In the next theorem we prove that when the feature map is universal~\cite{micchelli2006universal}, such as the one associated to the Gaussian kernel $k(x,x') = \exp(-\|x-x'\|^2)$ or the Abel kernel $k(x,x') = \exp(-\|x-x'\|)$, then the proposed model is a universal approximator for non-negative functions over $\X$ (in particular, in the sense of {\em cc-universality}
\cite{micchelli2006universal,sriperumbudur2011universality}, see \cref{app:proof-thm:universality} for more details and the proof).  
%
%
\bt[Universality, {\bf P2}]\label{thm:universality}
Let $\hh$ be a separable Hilbert space, $\phi: \X \to \hh$ a universal map~\cite{micchelli2006universal}, and $\|\cdot\|_{\star} \trianglerighteq \|\cdot\|_\circ$. Then ${\cal F}^\circ_{\phi}$ is a universal approximator of non-negative functions over $\X$.
\et

The fact that the proposed model can approximate arbitrarily well any non-negative function on $\X$, when $\phi$ is universal, makes it a suitable candidate in the context of nonparametric approximation/interpolation or learning \cite{wendland2004scattered,tsybakov2008introduction} of non-negative functions.
In the following theorem, we give a more precise characterization of the functions contained in ${\cal F}^\circ_{\phi}$. Denote by ${\cal G}_\phi$ the set of linear models induced by $\phi$, i.e., ${\cal G}_\phi = \{w^\top\phi(\cdot) ~|~ w \in \hh\}$ and by ${\cal E}_\phi$ the set of {\em exponential models} induced by $\phi$,
\[{\cal E}_\phi ~~=~~ \{ ~e^f ~~|~~ f(\cdot) = w^\top\phi(\cdot),~~ w \in \hh~\}.\]

\bt[${\cal F}^\circ_{\phi}$ strictly richer than the exponential model]
\label{thm:model-richer-than-exponential}
Let $\|\cdot\|_{\star} \trianglerighteq \|\cdot\|_\circ$. Let $\X = [-R,R]^d$, with $R > 0$. Let $\phi$ such that $W^{m}_2(\X) = {\cal G}_\phi$, for some $m > 0$, where $W^m_2(\X)$ is the {\em Sobolev space} of smoothness $m$ \cite{adams2003sobolev}. Let $x_0 \in \X$. The following hold: 
\begin{enumerate}
\item[(a)]  ${\cal E}_\phi \subsetneq {\cal F}^\circ_{\phi};$
\item[(b)] the function $f_{x_0}(x) = e^{-\|x- x_0\|^{-2}} \in C^\infty(\X)$ satisfies $f_{x_0} \in {\cal F}^\circ_{\phi}$ and $f_{x_0} \notin {\cal E}_\phi$.
 \end{enumerate}
\et
\cref{thm:model-richer-than-exponential} shows that if $\phi$ is rich enough, then the space of exponential models is strictly contained in the space of functions associated to the proposed model. In particular, the proposed model can represent functions that are exactly zero on some subset of $\X$ as showed by the example $f_{x_0}$ in \cref{thm:model-richer-than-exponential}, while the exponential model can represent only strictly positive functions, by construction. 
Discussion on the condition $W^{m}_2(\X) = {\cal G}_\phi$, proof of \cref{thm:model-richer-than-exponential} and its generalization to $\X \subseteq \R^d$ are in App.~\ref{app:proof-thm:model-richer-than-exponential}. Here we note only that the condition  $W^{m}_2(\X) = {\cal G}_\phi$ is quite mild and satisfied by many kernels such as the Abel kernel $k(x,x') = \exp(-\|x-x'\|)$ \cite{wendland2004scattered,berlinet2011reproducing}. 
We conclude with a bound on the {\em Rademacher complexity} \cite{boucheron2005theory} of ${\cal F}^\circ_\phi$, which is a classical component for deriving generalization bounds \cite{shalev2014understanding}. Define ${\cal F}^\circ_{\phi,L} = \{f_A ~|~ A\succeq 0, \|A\|_\circ \leq L\}$, for $L > 0$. \cref{thm:rademacher} shows that the Rademacher complexity of ${\cal F}^\circ_{\phi,L}$ depends on $L$ and not on the dimensionality of $\X$, as for regular kernel methods \cite{boucheron2005theory}.

\bt[Rademacher complexity of ${\cal F}^\circ_\phi$]\label{thm:rademacher}\label{thm:rademacher-correct}
 Let $\|\cdot\|_\circ \trianglerighteq \|\cdot\|_{F}$ and $\sup_{x \in \X} \|\phi(x)\| \leq c < \infty$. Let $(x_i)_{i=1}^n$ be i.i.d.~samples, $L \geq 0$. 
 The Rademacher complexity of ${\cal F}^\circ_{\phi,L}$ on $(x_i)_{i=1}^n$ is upper bounded by $\frac{2Lc^2}{\sqrt{n}}$ (proof in \cref{app:proof-thm:rademacher}).
\et

\section{Extensions: Integral Constraints and Output in Convex Cones}
In this section we cover two extensions. The first one generalizes the optimization problem in \cref{eq:problem-for-representer} to include linear constraints on the integral of the model, in order to deal with problems like density estimation in \cref{ex:density-estimation}. The second formalizes models with outputs in convex cones, which is crucial when dealing with problems like multivariate quantile estimation \cite{bondell2010noncrossing}, detailed in \cref{sec:experiments}. 

\paragraph{\bf Constraints on the integral and other linear constraints.} We can extend the definition of the problem in \cref{eq:problem-for-representer} to take into account constraints on the integral of the model. Indeed by linearity of integration and trace, we have the following (proof in \cref{app:proof-prop:model-satisfies-P4}). 
\bp[Integrability in closed form, {\bf P4}]\label{prop:model-satisfies-P4}
Let $A \in {\cal S}(\hh)$ with $A$ bounded and $\phi$ uniformly bounded. Let $p:\X \to \R$  be an integrable function. There exists a trace class operator $W_p \in {\cal S}(\hh)$ such that
$\int_\X f_A(x) p(x) dx ~=~ \tr( A  W_p )$ and $ W_p ~=~ \int_\X \phi(x) \phi(x)^\top \  p(x) dx.$
\ep
The result can be extended to derivatives and more general linear functionals on $f_A$ (see \cref{app:proof-prop:model-satisfies-P4}). In particular, note that if we consider the {\em empirical feature map} $\Phi$ in \cref{eq:emp-feature-map-solution}, which characterizes the optimal solution of \cref{eq:problem-for-representer}, by \cref{thm:representer-non-negative}, we have that $W_p$ is defined explicitly as $W_p = {\bf V}^{-\top} {\bf M}_p {\bf V}^{-1}$ with $({\bf M}_p)_{i,j} = \int k(x,x_i) k(x,x_j) p(x) dx$, for $i,j=1,\dots,n$ and it is computable in closed form. Then, assuming an equality and an inequality constraint on the integral w.r.t. two functions $p$ and $q$ and two values $c_1, c_2 \in \R$, 
the resulting problem takes the following finite-dimensional form
\eqal{
\label{eq:prototypical-problem-integral-constraints}
\min_{{\bf A} \in {\cal S}(\R^n)} & ~~~L(\tilde{f}_\mathbf{A}(x_1), \dots, \tilde{f}_\mathbf{A}(x_n)) ~+~ \Omega(\mathbf{A}), \\
\text{s.t.} & ~~~ {\bf A} \succeq 0, ~ \tr({\bf A} W_p) = c_1, ~ \tr({\bf A} W_q) \leq c_2. \nonumber 
}
\paragraph{\bf Representing function with outputs in convex polyhedral cones.} 
We represent a vector-valued function with our model as the juxtaposition of $p$ scalar valued models, with $p \in \N$, as follows
\[f_{A_1 \cdots A_p}(x) = (f_{A_1}(x),\dots,f_{A_p}(x)) \in \R^p , \qquad \forall ~ x \in \X.\]
We recall that a convex polyhedral cone $\Y$ is defined by a set of inequalities as follows
\eqal{
\label{eq:convex-polyhedral-Y}
\Y = \{ y \in \R^p ~|~ {c^1} {}^\top y \geq 0, \dots, {c^h} {}^\top y \geq 0 \},
}
for some $c^1,\dots,c^h \in \R^p$ and $h \in \N$. Let us now focus on a single constraint $c^\top y \geq 0$. Note that, by definition of positive operator (i.e., $A \succeq 0$ implies $v^\top A v \geq 0$ for any $A$), we have that
$\sum_{s=1}^p c_s A_s \succeq 0$ implies $\phi(x)^\top(\sum_{s=1}^p c_s A_s) \phi(x) \geq 0$ for any $x \in \X$, which, by linearity of the inner product and the definition of $f_{A_1 \cdots A_p}$ is equivalent to $c^\top f_{A_1 \cdots A_p}(x) \geq 0$. From this reasoning we derive the following proposition (see complete proof in \cref{app:proof-prop:convex-cone}).
\bp\label{prop:convex-cone}
Let $\Y$ be defined as in \cref{eq:convex-polyhedral-Y}. Let $A_1,\dots, A_p \in {\cal S}(\hh)$. Then the following holds
\eqals{
{ \sum_{s=1}^p} c^t_s A_s \succeq 0 \quad \forall t=1,\dots,h \qquad \Rightarrow \qquad f_{A_1 \cdots A_p}(x) \in \Y \quad  \forall x \in \X.
}
\ep
Note that the set of constraints on the l.h.s.~of the equation above defines in turn a convex set on $A_1,\dots,A_p$. This means that we can use it to constrain a convex optimization problem over the space of the proposed vector-valued models as follows
\eqal{
\label{eq:prototypical-problem-convex-cones}
\underset{A_1,\dots, A_p \in {\cal S}(\hh)}{\min} & ~~~ L(f_{A_1 \cdots A_p}(x_1),\dots, f_{A_1 \cdots A_p}(x_n)) ~+~ { \sum_{s=1}^p}\Omega(A_s)  
\\
\text{s.t.} & ~~~{ \sum_{s=1}^p} c^t_s A_s \succeq 0, 
\quad \forall ~ t = 1,\dots,h. \nonumber
}
By \cref{prop:convex-cone}, the function $f_{A^*_1 \cdots A^*_p}$, where $(A^*_1,\dots, A^*_p)$ is the minimizer above, will be a function with output in $\Y$.
Moreover, the formulation above admits a finite-dimensional representation analogous to the one for non-negative functions, as stated below (see proof in \cref{app:proof-thm:representer-convex-cone})
\bt[Representer theorem for model with output in convex polyhedral cones]
\label{thm:representer-convex-cone}
Under the assumptions of \cref{thm:representer-non-negative}, the problem in \cref{eq:prototypical-problem-convex-cones} admits a minimizer $(A^*_1,\cdots,A^*_p)$ of the form
\[A^*_s = \sum_{i,j=1}^n [\mathbf{B}_{s}]_{i,j} \phi(x_i)\phi(x_j)^\top \implies (f_{*}(x))_s = \sum_{i,j=1}^n [\mathbf{B}_{s}]_{i,j} k(x_i,x)k(x_j,x),\quad s = 1,...,p,\]
where $f_* := f_{(A_1^*,...,A_p^*)}$ is the corresponding function and the ${\bf B}_s \in {\cal S}(\R^n)$ are symmetric $n \times n$ matrices which satisfy the conic constraints $\sum_{s=1}^p{c^t_s {\bf B}_s} \succeq 0,~t = 1,...,h$.
\et
\br[Efficient representations when the ambient space of $\Y$ is high-dimensional]
When $p \gg h$, or when $\Y$ is a polyhedral cone with $\Y \subset {\cal G}$ and ${\cal G}$ an infinite-dimensional space, it is still possible to have an efficient representation of functions with output in $\Y$ by using the representation of $\Y$ in terms of {\em conical hull} \cite{boyd2004convex}, i.e., $\Y = \{\sum_{i=1}^t \alpha_i y_i ~|~ \alpha_i \geq 0\}$ for some $y_1,\dots, y_t$ and $t \in \N$.
In particular, given $A_1,\dots,A_t \succeq 0$, the model 
$f_{A_1\dots A_t}(x) = { \sum_{i=1}^t} f_{A_i}(x) y_i$
satisfies $f_{A_1\dots A_t}(x) \in \Y$ for any $x \in \X$. Moreover it is possible to derive a representer theorem as \cref{thm:representer-convex-cone}.
\er
\begin{remark} By extending this approach, we believe it is possible to model (a) functions with output in the cone of positive semidefinite matrices, (b) convex functions. We leave this for future work.
\end{remark}

\section{Numerical Simulations}\label{sec:experiments}


In this section, we provide illustrative experiments on the problems of density estimation, regression with Gaussian heteroscedastic errors, and multiple quantile regression. We derive the algorithm according to the finite-dimensional formulation in \cref{eq:prototypical-problem-integral-constraints} for non-negative functions with constraints on the integral, and to \cref{eq:prototypical-problem-convex-cones} with the finite-dimensional representation suggested by \cref{thm:representer-convex-cone}. Optimization is performed applying FISTA \cite{beck2009fast} on the dual of the resulting formulations. More details on implementation and the specific formulations are given below and in \cref{app:details-experiments}. 
The algorithms are compared with careful implementations of \cref{eq:prototypical-problem} with the models presented in \cref{sec:background-other-models}, i.e., partially non-negative models (PNM), non-negative coefficients models (NCM) and generalized linear models (GLM). For all methods we used $\Omega(A) = \la \left(\|A\|_*+\frac{0.01}{2}\|A\|_F^2\right)$ or $\Omega(w) = \la \|w\|^2$. We used the Gaussian kernel $k(x,x') = \exp(-\|x-x'\|^2/(2\sigma^2))$ with width $\sigma$. Full cross-validation has been applied to each model independently, to find the best $\la$ (see \cref{app:details-experiments}). 

\begin{figure}[t!]
    \centering
    {\tiny ${}~~~\quad{}$ PNM ${}~\qquad\qquad\qquad\qquad\qquad\qquad\qquad{}$ NCM ${}\qquad\qquad\qquad\qquad\qquad\qquad\quad~~~{}$ GLM ${}\qquad\qquad\qquad\qquad\qquad\qquad\quad~~~{}$ Our Model}\\
    \hspace{-0.4cm}\vspace{-0.1cm}
    \includegraphics[width=0.248\textwidth]{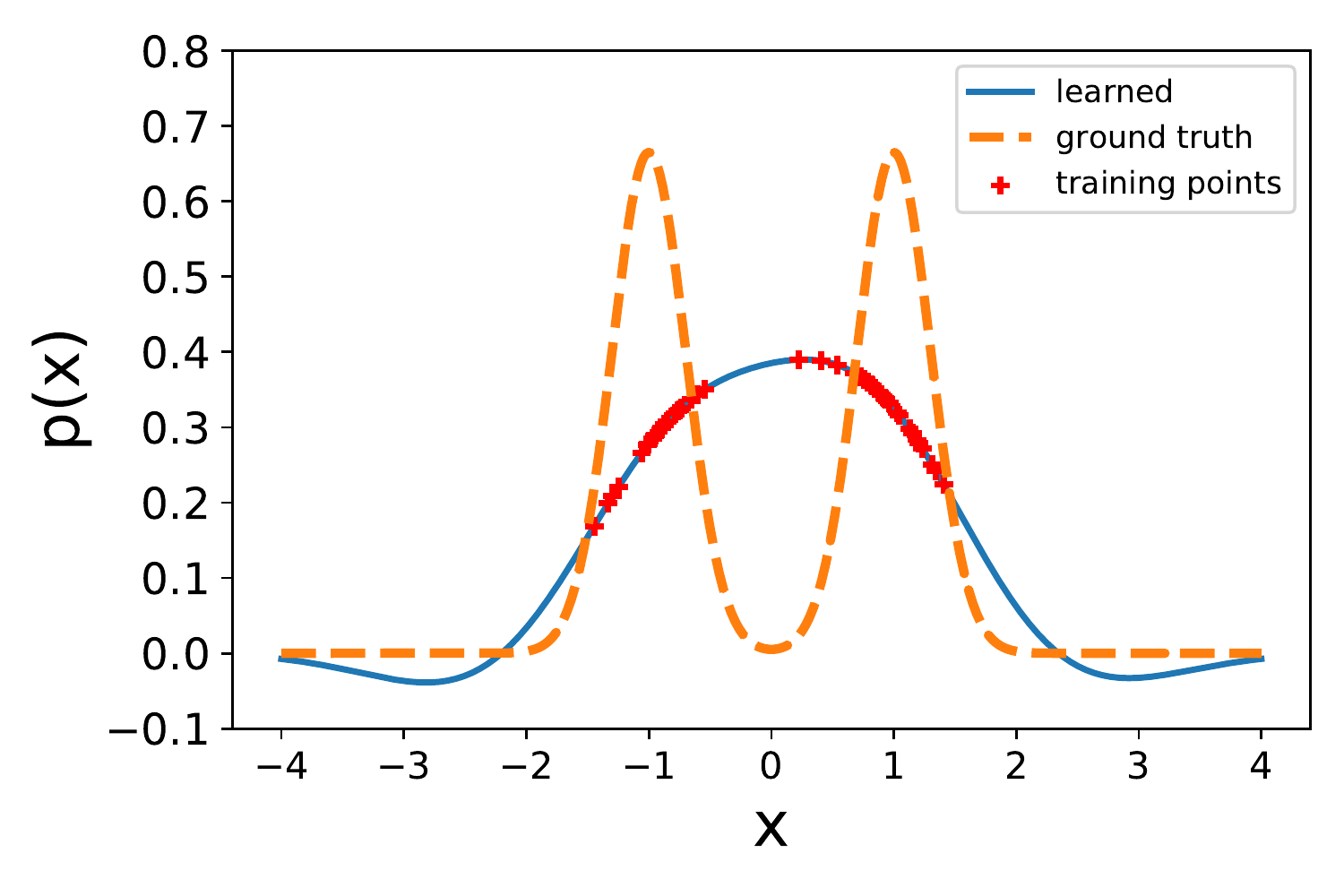}~
    \includegraphics[width=0.248\textwidth]{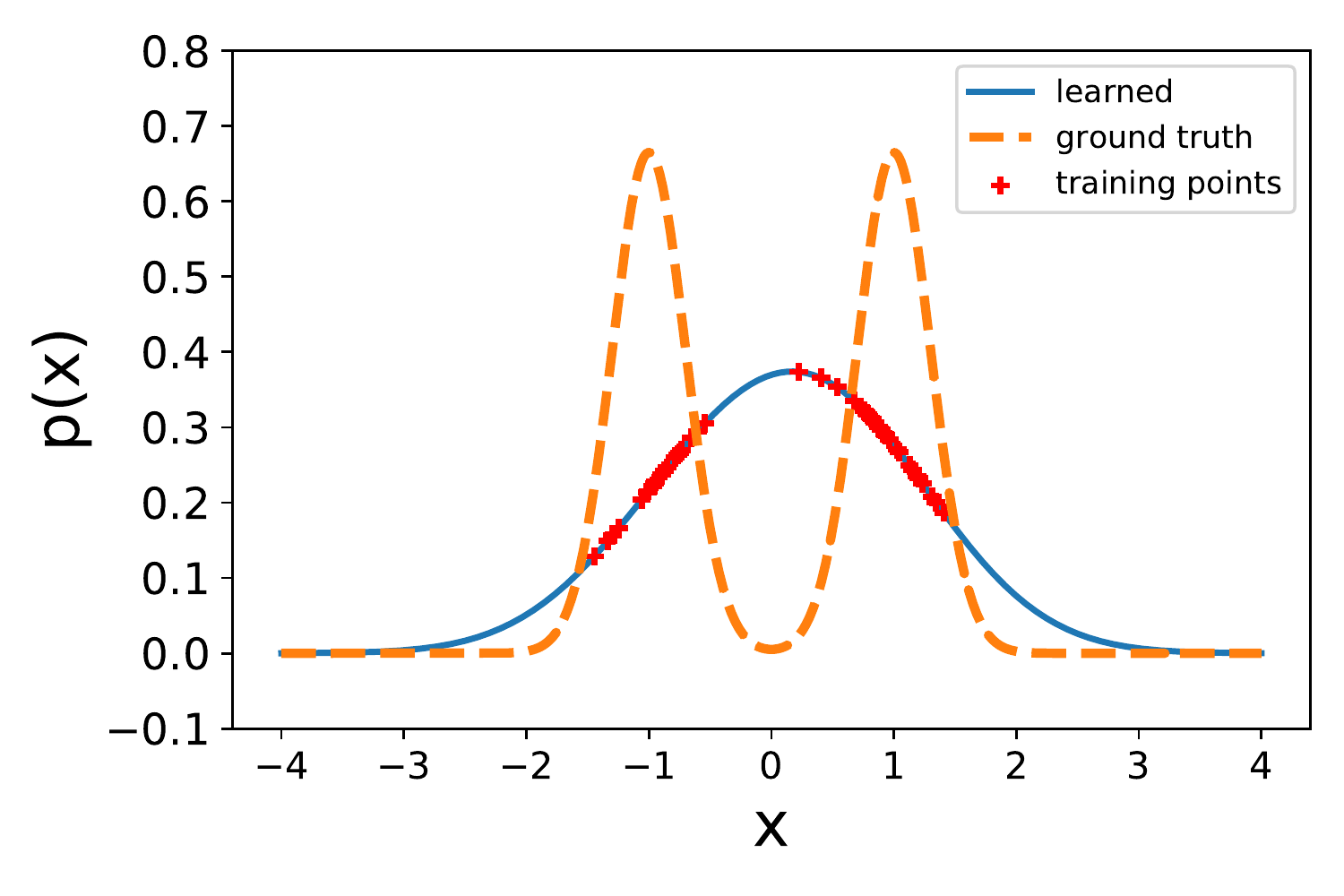}~
    \includegraphics[width=0.248\textwidth]{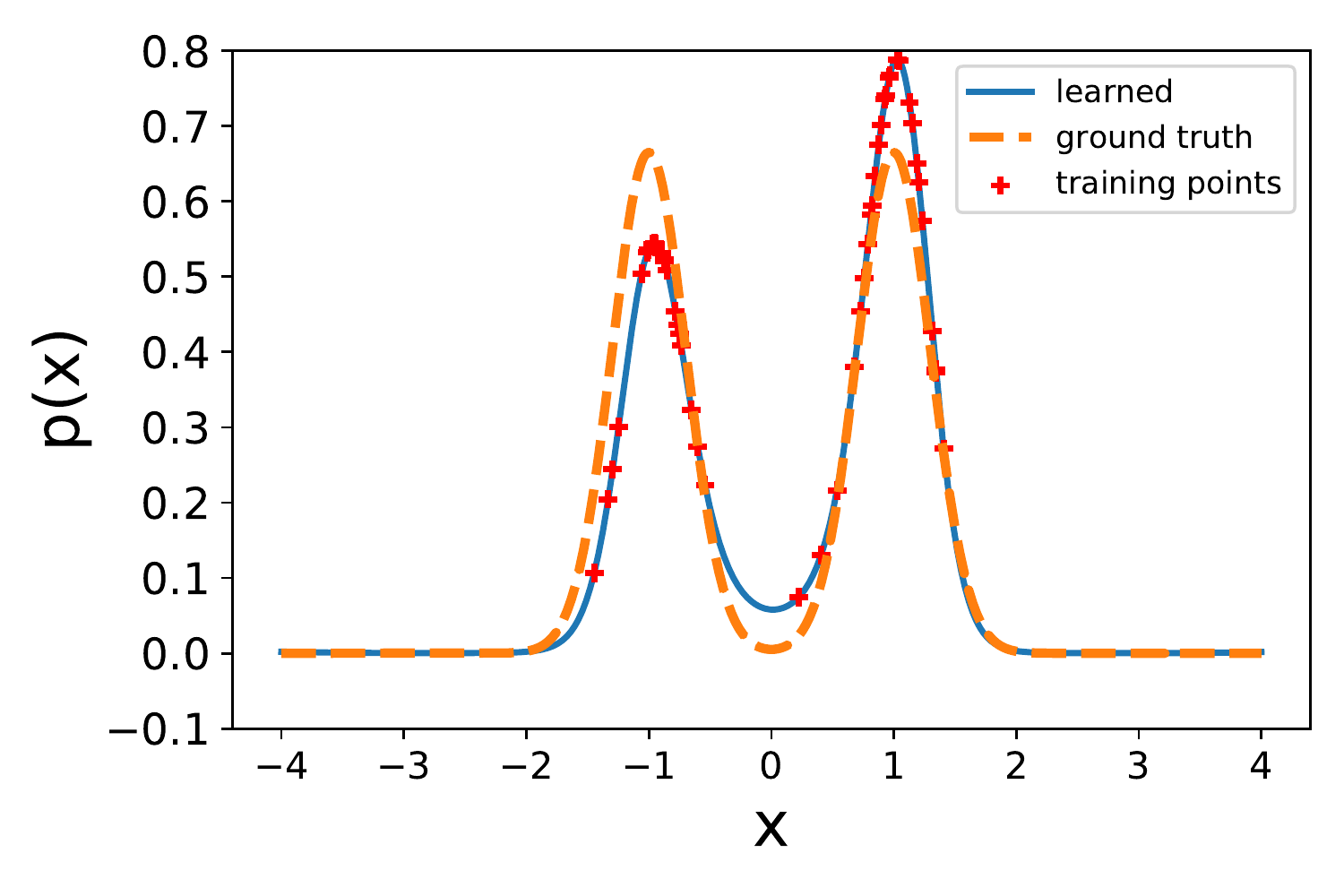}~
    \includegraphics[width=0.248\textwidth]{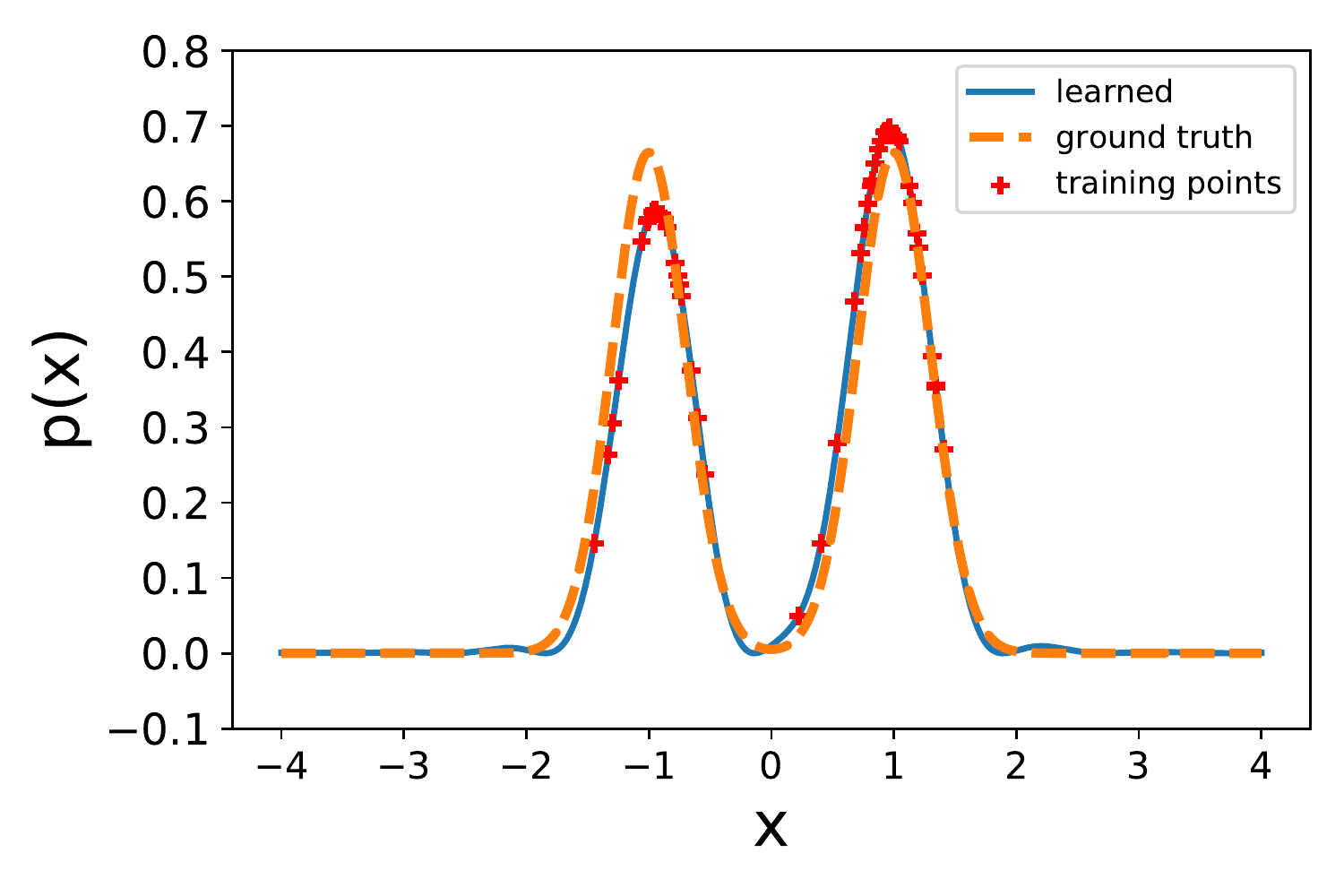}\\
    \hspace{-0.4cm}\vspace{-0.1cm}
    \includegraphics[width=0.248\textwidth]{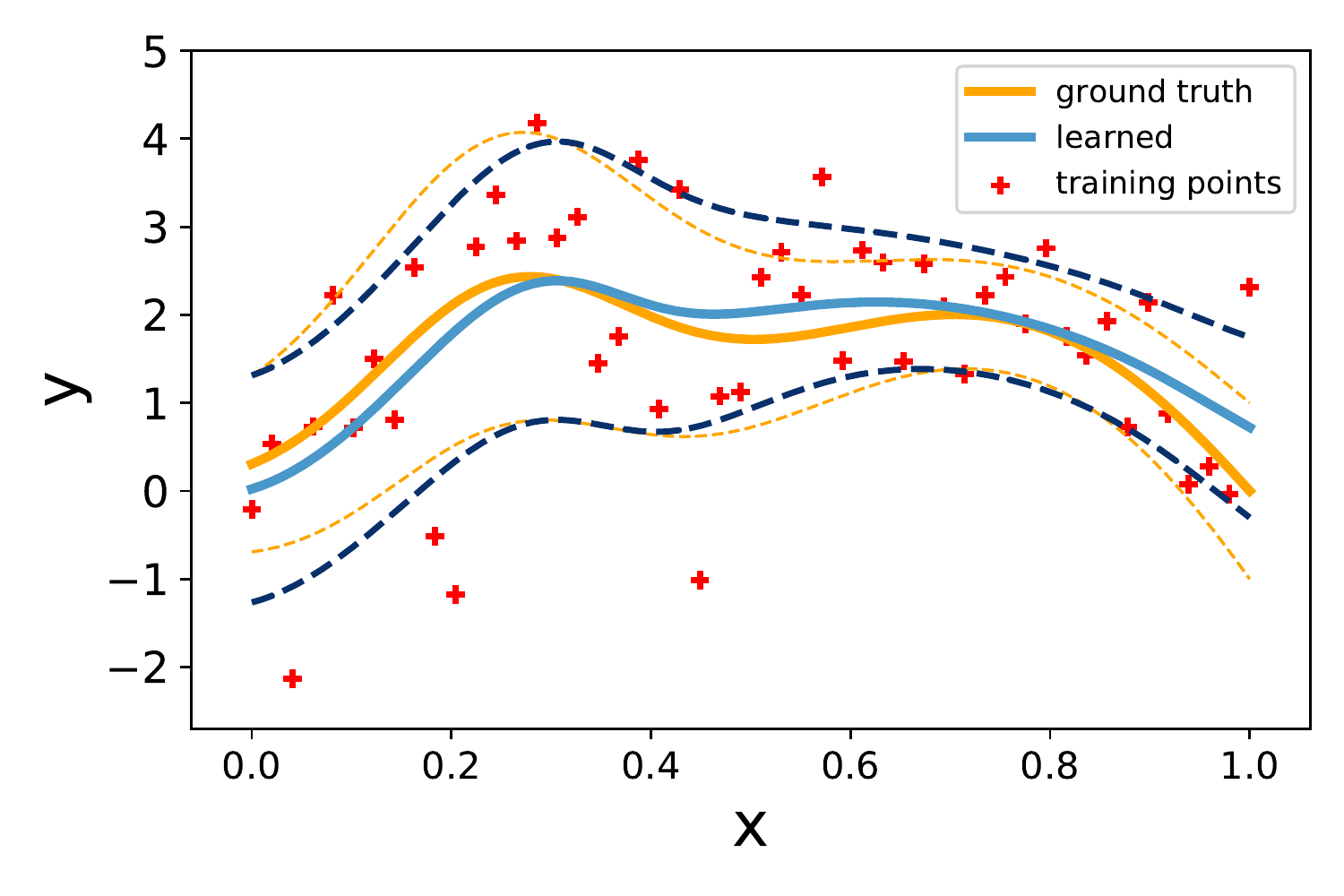}~
    \includegraphics[width=0.248\textwidth]{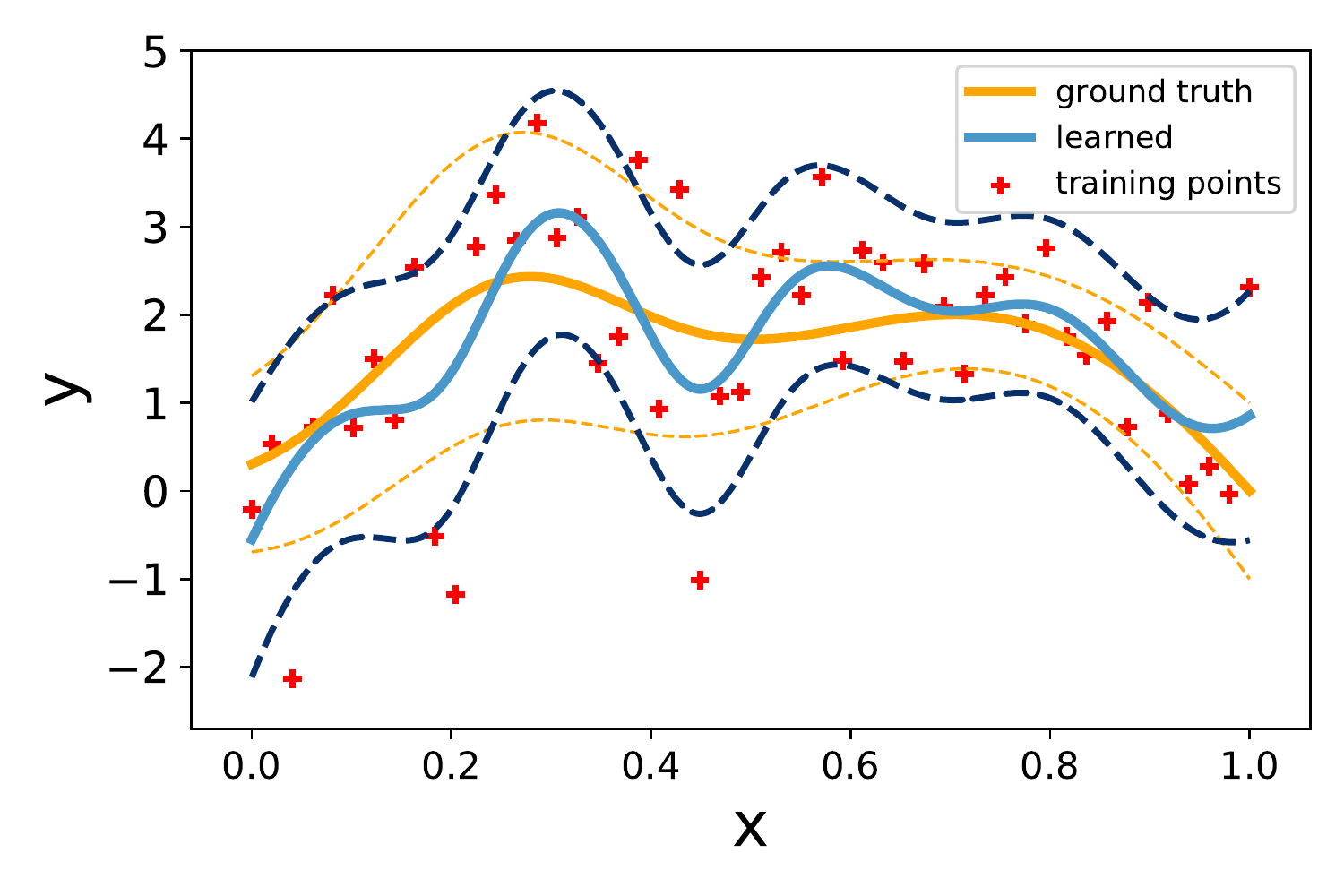}~
    \includegraphics[width=0.248\textwidth]{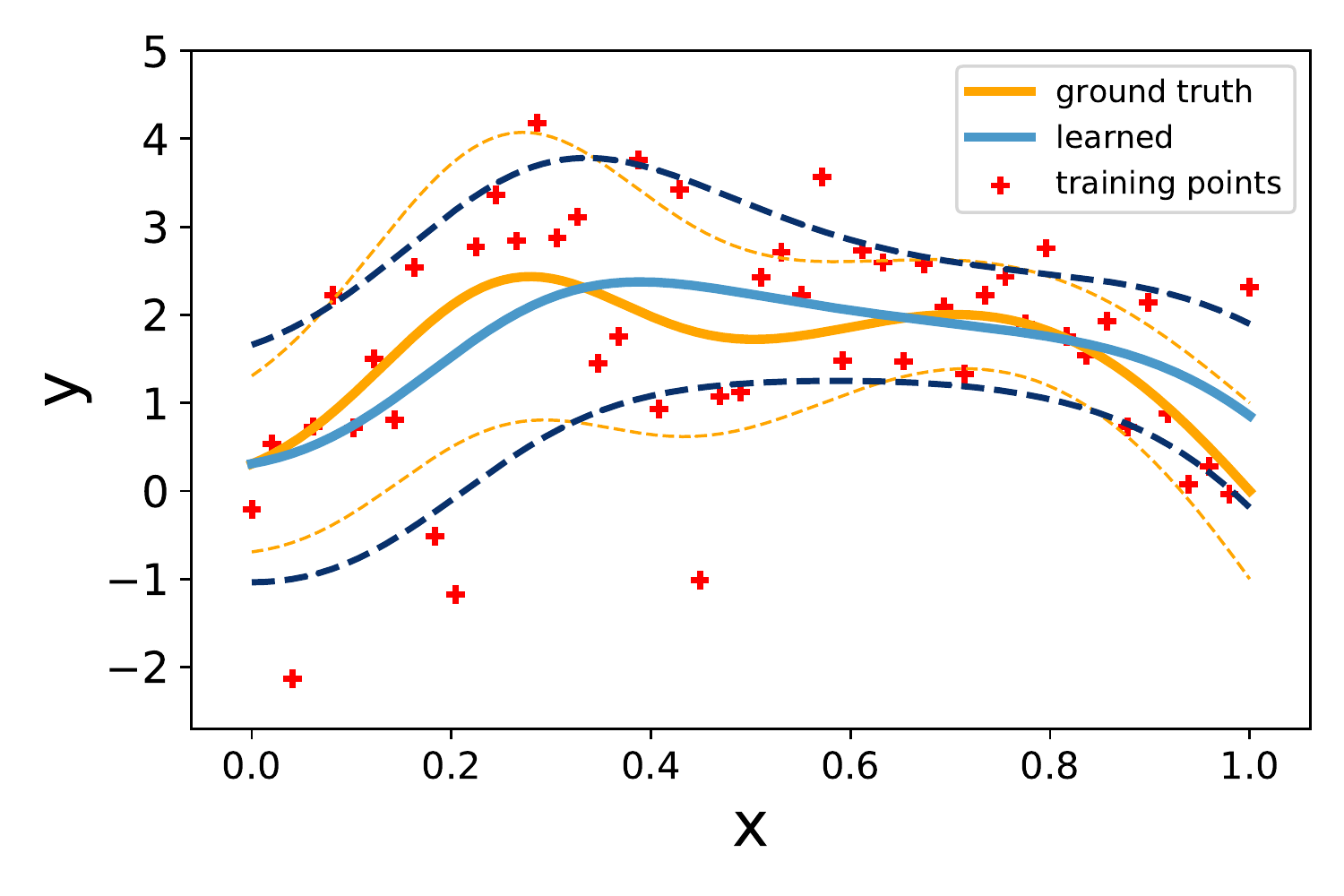}~
    \includegraphics[width=0.248\textwidth]{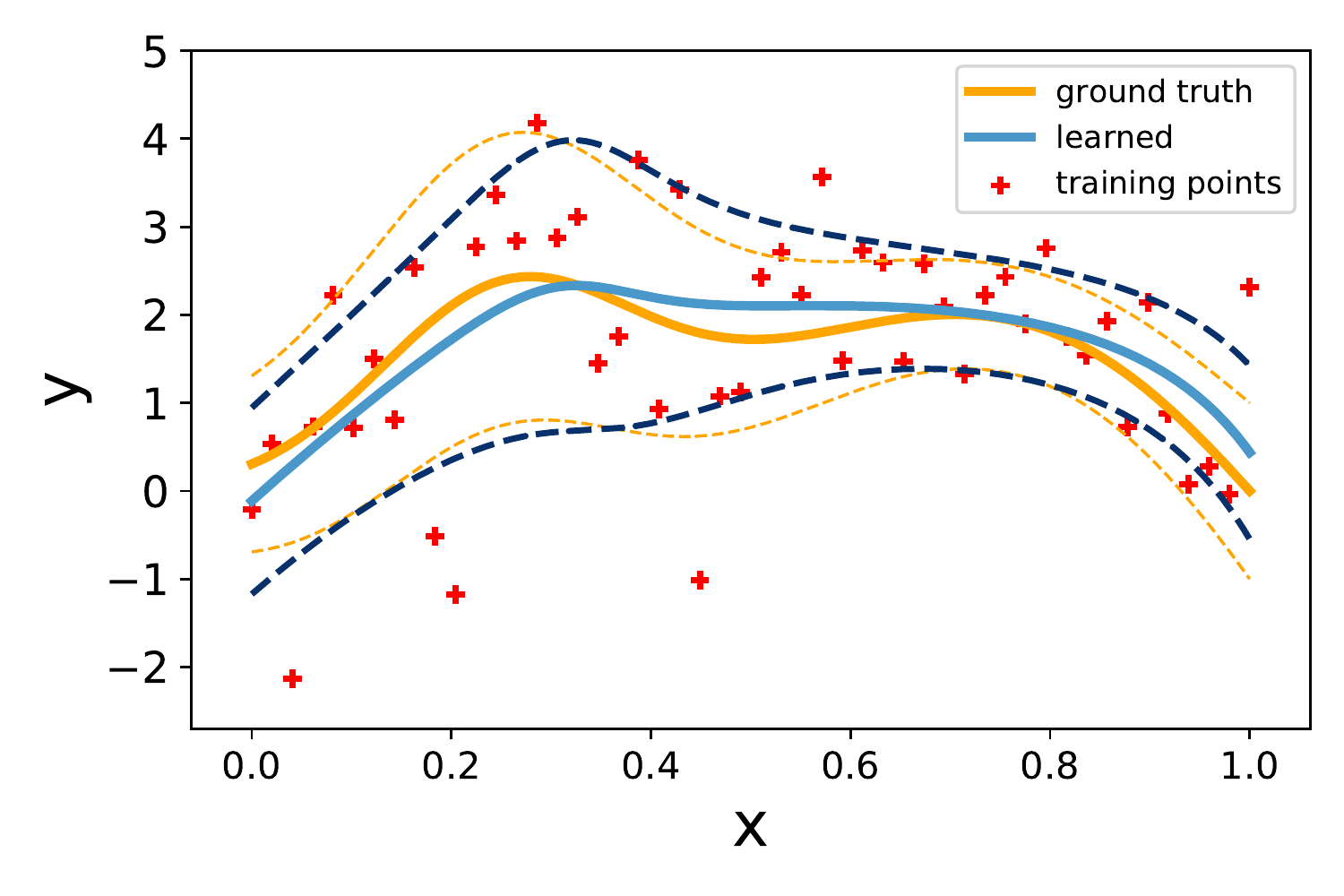}\\
    \hspace{-0.4cm}
    \includegraphics[width=0.248\textwidth]{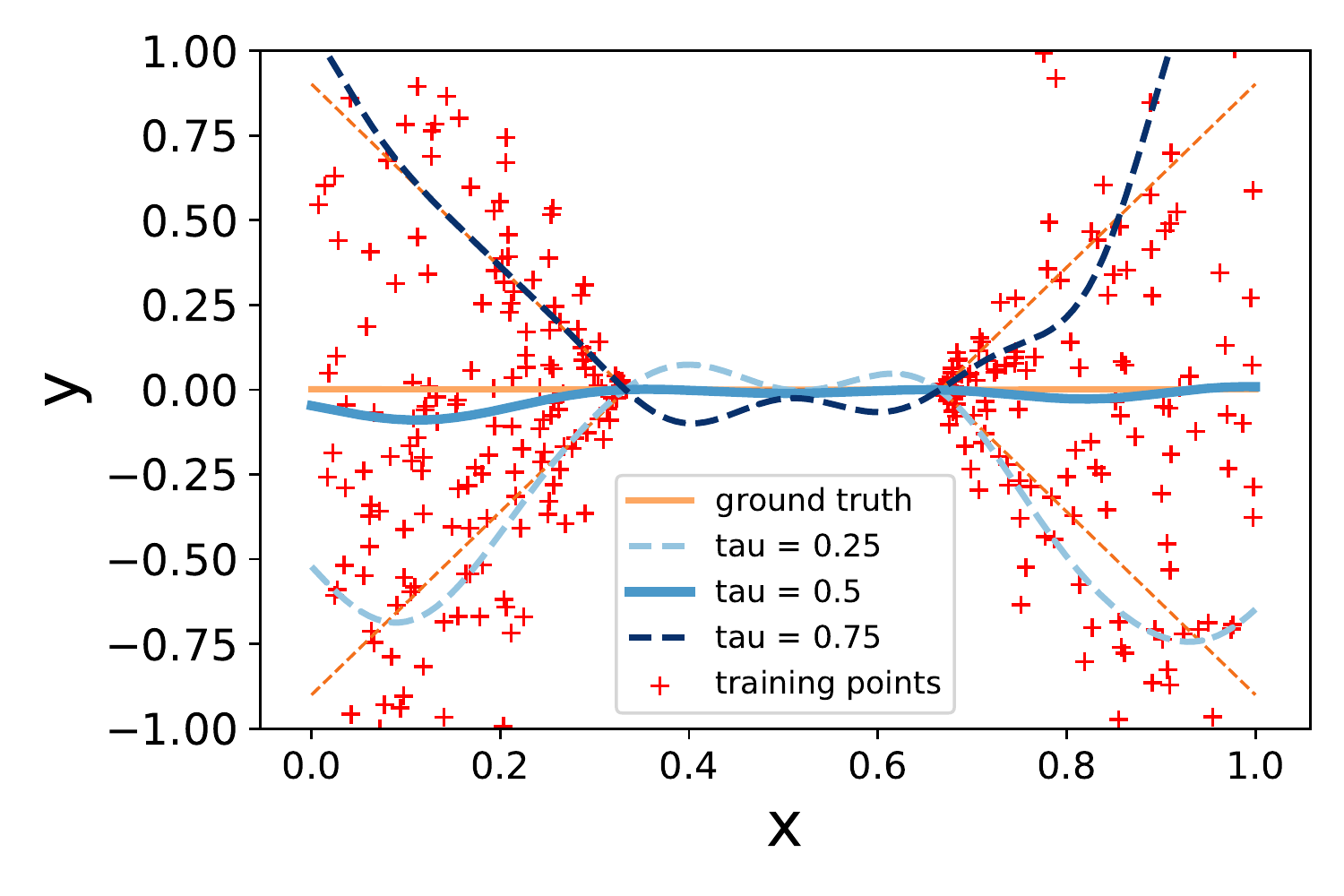}~
    \includegraphics[width=0.248\textwidth]{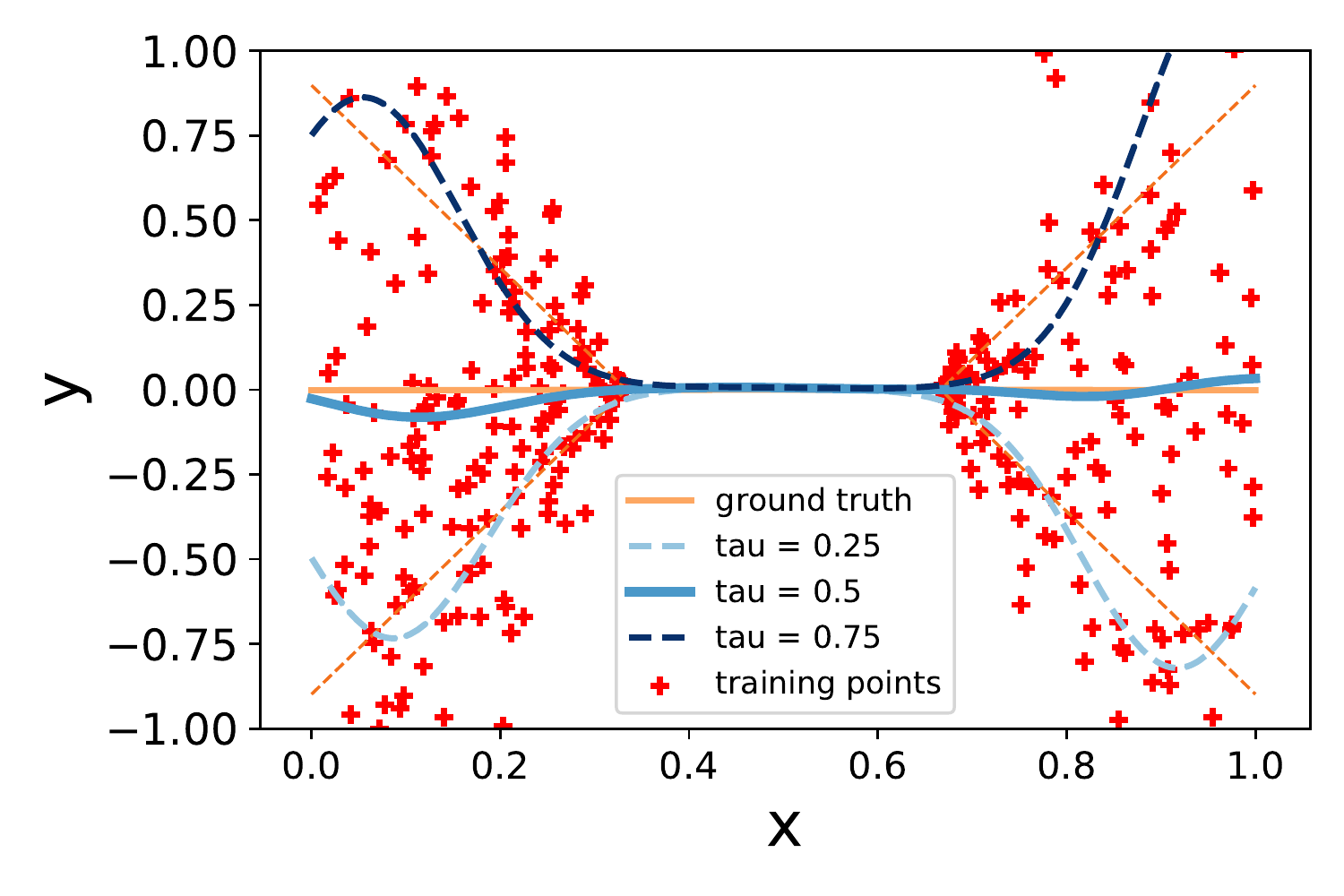}~
    \includegraphics[width=0.248\textwidth]{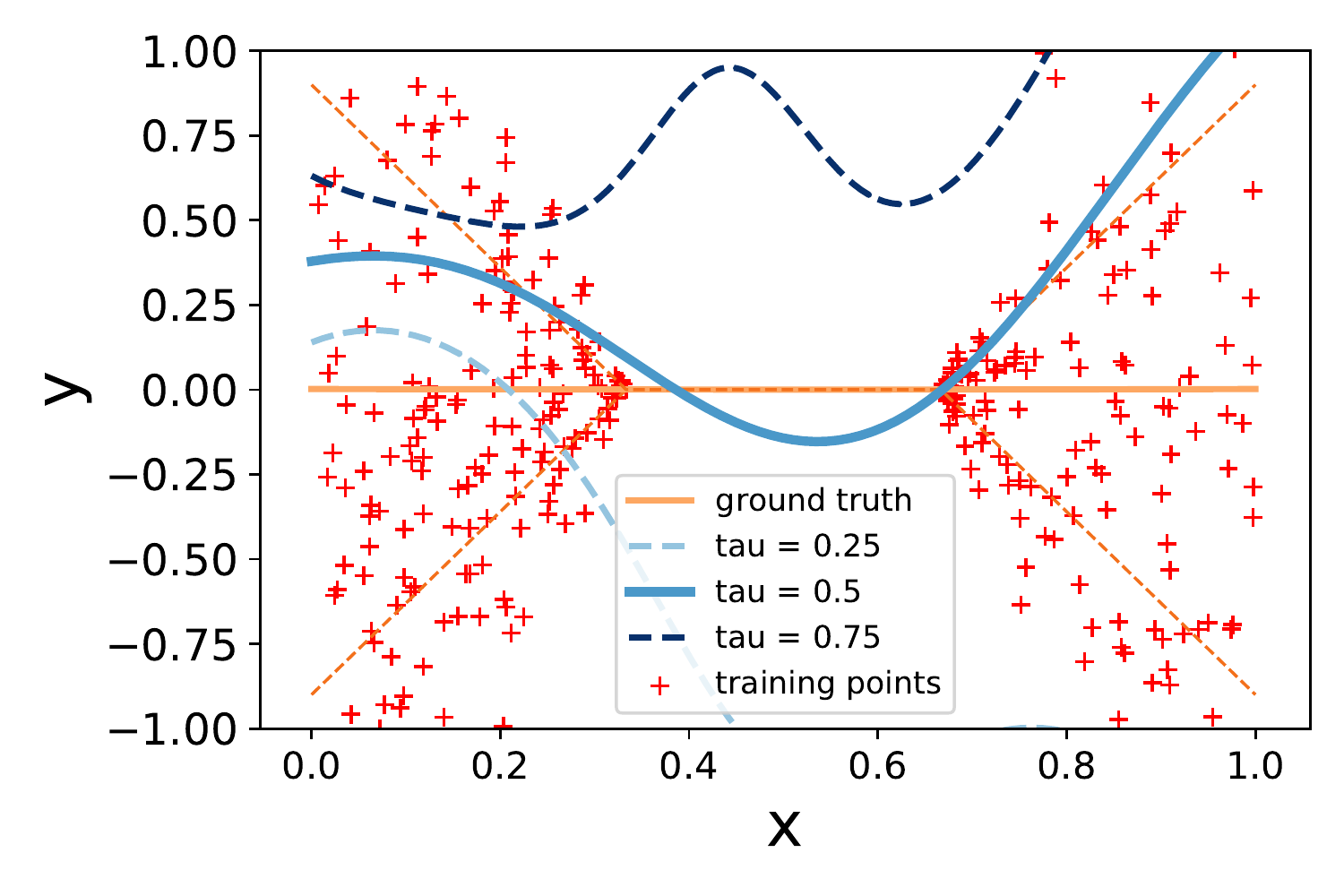}~
    \includegraphics[width=0.248\textwidth]{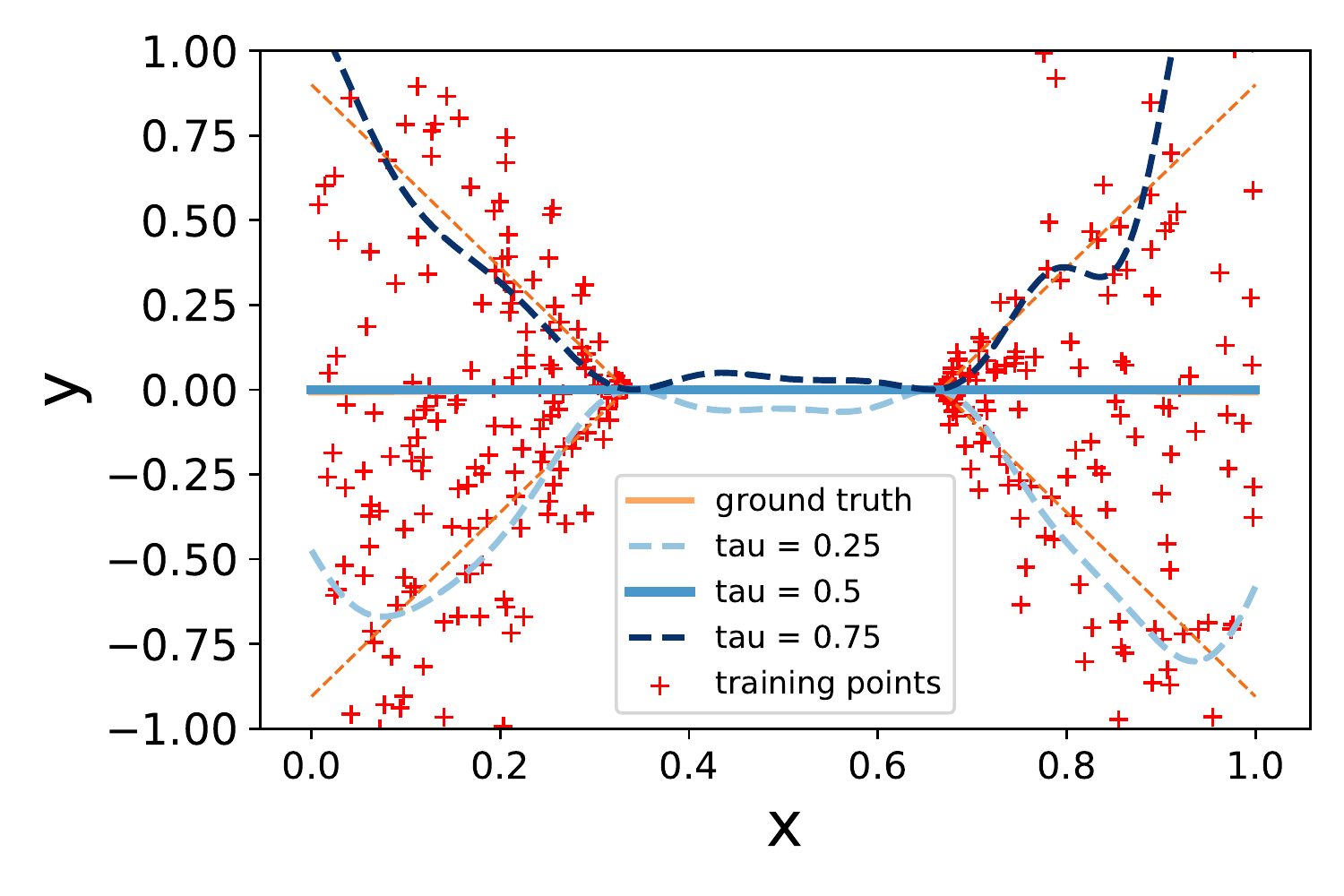}\\
    \vspace*{-.3cm}
    \caption{%
    Details in \cref{sec:experiments}. \emph{(top)} density estimation, \emph{(center)} regression with Gaussian heteroscedastic errors. \emph{(bottom)} multiple quantile regression.  Shades of blue: estimated curves. Orange: ground truth. Models: \emph{(left)} PNM, \emph{(center-left)} NCM, \emph{(center-right)}, GLM, \emph{(right)} Our model.}
    \label{fig:experiment}
\end{figure}

\paragraph{Density estimation.}
This problem is illustrated in \cref{ex:density-estimation}. Here we considered the {\em log-likelihood loss} as a measure of error, i.e., $L(z_1,\dots, z_n) = -\frac{1}{n} \sum_{i=1}^n \log(z_i)$, which is jointly convex and with an efficient proximal operator \cite{chaux2007variational}.    We recall that the problems are constrained to output a function whose integral is $1$. In \cref{fig:experiment}, we show the experiment on $n=50$ i.i.d.~points sampled from $\rho(x) = \tfrac{1}{2}{\cal N}(-1,0.3) + \tfrac{1}{2}{\cal N}(1,0.3)$ and where for all the models we used $\sigma = 1$, to illustrate pictorially the main interesting behaviors. Instead in \cref{app:details-experiments}, we perform a multivariate experiment in $d=10$ and $n=1000$, where we cross-validated $\sigma$ for each algorithm and show the same effects more quantitatively.  Note that PNM \emph{(left)} is non-negative on the training points, but it achieves negative values on the regions not covered by examples. This effect is worsened by the constraint on the integral that borrows areas from negative regions to reduce the log-likelihood on the dataset. NCM \emph{(center-left)} produces a function whose integral is one and that is non-negative everywhere, but the poor approximation properties of the model do not allow to fit the density of interest (see \cref{ex:NCM}). GLM \emph{(center-right)} produces a function that is non-negative and approximates quite well $\rho$, however, the obtained function does not sum to one, but to $0.987$, since the integral constraint can be enforced only approximately via Monte Carlo sampling (GLM does not satisfy {\bf P4}). Estimating the integral is easy in low dimensions but becomes soon impractical in higher dimensions \cite{robert2013monte}. Finally the proposed model \emph{(right)} leads to a convex problem and produces a non-negative function whose integral is $1$ and that fits the density~$\rho$ quite well.


\paragraph{\bf Heteroscedastic Gaussian process estimation.} The goal is to estimate $\mu: \R \to \R$ and $v: \R \to \R_+$ determining the conditional density $\rho$ of the form $\rho(y|x) = (2\pi v(x))^{-1/2} \exp(-(y - \mu(x))^2/(2v(x)))$ from which the data are sampled. 
The considered functional corresponds to the negative log-likelihood, i.e., $L =  \sum_{i=1}^n \frac{1}{2}\log v(x_i) + (y_i - \mu(x_i))^2/(2v(x_i))$ that becomes convex in $\eta,\theta$ via the so called {\em natural parametrization} $\eta(x) = \mu(x)/v(x)$ and $\theta(x) = 1/v(x)$ \cite{le2005heteroscedastic}. We used a linear model to parametrize $\eta$ and the non-negative models for $\theta$. The experiment 
on the same model of \cite{le2005heteroscedastic,YuanDoublyPL} is reported in \cref{fig:experiment}. Modeling $\theta$ via PNM \emph{(left)} leads to a convex problem and reasonable performance. In particular, the fact that $\theta =0$ corresponds to $v = +\infty$ prevents the model for $\theta$ from crossing zero. NCM \emph{(center-left)} leads to a convex problem, but very sensitive to the kernel width $\sigma$ and with poor approximation properties. GLM \emph{(center-right)} leads to a non-convex problem and we need to restart the method randomly to have a reasonable convergence. Our model {\em(right)} leads to a convex problem and produces a non-negative function for $\theta$, that fits well the observed data.

\paragraph{\bf Multiple quantile regression.} The goal here is to estimate multiple quantiles of a given conditional distribution $P(Y|x)$. Given $\tau \in (0,1)$, $q_{\tau}$ defined by $P(Y>q_{\tau}(x)|x) = \tau$ is the $\tau$-quantile of $\rho$. By construction $0<\tau_{-h}\leq \dots \leq \tau_h<1$ implies $q_{\tau_{-h}}(x) \leq \dots \leq q_{\tau_h}(x)$.  If we denote by ${\bf q}:\X \to \R^{2h+1}$ the list of quantiles, we have by construction ${\bf q}(x) \in \Y$ where $\Y$ is a convex cone $\Y = \{y \in \R^h~|~ y_{-h} \leq \dots \leq y_h \}$. To regress quantiles, we used the pinball loss $L_\tau$ (convex, non-smooth) considered in \cite{koenker_2005,steinwart2011estimating}, obtaining $L = \sum_{j=-h}^h\sum_{i=1}^n L_{\tau_j}(f(x_i),y_i)$. In \cref{fig:experiment}, we used $\tau_{-1} = \tfrac{1}{4}, \tau_0 = \tfrac{1}{2}, \tau_1 = \tfrac{3}{4}$. 
Using PNM, \emph{(left)} the ordering is enforced by explicit constraints on the observed dataset \cite{nonparamquantilereg,bondell2010noncrossing}. The resulting problem is convex. However, in regions with low density of points, PNM quantiles do not respect their natural order. To enforce the order constraint, a fine grid covering the space would be needed as in \cite{nonparamquantilereg}. For NCM, GLM and our model, we represented the quantiles as $q_{\tau_{\pm j}} = q_{\tau_0} \pm \sum_{i=1} v_{\pm i}$ where the $v$'s are non-negative functions and $q_{\tau_0}$, with $\tau_0 = \tfrac{1}{2}$, is the median and is modeled by a linear model. NCM \emph{(center-left)} leads to a convex problem and quantiles that respect the ordering, but the estimation is very sensitive to the chosen $\sigma$ and has poor approximation properties. GLM \emph{(center-right)} leads to a non-convex non-differentiable problem, with many local minima, which is difficult to optimize with standard techniques (see \cref{app:details-experiments}). GLM does not succeed in approximating the quantiles. Our model \emph{(right)} leads to a convex optimization problem that approximates the quantiles relatively well and preserves their natural order everywhere. 
%


\subsection*{Acknowledgments}
 This
work was funded in part by the French government under
management of Agence Nationale de la Recherche as part of
the ``Investissements d’avenir'' program, reference ANR-19-
P3IA-0001 (PRAIRIE 3IA Institute). We also acknowledge
support of the European Research Council (grant SEQUOIA
724063).

\putbib{biblio.bib}
\end{bibunit}

\newpage

\appendix

\begin{bibunit}[alpha]

\section{Notation and basic definitions}
\label{app:notation}

\begin{itemize}
    \item ${\cal H}$ is a separable Hilbert space.
    \item ${\cal X}$ is a Polish space (we will require explicitly compactness in some theorems).
    \item $\phi : \cx \rightarrow {\cal H}$ is a continous map. We also assume it to be uniformly bounded i.e. 
    $$\sup_{x \in \X} \|\phi(x)\| \leq c$$
     for some $c \in (0,\infty)$, if not differently stated.
    \item $k(x,x^{\prime}) := \phi(x)^\top \phi(x^{\prime})$ is the {\em kernel} function associated to the feature map $\phi$ \cite{scholkopf2002learning,berlinet2011reproducing}.

\end{itemize}

\section{Proofs and additional discussions}

\subsection{Proof of \cref{prop:model-non-negative-and-linear}} \label{app:proof-prop:model-non-negative-and-linear}

In this section, let us extend the definition in \cref{eq:model-non-negative} to any operator $A \in {\cal S}(\hh)$, without the implied positivity restriction (in \cref{eq:model-non-negative}, we ask that $A \succeq 0$) : 

\begin{equation} \label{eq:4bis}
    \tag{4bis}
    \forall A \in {\cal S}(\hh),~ \forall x \in \X,~ f_A(x) := \phi(x)^{\top} A \phi(x).
\end{equation}

\begin{proof}[Proof of \cref{prop:model-non-negative-and-linear}]

 To prove linearity, let $A, B \in {\cal S}(\hh)$ and $\alpha, \beta \in \R$. Since ${\cal S}(\hh)$ is a vector space, 
 $$\alpha A + \beta B \in {\cal S}(\hh).$$
  Let $x \in \cx$. By definition for the first equality and linearity for the second, 
    \[f_{\alpha A + \beta B }(x) = \phi(x)^{\top}(\alpha A + \beta B) \phi(x) = \alpha\phi(x)^{\top} A \phi(x) + \beta \phi(x)^{\top} B \phi(x).\]
    Finally, since by definition, $f_A(x) = \phi(x)^{\top} A \phi(x)$ and $f_B(x) = \phi(x)^{\top} B \phi(x)$, it holds :
    $$f_{\alpha A + \beta B }(x) = \alpha\phi(x)^{\top} A \phi(x) + \beta \phi(x)^{\top} B \phi(x) = \alpha f_A(x) + \beta f_B(x).$$
    Since this holds for all $x \in \cx$, this shows $f_{\alpha A + \beta B} = \alpha f_A + \beta f_B$. \\
    
    To prove the non-negativity, assume now that $A \succeq 0$. By definition of of positive semi-definiteness, 
    \[\forall h \in \hh,~ h^{\top} A h \geq 0.\]
    In particular, for any $x \in \cx$, the previous inequality applied to $h = \phi(x)$ yields 
    \[f_A(x) = \phi(x)^{\top}A \phi(x) \geq 0.\]
    Hence, $f_A \geq 0$. 
    
\end{proof}

\subsection{Proof of \cref{prop:model-satisfies-P1}
}\label{app:proof-prop:model-satisfies-P1}

Recall the definition of $f_A$ for any $A \in {\cal S}(\hh)$ in \cref{eq:4bis}. We have the lemma:

\blm[Linearity of evaluations]\label{lm:lineval}
Let $x_1,\dots, x_n \in \X$. Then the map 
\[A \in {\cal S}(\hh) \mapsto (f_A(x_i))_{1 \leq i \leq n} \in \R^n\]
is linear from ${\cal S}(\hh)$ to $\R^n$. 
\elm

\begin{proof}
This just follows from the fact that the definition of $f_A(x_i)$, $  f_A(x_i):= \phi(x_i)^{\top} A \phi(x_i),$
is linear in $A$.
\end{proof}

\begin{proof}[Proof of \cref{prop:model-satisfies-P1}]
Let $L:\R^n \to \R$ be a jointly convex function and $x_1,\dots, x_n \in \X$. 
The function $A \in {\cal S}(\hh) \mapsto L(f_A(x_1),\dots, f_A(x_n))$ can be written $L \circ R$, where
\[ R : A \in {\cal S}(\hh) \mapsto (f_A(x_i))_{1 \leq i \leq n} \in \R^n.\]
 Since $L$ is convex, and $R$ is linear by \cref{lm:lineval}, their composition is convex.\\
 
 Moreover, since ${\cal S}(\hh)_{+}$ is a convex subset of ${\cal S}(\hh)$, the restriction of $A \in {\cal S}(\hh) \mapsto L(f_A(x_1),\dots, f_A(x_n))$ on ${\cal S}(\hh)_{+}$ is also convex.
\end{proof}

\subsection{Proof of \cref{thm:representer-non-negative}}
\label{app:proof-thm:representer-non-negative}

In this section, we prove \cref{thm:representer-non-negative} for a more general class of spectral regularizers. 

\subsubsection{Compact operators and spectral functions}

In this section, we briefly introduce compact self-adjoint operators and the spectral theory of compact self-adjoint operators. For more details, see for instance \cite{2004linop}. We start by defining a compact self-adjoint operator (see Section2.16 of \cite{2004linop}) and stating its main properties:
\bd[compact operators]
Let $\hh$ be a separable Hilbert space. A bounded self-adjoint operator $A \in {\cal S}(\hh)$ is said to be compact if its range is included in a compact set. We denote with ${\cal S}_{\infty}(\hh)$ the set of compact self adjoint operators on $\hh$. It is a closed subspace of ${\cal S}(\hh)$ for the operator norm and the closure of the set of finite rank operators.
\ed


\bp[Spectral theorem \cite{2004linop}]
Let $\hh$ be a separable Hilbert space and let $A$ be a compact self adjoint operator on $\hh$. Then there exists a spectral decomposition of $A$, i.e., an orthonormal system $(u_k) \in \hh$ of eigenvectors of $A$ and corresponding eigenvalues $(\sigma_k)$ such that for all $h \in \hh$, it holds
\[Ah = \sum_{k}{\sigma_k u_k^\top h~u_k} =: \left(\sum_{k}{\sigma_k u_k u_k^{\top} }\right)h.
\]
Moreover, if $\sigma_k$ is an infinite sequence, it converges to zero. \\\
Furthermore, we say that the orthonormal system $(u_k)$ of eigenvectors of $A$ and the corresponding eigenvalues $(\sigma_k)$ is a \emph{basic system} of eigenvectors of $A$ if all the $\sigma_k$ are non zero. In this case, if $P_0$ denotes the orthogonal projection on $\noy(A)$, then it holds
\[\forall h \in \hh,~ h = \Pi_0~h + \sum_{k}{u_k u_k^{\top} ~h}\]
\ep

In what follows, to simplify notations, we will usually write $A = U \diag(\sigma) U^{\top}$ in order to denote a basic system of eigenvectors of $A$. Moreover, if $A$ is positive semi-definite, we will assume that the eigenvalues are sorted in decreasing order, i.e., $\sigma_{k+1} \leq \sigma_k$.

\bd[Spectral function on ${\cal S}_{\infty}(\hh)$ \cite{2004linop}]
Let $q : \R \rightarrow \R$ be a lower semi-continuous function such that $q(0) = 0$. Let $\hh$ be any separable Hilbert space.
For any $A \in {\cal S}_{\infty}(\hh)$ and any basic system $A = U \diag(\sigma) U^{\top}$, we define the spectral function $q$
\[q(A) = U \diag(q(\sigma))) U^T  = \sum_{k}{q(\sigma_k)u_k u_k^{\top}} .\]
\ed

\subsubsection{Classes of regularizers}

Let us now state our main assumption on regularizers. 

\ba[Assumption on regularizers]\label{ass:omega} For any $A \in {\cal S}(\hh)$, $\Omega(A)$ is of of the form
\[\Omega(A) = \begin{cases}
\tr(q(A)) = \sum_{k}{q(\sigma_k)} &\text{if } A = U \diag(\sigma) U^{\top} \in {\cal S}_{\infty}(\hh),~\sum_{k}{q(\sigma_k)} < \infty\\
+ \infty & \text{otherwise,}
\end{cases}\]
where $q : \R \rightarrow \R_+$ is:
\begin{itemize}
    \item non-decreasing on $\R_+$ with $q(0) = 0$;
    \item lower semi-continuous;
    \item $q(\sigma) \underset{|\sigma| \rightarrow + \infty}{\longrightarrow} +\infty$.
\end{itemize}
\ea

 Note that in this case, $\Omega$ is defined on ${\cal S}(\hh)$ for any Hilbert space $\hh$.

\br 
$\Omega(A) = \lambda_1 \|A\|_{\star} + \frac{\lambda_2}{2} \|A\|_{F}^2$ satisfies \cref{ass:omega}, with $q(\sigma) = \lambda_1 ~|\sigma| + \lambda_2~ \sigma^2$.
\er

\blm[Properties of $\Omega$] \label{lm:main_properties_ass}
Let $\Omega$ satisfying \cref{ass:omega}. Then the following properties hold.

\begin{enumerate}[label = (\roman*)]
    \item For any separable Hilbert spaces $\hh_1,\hh_2$ and any linear isometry $O : \hh_1 \rightarrow \hh_2$, i.e., such that $O^* O = I_{\hh_1}$, it holds 
    \[ \forall A   \in {\cal S}(\hh_1),~ \Omega(O A O^*) = \Omega(A).\]
    \item For any separable Hilbert space $\hh$ and any orthogonal projection $\Pi \in {\cal S}(\hh_1)$, i.e., satisfying $\Pi = \Pi^*,~ \Pi^2 = \Pi$, it holds
    \[ \forall A \succeq 0,~ \Omega(\Pi A \Pi) \leq \Omega(A).\]
    \item For any finite dimensional Hilbert space $\hh_n$, 
    \[\Omega \text{ is lower semi-continuous (l.s.c)},\qquad \Omega(A) \underset{\|A\|_{op} \rightarrow +\infty}{\longrightarrow} + \infty\]
    where we denoted by $\|\cdot\|_{op}$ the operator norm.
\end{enumerate}
\elm

\bpr 

\begin{enumerate}[label = (\roman*)]
    \item Write $A = \sum_{k}{\sigma_k u_k u_k^{\top}}$ where the $(u_k)$ form a basic system of eigen-vectors for $A$. The $(v_k) = (O u_k)$ form a basic system of eigen-vectors for $OAO^*$, as 
    \[OAO^* = \sum_{k}{\sigma_k v_k v_k^{\top}},\qquad \sigma_k \neq 0. \]
    Hence, by definition, $q(OAO^*) = \sum_{k}{q(\sigma_k) v_k v_k^{\top}}$. By definition of the trace, we have
    \[\Omega(OAO^*) =  \sum_{k}{q(\sigma_k)} = \Omega(A).\]
    \item Let $A$ be a compact self-adjoint semi-definite operator. Let $A = U \diag(\sigma) U^{\top}$ be a basic system of eigenvectors of $A$, where the $\sigma_k$ are positive and in decreasing order. Define $B = U\diag(\sqrt{\sigma}) U^{\top}$ and note that in this case, $A = B^2 = B^*B$. Using Exercise 23 of \cite{2004linop}, we have that for any orthogonal projection operator $\Pi$ and any index $k$, $\sigma_k(\Pi B^* B \Pi) \leq \sigma_k(B^*B)$ and hence $\sigma_k(\Pi A \Pi) \leq \sigma_k(A)$. Since $q$ is non decreasing, it holds $q(\sigma_k(\Pi A \Pi)) \leq q(\sigma_k(A))$ and hence 
    \[\Omega(\Pi A \Pi) = \sum_{k}{q(\sigma_k(\Pi A \Pi))} \leq \sum_{k}{q(\sigma_k(A))} = \Omega(A).\]
    \item Let $\hh_n$ be a finite dimensional Hilbert space and let $\| \cdot \|_{op}$ be the operator norm on ${\cal S}(\hh_n)$. If $q$ is continuous, then $A \in \hh_n \mapsto q(A)$ is continuous and hence $\Omega$ is continuous (since the trace is continuous in finite dimensions). Now assume $q$ is lower semi-continuous, and define for $n \in \N$, $q_n(t) := \inf_{s \in \R}{q(s) + n|t-s|}$. We have $q_n \geq 0$, $q_n(0) = 0$ $q_n$ is uniformly continuous and $q_n$ is an increasing sequence of functions such that $q_n \rightarrow q$ point-wise. Now it is easy to see that $\tr(q(A)) = \sup_{n} \tr(q_n(A))$ and hence $\Omega$ is lower semi-continuous as a supremum of continuous functions.\\
    The fact that $\Omega$ goes to infinity is a direct consequence of the fact that $q$ goes to infinity, by \cref{ass:omega}.
\end{enumerate}
\epr 

\br The three conditions of the previous lemma are in fact the only conditions needed in the proof. We could loosen \cref{ass:omega} to satisfy only these three properties.
\er

\subsubsection{Finite-dimensional representation and existence of a solution}

Fix $n \in \N$, a loss function $L : \R^n \rightarrow \R\cup \left\{+\infty\right\}$, a separable Hilbert space $\hh$, a regularizer $\Omega$ on ${\cal S}(\hh)$ a feature map $\phi : \X \rightarrow \hh$ and points $(x_1,...,x_n) \in \X^n$. 

Recall the problem in \cref{eq:problem-for-representer}:
\eqal{
\tag{\ref{eq:problem-for-representer}}
\textstyle \inf_{A \succeq 0} L(f_A(x_1),\dots,f_A(x_n)) + \Omega(A).
}

Define $\hh_n$ to be the finite-dimensional subset of $\hh$ spanned by the $\phi(x_i)$, i.e.,
\[\hh_n := \lspan{(\phi(x_i))_{1 \leq i\leq n}} = \left\{\sum_{i=1}^n{\alpha_i} \phi(x_i)~:~ \alpha \in \R^n\right\}.\]
Define $\Pi_n$ is the orthogonal projection on $\hh_n$, i.e.,
    \[\Pi_n \in {\cal S}(\hh),~\Pi_n^2 = \Pi_n,~ \range(\Pi_n) =\hh_n.\]
 Define ${\cal S}_n(\hh)_+$ to be the following subspace of ${\cal S}(\hh)_{+}$ : 
 \[{\cal S}_n(\hh)_+ := \Pi_n  {\cal S}(\hh)_{+} \Pi_n = \left\{\Pi_n A \Pi_n~:~ A  \in  {\cal S}(\hh)_{+}\right\}.\]

\bp \label{prp:prelim_thm1}
Let $L$ be a lower semi-continuous function which is bounded below, and assume $\Omega$ satisfies \cref{ass:omega}. Then \cref{eq:problem-for-representer} has a solution $A^*$ which is in ${\cal S}_n(\hh)_+$. 
\ep

\bpr In this proof, denote by $J$ the function defined by 
\[\forall A \in {\cal S}(\hh),~ J(A) := L(f_A(x_1),...,f_A(x_n)) + \Omega(A).\]
Our goal is to prove that the problem $\inf_{A \in {\cal S}(\hh)_+}{J(A)}$ has a solution which is in ${\cal S}_n(\hh)_+$, i.e., of the form  $\Pi_n A \Pi_n$ for some $A \in {\cal S}(\hh)_+$.\\

\paragraph{1.}

Let us start by fixing $A \in {\cal S}(\hh)_+$.\\
First note that since $\Pi_n$ is the orthogonal projection on $\lspan(\phi(x_i))_{1\leq i \leq n}$, in particular $\Pi_n \phi(x_i) = \phi(x_i)$ for all $1 \leq i \leq n$. Thus, for any $1 \leq i \leq n$, 
    \[f_A(x_i) = \phi(x_i)^{\top} A \phi(x_i) = \phi(x_i)^{\top}\Pi_n A \Pi_n \phi(x_i) = f_{\Pi_n A \Pi_n}(x_i).\]
    Here, the first and last equalities come from the definition of $f_A$ and $f_{\Pi_n A \Pi_n}$. Thus, 
    \[J(A) =L(f_{\Pi_n A \Pi_n}(x_1),...,f_{\Pi_n A \Pi_n}(x_n)) + \Omega(A).\]
Now since $\Omega$ satisfies \cref{ass:omega}, by the second point of \cref{lm:main_properties_ass}, it holds $\Omega(\Pi_n A \Pi_n) \leq \Omega(A)$, hence 
    \[J(\Pi_n A \Pi_n) \leq J(A).\]

This last inequality combined with the fact that ${\cal S}_n(\hh)_+ = \Pi_n {\cal S}(\hh)_+ \Pi_n \subset {\cal S}(\hh)_+$ show that 
\eqal{\label{eq:crucial}
\textstyle  \inf_{A \in {\cal S}_n(\hh)_+} J(A) =  \inf_{A \succeq 0} J(A).
}
\paragraph{2.}
Let us now show that $\inf_{A \in {\cal S}_n(\hh)_+} J(A) $ has a solution. Let us exclude the case where $J  = +\infty$, in which case $A = 0$ can be taken to be a solution.\\\

Let $V_n$ be the injection $V_n : \hh_n \hookrightarrow \hh$. Note that $V_n V_n^* = \Pi_n$ and $V_n^* V_n = I_{\hh_n}$. These simple facts easily show that 
\[{\cal S}_n(\hh)_+ = V_n {\cal S}(\hh_n)_+ V_n^* = \left\{V_n \tilde{A} V_n^*~:~ \tilde{A} \in {\cal S}(\hh_n)_+\right\}.\]

Thus, our goal is to show that
$\inf_{\tilde{A} \in {\cal S}(\hh_n)_+} J(V_n A V_n^*)$ has a solution.\\

By the first point of \cref{lm:main_properties_ass}, since $V_n^* V_n = I_{\hh_n}$, it holds 
\[\forall \tilde{A} \in {\cal S}(\hh_n),~\Omega(V_n \tilde{A} V^*_n) = \Omega(\tilde{A}) \implies J(V_n \tilde{A} V_n^*) = L(f_{V_n \tilde{A} V_n^*}(x_1),...,f_{V_n \tilde{A} V_n^*}(x_n)) + \Omega(\tilde{A}).\]

Let $\tilde{A}_0 \in {\cal S}(\hh_n)_{+}$ be a point such that $J_0 :=J(V_n\tilde{A}_0 V_n^*) < \infty$. Let $c_0$ be a lower bound for $L$. By the third point of \cref{lm:main_properties_ass}, there exists a radius $R_0$ such that for all $\tilde{A} \in {\cal S}(\hh_n)$,
\[ \|\tilde{A}\|_F > R_0  \implies \Omega(\tilde{A}) > J_0 - c_0.\]
Since $c_0$ is a lower bound for $L$, this implies
\eqals{
\textstyle  \inf_{\tilde{A} \in {\cal S}(\hh_n)_+}J(V_n \tilde{A} V^*_n) =  \inf_{\tilde{A} \in {\cal S}(\hh_n)_+,~\|\tilde{A}\|_F \leq R_0}J(V_n \tilde{A} V^*_n).
}
Now since $L$ is lower semi-continuous, $\Omega$ is lower semi-continuous by the last point of \cref{lm:main_properties_ass}, and $\tilde{A} \mapsto (f_{V_n \tilde{A} V_n^*}(x_i))_{1 \leq i \leq n}$ is linear hence continuous, the mapping $A \mapsto  J(V_n \tilde{A} V_n^*)$ is lower semi-continuous. Hence, it reaches its minimum on any non empty compact set. Since $\hh_n$ is finite dimensional, the set $\left\{\tilde{A} \in {\cal S}(\hh_n)_+~:~\|\tilde{A}\|_F \leq R_0\right\}$ is compact (closed and bounded) and non empty since it contains $\tilde{A}_0$, and hence there exists $\tilde{A}_* \in {\cal S}(\hh_n)_+$ such that $J(V_n \tilde{A}_* V_n^*) =   \inf_{\tilde{A} \in {\cal S}(\hh_n)_+,~\|\tilde{A}\|_F \leq R_0}J(V_n \tilde{A} V^*_n)$. 
Going back up the previous equalities, this shows that $A_* = V_n \tilde{A}_* V_n^* \in {\cal S}_n(\hh)_+$ and $J(A_*) = \inf_{A \succeq 0}J(A)$. 
\epr 

\subsubsection{Proof of \cref{thm:representer-non-negative}}

We will prove the following \cref{thm:1bis} whose statement is that of  \cref{thm:representer-non-negative} with more general assumptions.


\bt \label{thm:1bis} Let $L$ be lower semi-continuous and bounded below, and $\Omega$ satisfying \cref{ass:omega}. Then \cref{eq:problem-for-representer} has a solution $A_*$ which can be written in the form
\eqals{
 \sum_{i,j=1}^n \mathbf{B}_{ij} \phi(x_i)\phi(x_j)^\top, \qquad \textrm{for some matrix} ~ \mathbf{B} \in \R^{n\times n}, ~ \mathbf{B} \succeq 0.
}
Moreover, if $L$ is convex, and $\Omega$ is of the form \cref{eq:base-regularizer} with $\lambda_2 > 0$, this solution is unique. By \cref{eq:model-non-negative}, $A_*$ corresponds to a function of the form
$$f_*(x)= \sum_{i,j=1}^n \mathbf{B}_{ij} k(x,x_i)k(x,x_j).$$
\et

\blm \label{lm:thm1}
The set ${\cal S}_n(\hh)_{+}$ can be represented in the following way
\[{\cal S}_n(\hh)_{+} = \left\{\sum_{1 \leq i,j \leq n}{\mathbf{B}_{i,j} \phi(x_i)\phi(x_j)^{\top}},~:~ \mathbf{B} \in \R^{n \times n},~ \mathbf{B} \succeq 0\right\}.\]
In particular, for any $A \in {\cal S}_n(\hh)_{+}$, there exists a matrix $\mathbf{B} \in \R^{n \times n}$, $\mathbf{B} \succeq 0$ such that 
\[A =\sum_{1 \leq i,j \leq n}{\mathbf{B}_{i,j} \phi(x_i)\phi(x_j)^{\top}} \implies  \forall x \in \X,~ f_A(x) = \sum_{1 \leq i,j \leq n}{\mathbf{B}_{i,j} k(x_i,x)k(x_j,x)}.\]
\elm 

\bpr Define $S_n : \hh \rightarrow \R^n$ to be the operator such that 
\[\forall h,~ S_n(h) = \left(h^{\top} \phi(x_i)\right)_{1 \leq i \leq n},\]
with adjoint $S_n^* :  \R^n \rightarrow \hh$ such that 
\[ \forall \alpha \in \R^n,~ S_n^* \alpha = \sum_{i=1}^n{\alpha_i \phi(x_i)}.\]
Note that for any ${\bf B} \in \R^{n \times n},~ S_n^* {\bf B} S_n = \sum_{i,j}{{\bf B}_{i,j} \phi(x_i)\phi(x_j)^{\top}}$. 
\paragraph{1. Proving ${\cal S}_n(\hh)_{+} \subset \left\{\sum_{1 \leq i,j \leq n}{\mathbf{B}_{i,j} \phi(x_i)\phi(x_j)^{\top}},~:~ \mathbf{B} \in \R^{n \times n},~ \mathbf{B} \succeq 0\right\}$.}

Let $\Pi_n A \Pi_n$ be in ${\cal S}_n(\hh)_{+}$. Using the previous equality, we want to show there exists ${\bf B} \in \R^{n\times n},~ {\bf B} \succeq 0$ such that $\Pi_n A \Pi_n = S_n^* {\bf B} S_n$.  Using \cref{lm:important}, we see that $\Pi_n$ can be written in the form $S_n^* T_n$ where $T_n : \hh \rightarrow \R^n$ (write $\Pi_n = O_n O_n^*$ and note that $O_n$ is of the form $S_n^* \tilde{O}_n$). Hence, defining ${\bf B}$ to be the matrix associated to the operator $T_n A T_n^* : \R^n \rightarrow \R^n$, it holds $\Pi_n A \Pi_n = S_n^* {\bf B} S_n$. Moreover, $A \succeq 0$ implies ${\bf B} = T_n A T_n^* \succeq 0$.
\paragraph{2. Proving $ \left\{\sum_{1 \leq i,j \leq n}{\mathbf{B}_{i,j} \phi(x_i)\phi(x_j)^{\top}},~:~ \mathbf{B} \in \R^{n \times n},~ \mathbf{B} \succeq 0\right\} \subset {\cal S}_n(\hh)_{+}$.} 

Let ${\bf B} \in \R^{n \times n}$  and assume ${\bf B} \succeq 0$. Since ${\bf B} \succeq 0$, $A := S_n^* {\bf B} S_n \succeq 0$.  Since $S_n^*$ has its range included in $\hh_n$, $\Pi_n S_n^* = S_n^*$. Thus, $ \Pi_n A \Pi_n = A$ and hence $A \in {\cal S}_n(\hh)_+$.\\\

The second statement comes from the definition of $f_A(x)$. Indeed assume $A \in {\cal S}_n(\hh)_+$. By definition, $f_A(x) = \phi(x)^{\top} A \phi(x)$. Moreover, by the previous point, there exists ${\bf B} \in \R^{n \times n},~ {\bf B} \succeq 0$ such that $A = \sum_{1 \leq i,j \leq n}{{\bf B}_{i,j} \phi(x_i) \phi(x_j)^{\top}}$. Combining these two facts yields:
\[\forall x \in \X,~ f_A(x) = \sum_{1 \leq i,j \leq n}{{\bf B}_{i,j} \phi(x)^\top \phi(x_i)~ \phi(x_j)^{\top} \phi(x)} = \sum_{1 \leq i,j \leq n}{{\bf B}_{i,j} k(x,x_i)~k(x,x_j)}.\]
The last equality comes from the definition $k(x,\tilde{x}) = \phi(x)^{\top} \phi(\tilde{x})$. 
\epr 

\begin{proof}[Proof of \cref{thm:1bis}]
Under the assumptions of \cref{thm:1bis}, one satisfies the assumptions of \cref{prp:prelim_thm1}. Thus, \cref{eq:problem-for-representer} has a solution $A_*$ which is in ${\cal S}_n(\hh)_+$. Now applying \cref{lm:thm1}, $A_*$ can be written in the form $A_* = \sum_{i,j}{{\bf{B}}_{i,j}\phi(x_i)\phi(x_j)^{\top}}$ for ${\bf B} \in \R^{n \times n}$, ${\bf B} \succeq 0$, and hence
\[\forall x \in \X,~ f_{A_*}(x) = \sum_{i,j}{{\bf{B}}_{i,j}k(x,x_i)k(x,x_j)}.\]
Uniqueness in the case where $\Omega$ is of the form \cref{eq:base-regularizer} with $\lambda_2>0$ comes from the fact that the loss function is strongly convex in this case, and thus the minimizer is unique. 
\end{proof}

\subsection{Proof of \cref{prop:finite-dimensional-primal-characterization}}
\label{app:proof-prop:finite-dimensional-primal-characterization}

Recall the definitions of $S_n : \hh \rightarrow \R^n$ and its adjoint $S_n^* : \R^n \rightarrow \hh$ : 
\[\forall h,~ S_n(h) = \left(h^{\top} \phi(x_i)\right)_{1 \leq i \leq n},~\forall \alpha \in \R^n,~ S_n^* \alpha = \sum_{i=1}^n{\alpha_i \phi(x_i)}.\]
Note that the kernel matrix $\mathbf{K} = \left(k(x_i,x_j)\right)_{1 \leq i,j \leq n}$ can also be written as $\mathbf{K} = S_n S_n^*$. \\
Let $r$ be the rank of ${\bf K}$ and ${\bf V} \in \R^{r \times n}$ be a matrix such that 
\[{\bf V}^{\top} {\bf V} = {\bf K}.\]
Note that ${\bf V}$ is of rank $r$ and hence $\bf VV^{\top}$ is invertible, making the following definition of $O_n : \R^r \rightarrow \hh$ valid:
 \[ O_n = S_n^* {\bf V}^{\top} ({\bf V \bf V^{\top}})^{-1}.\]
 The following result holds : 
\blm\label{lm:important}
$O_n O_n^* = \Pi_n$ and $O_n^* O_n = I_r$.  
\elm 

\bpr 
Using the fact that ${\bf V^{\top}V} = {\bf K} = S_n S_n^*$, we have 
\[O_n^* O_n = ({\bf V \bf V^{\top}})^{-1} {\bf V} S_nS_n^* {\bf V}^{\top} ({\bf V \bf V^{\top}})^{-1} = ({\bf V \bf V^{\top}})^{-1} {\bf V} {\bf V^{\top}V} {\bf V}^{\top} ({\bf V \bf V^{\top}})^{-1}  = I_r.\]
Now let us show that $O_n O_n^* = \Pi_n$. First of all, $\tilde{\Pi}_n := O_n O_n^*$ is self adjoint and is a projection operator since $\tilde{\Pi}_n^2 = O_n (O_n^* O_n) O_n^* = O_n O_n^* = \tilde{\Pi}_n$ by the previous point. Moreover, its range is included in $\lspan(\phi(x_i))_{1 \leq i \leq n}$ since $O_n = S_n^* \tilde{O}_n$ for a certain $\tilde{O}_n$ and the range of $S_n^*$ is $\lspan(\phi(x_i))_{1 \leq i \leq n}$. 
Finally since the rank of $S_n^*$ is also the rank of $S_n S_n^*$ which is $r$, we deduce that the range of $\lspan(\phi(x_i))_{1 \leq i \leq n}$ is of dimension $r$ and hence, since $O_n^* O_n = I_r$ implies that $O_n O_n^*$ is of rank $r$, putting things together, $\tilde{\Pi}_n = \Pi_n$. 
\epr 

\br[Constructing ${\bf V}$] In the case where the kernel matrix ${\bf K}$ is full rank, ${\bf V} \in \R^{n \times n}$ and is invertible, and $O_n$ can be simply written $S_n^* {\bf V}^{-1}$.\\
 In the case where the kernel matrix ${\bf K}$  is not full-rank, we build ${\mathbf V}$ as ${\mathbf V} = {\bf \Sigma}^{1/2} {\bf U}^\top$, where ${\bf \Sigma} \in \R^{r \times r}$  is diagonal and ${\bf U} \in \R^{n \times r}$ is unitary and correspond to the {\em economy eigendecomposition} of ${\bf K}$ where $r$ is the rank of ${\bf K}$, i.e., ${\bf K} = {\bf U} {\bf \Sigma} {\bf U}^\top$.
\er 
Consider the following generalization of the finite dimensional model proposed in \cref{eq:emp-feature-map-solution} in the case where ${\bf K}$ is not necessarily full rank : 
\eqal{
\tag{\ref{eq:emp-feature-map-solution}}
\tilde{f}_{\mathbf{A}}(x) = \Phi(x)^\top \mathbf{A} \Phi(x), \qquad {\mathbf A} \in \R^{r\times r},~ \mathbf{A} \succeq 0,
}
where $\Phi : \X \mapsto \R^r$ is defined as $\Phi(x) = O_n^* \phi(x) = ({\bf VV^{\top}})^{-1}{\bf V} v(x)$, where $v(x) = (k(x_i,x))_{1 \leq i \leq n} \in \R^n$. 

We are now ready to prove \cref{prop:finite-dimensional-primal-characterization}.

\begin{proof}[Proof of \cref{prop:finite-dimensional-primal-characterization}] Recall 
\eqal{
\tag{\ref{eq:finite-dimensional-primal-characterization}}
\textstyle \min_{\mathbf{A} \succeq 0} L(\tilde{f}_\mathbf{A}(x_1), \dots, \tilde{f}_\mathbf{A}(x_n)) + \Omega(\mathbf{A}).
}

The fact that \cref{eq:finite-dimensional-primal-characterization} has a solution, and that this solution is unique if $\lambda_2 >0$ and $L$ is convex can be seen as a simple consequence of \cref{thm:1bis} in the case where the model considered is the finite dimensional model defined in \cref{eq:emp-feature-map-solution}. Let us now prove the other part of the proposition.

Start by noting that with our definition of $O_n$, for all ${\bf A} \in \R^{r \times r},~ {\bf A} \succeq 0$,~
\eqal{\tag{a}
\label{(a)}
f_{O_n {\bf A} O_n^*} = \tilde{f}_{\bf A}.}

Moreover, 
\eqal{
\tag{b}\label{(b)}
\left\{O_n {\bf A} O_n^* ~:~ {\bf A } \in \R^{r\times r},~ {\bf A } \succeq 0\right\} = {\cal S}_n(\hh)_+.}
Finally, since $O_n$ is an isometry which implies $\Omega(O_n {\bf A} O_n^* ) = \Omega({\bf A})$ and by \cref{(a)}, for any $ {\bf A} \in {\cal S}(\R^n)_{+}$, it holds :
\eqal{
\tag{c}\label{(c)}
L(f_{O_n {\bf A} O_n^*}(x_1),...,f_{O_n {\bf A} O_n^*}(x_n)) + \Omega(O_n {\bf A} O_n^*) = L(\tilde{f}_{ {\bf A} }(x_1),...,\tilde{f}_{{\bf A} }(x_n)) + \Omega({\bf A}).}

Now combining \cref{(c)} and \cref{(b)}, any solution ${\bf A}_*$ to \cref{eq:finite-dimensional-primal-characterization} corresponds to a solution $A_* \in \argmin_{A \in {\cal S}_n(\hh)_+}{L(f_A(x_1),...,f_A(x_n))+ \Omega(A)}$, where $A_* = O_n {\bf A}_* O_n^*$. Now using \cref{eq:crucial} in the proof of \cref{prp:prelim_thm1}, we see that $A_*$ is also a minimizer of \cref{eq:problem-for-representer} hence the result. 

Note that the fact that the condition number of the problem, if it exists, is preserved because $O_n$ is an isometry.
\end{proof}

\subsection{Proof of \cref{thm:dual} and algorithmic consequence.\label{app:proof-thm:dual}}

In this section, we prove \cref{thm:dual} and explain how to derive an efficient algorithm to solve it in certain cases.\\\

Let us start by proving the following lemma. 

\blm \label{lm:omega_grad}
Let $\lambda_1,\lambda_2 \geq 0$ and assume $\lambda_2 > 0$. Let $\Omega_+$ be defined on ${\cal S}(\R^r)$ as follows : 
\[\Omega_+(A) = \begin{cases} 
\lambda_1 \|A\|_\star + \frac{\lambda_2}{2}\|A\|_F^2 & \text{ if } A \succeq 0 ;\\
 + \infty & \text{ otherwise .}
\end{cases}
\]
Then $\Omega_+$ is a closed convex function, and its Fenchel conjugate is given for any $B \in {\cal S}(\R^r)$ by the formula:
\[\Omega_+^*(B) = \frac{1}{2\lambda_2} \left\|\left[B - \lambda_1 I\right]_{+}\right\|^2_{F}.\]
Moreover, $\Omega_+$ is differentiable at every point, and is $1/\lambda_2$ smooth. Its gradient is given by:
\[\nabla \Omega_+^*(B) = \frac{1}{\lambda_2}\left[B-\lambda_1 I\right]_{+}.\]
\elm

\bpr 
Write 
\[\Omega_+(A) = \iota_{{\cal S}(\R^r)_+} + \lambda_1 \|A\|_\star + \frac{\lambda_2}{2} \|A\|_F^2.\]
Here, $\iota_C$ stands for the characteristic function of the convex set $C$, i.e. $\iota_C(x) = 0$ if $x\in C$ and $+\infty$ otherwise. Since $\|\cdot\|^2_{F}$ and $\|\cdot\|_\star$ are both convex, continuous, and real valued, and since $\iota_{{\cal S}(\R^r)_+}$ is closed since ${\cal S}(\R^r)_+$ is a closed non-empty convex subset of ${\cal S}(\R^r)$, this shows that $\Omega_+$ is indeed convex and closed. Note that it is continuous on its domain ${\cal S}(\R^r)_+$. Moreover, it is strongly convex since $\lambda_2 > 0$.
Fix $B \in {\cal S}(\R^r)$ and consider the problem
\[\sup_{A \in {\cal S}(\R^r)}{\tr(AB) - \Omega_+(A)} = \sup_{A \succeq 0} \tr(A(B - \lambda_1 I)) - \frac{\lambda_2}{2}\|A\|^2_F \]
Since $\Omega_+$ is strongly convex, we know there exists a unique solution to this problem.

Note that $A_* = \argmax \tr(AB) - \Omega_+(A)$ if and only if 
\[A_* = \argmin_{A \in {\cal S}(\R^r)_+} \frac{1}{2}\left\|\left(A  - \frac{1}{\lambda_2}(B-\lambda_1 I)\right)\right\|^2.  \]
That is $A_*$ is the orthogonal projection of $\frac{B - \lambda_1 I }{\lambda_2}$ on ${\cal S}(\R^r)_+$ for the Frobenius scalar product. Hence, $A_* = \left[\frac{B - \lambda_1 I}{\lambda_2}\right]_{+}$.

Here, for any symetric matrix $C$, we denote with $[C]_+$ resp $[C]_{-}$ its positive resp negative part. Given an eigendecomposition $C = U \Sigma U^T$ with $\Sigma$ diagonal, they are defined by $[C]_+ = U \max(0,\Sigma) U^T$ and $[C]_- = U \max(0,-\Sigma) U^T$.
Hence, the Fenchel conjugate of $\Omega_+$ is given by 
\[\Omega_+^*(B) = \frac{1}{2\lambda_2}\left\|\left[B - \lambda_1 I\right]_{+}\right\|_{F}^2.\]
Consider $\omega_+^* : \sigma \in \R \mapsto \max(0,\sigma^2) \in \R$. $\omega_+^*$ is $1$-smooth and differentiable, and $(\omega_+^*)^{\prime}(\sigma) = \max(0,\sigma)$. Hence, the function 
\[B \mapsto \tr(\omega_+^*(B)) = \left\|[B]_{+}\right\|_{F}^2\]
is differentiable and $1$-smooth, with differential given by the spectral function $(\omega_+^*)^{\prime}(B) = [B]_+$. 
Hence, $\Omega_+$ is differentiable and $\nabla \Omega_+^*(B)  = \frac{1}{\lambda_2}[B -\lambda_1 I]_{+}$, and is $1/\lambda_2$ smooth.
\epr 

\bt[Convex dual problem]\label{thm:dualbis}
Let $L : \R^n \rightarrow \R \cup \set{+\infty}$  be convex closed function and  $L^*$ be the Fenchel conjugate of $L$ (see \cite{boyd2004convex} for the definition of closed and of the dual conjugate). Assume $\Omega$ is of the form \cref{eq:base-regularizer}. Assume there exists ${\bf A} \in \R^{r \times r}$, $\bf{A} \succeq 0$ such that L is continuous in $(\tilde{f}_{\bf A}(x_i))_{ 1\leq i\leq n} $.

Then the problem in \cref{eq:finite-dimensional-primal-characterization} has the following dual formulation, 
\begin{align}
\tag{\ref{dualfinited}}
\sup_{\alpha \in \R^n} -L^*(\alpha) - \tfrac{1}{2\la_2}\|[{\bf V}\diag(\alpha){\bf V}^\top+\la_1 {\bf I}]_{-}\|^2_{F},
\end{align}
and this supremum is atteined. Let $\alpha^* \in \R^n$ be a solution of \eqref{dualfinited}. Then, the solution of  \eqref{eq:problem-for-representer} is obtained via \eqref{eq:char-optimal-solution-representer}, with ${\mathbf B} \in \R^{r\times r}, {\bf{B}} \succeq 0$ as 
\begin{align}
    \tag{\ref{expression_from_dual}}
    {\bf B} = {\bf V}^{\top}{(\bf V V^{\top})^{-1}}  \left(\frac{1 }{\lambda_2}\left[{\bf V} \diag(\alpha_*) {\bf V}^{\top} + \lambda_1 I\right]_-\right){(\bf V V^{\top})^{-1}} {\bf V}.
\end{align} 
\et
\begin{proof}[Proof of \cref{thm:dualbis}]

We apply theorem 3.3.1 of \cite{Borwein:1616007} with the following parameters (on le left, the ones in theorem 3.3.1 of \cite{Borwein:1616007} and on the right the ones by which we replace them).

\begin{center}
\begin{tabular}{c|c}
     ${\bf E}$ & ${\cal S}(\R^r) $ \\
    ${\bf Y}$ & $\R^n$  \\
    $A : {\bf E} \rightarrow {\bf Y}$ & $R :{\bf A} \in {\cal S}(\R^r) \mapsto (\tilde{f}_{\bf A}(x_1),...,\tilde{f}_{\bf A}(x_n))  \in \R^n$\\
    $f : {\bf E} \rightarrow ]-\infty,+\infty]$ & $\Omega_+  : {\cal S}(\R^r) \rightarrow ]-\infty,+\infty] $\\
    $g : {\bf Y} \rightarrow ]-\infty,+\infty] $ & $L : \R^n \rightarrow ]-\infty,+\infty]$\\
    $p = \inf_{x \in {\bf E}}{g(Ax) + f(x)}$ & $ p = \inf_{{\bf A} \in {\cal S}(\R^r)}{L(\tilde{f}_{\bf A}(x_1),...,\tilde{f}_{\bf A}(x_n)) + \Omega_+({\bf A})}$\\
    $d = \sup_{\phi \in {\bf Y}}{- g^*(\phi) - f^*(-A^*\phi)}$ & $ d =  \sup_{\alpha \in \R^n}{- L^*(\alpha) - \Omega_+^*(-R^*(\alpha))}$
\end{tabular}
\end{center}

Indeed, for all $1 \leq i \leq n$, if $\Phi$ is defined in \cref{eq:emp-feature-map-solution}, $\Phi(x_i) =  {\bf V}e_i$ and thus $\tilde{f}_{\bf A}(x_i) =\Phi(x_i)^{\top} {\bf A} \Phi(x_i)= e_i^{\top} ({\bf V}^{\top} {\bf A} {\bf V})e_i.$ Thus, for any ${\bf A} \in {\cal S}(\R^r),~ R({\bf A}) :=  \left(\tilde{f}_{\bf A}(x_i)\right)_{1\leq i \leq n} =  \diag({\bf V}^{\top} {\bf A} {\bf V})$. The following properties are satisfied : 

\begin{itemize}
    \item $L$ is lower semi-continuous, convex and bounded below hence closed (see \cite{Borwein:1616007});
    \item similarly, $\Omega_+$ is a non negative closed convex function, with dual $\Omega_+^*$ given in \cref{lm:omega_grad} which is differentiable and smooth; 
    \item $\dom(\Omega_+) = {\cal S}(\R^n)_+$ ;
    \item $R$ is linear, and for any $\alpha \in \R^n$, it holds $R^* \alpha = {\bf V} \diag(\alpha) {\bf V}^{\top}$;
    \item The dual $d $ can therefore be re-expressed as \cref{dualfinited}, using the expressions for $\Omega_+^*$ and $R^*$ : 
    \eqal{\tag{\ref{dualfinited}}
    \  \sup_{\alpha \in \R^n}{-L^*(\alpha) - \frac{1}{2\lambda_2}\left\|\left[{\bf V} \diag( \alpha){\bf V}^{\top} + \lambda_1 I \right]_{-}\right\|^2_F}  
    }
    \item Assume there exists ${\bf A} \in \R^{r \times r}$, $\bf{A} \succeq 0$ such that $L$ is continuous in $(\tilde{f}_{\bf A}(x_i))_{ 1\leq i\leq n}$. Then there exists a point of continuity of $g$ such which is also in $R \dom f$, hence the assumption of theorem 3.3.1 of \cite{Borwein:1616007} is satisfied. 
\end{itemize}

Applying theorem 3.3.1 of \cite{Borwein:1616007}, the following properties hold:
\begin{itemize}
    \item $d = p$, 
    \item $d$ is atteined for a certain $\alpha_* \in \R^n$. Indeed,  there exists ${\bf A } \in \dom \Omega_+$ such that $R({\bf A}) \in \dom(L)$. Thus , $L(R({\bf A})) + \Omega_+({\bf A}) < + \infty$ and hence $d < +\infty$. Moreover, since $L$ and $\Omega_+$ are lower bounded, this shows that $d$ is lower bounded and hence $d > -\infty$. Hence $d$ is finite and thus is atteined by theorem 3.3.1.
\end{itemize}

Now using Exercise 4.2.17 of \cite{Borwein:1616007} since $L$ and $\Omega_+$ are closed convex and since $\Omega_+^*$ is differentiable, we see that the optimal solution of the primal problem ${\bf A}_*$ is given by the following formula:
\[{\bf A}_* = \nabla \Omega_+^*(-R^*\alpha^*) = \frac{1 }{\lambda_2}\left[{\bf V} \diag(\alpha_*) {\bf V}^{\top} + \lambda_1 I\right]_- .\]

Thus, for any $x \in \X$, using the definition of $\Phi(x)$, it holds 

\begin{align*}
\tilde{f}_{\bf A}(x) &= \Phi(x)^{\top}{\bf A}_* \Phi(x) \\
&= v(x)^{\top}{\bf V}^{\top}{(\bf V V^{\top})^{-1}}  \left(\frac{1 }{\lambda_2}\left[{\bf V} \diag(\alpha_*) {\bf V}^{\top} + \lambda_1 I\right]_-\right){(\bf V V^{\top})^{-1}} {\bf V} v(x).
\end{align*}
Thus, setting 
\[{\bf B} = {\bf V}^{\top}{({\bf V V}^{\top})^{-1}}  \left(\frac{1 }{\lambda_2}\left[{\bf V} \diag(\alpha_*) {\bf V}^{\top} + \lambda_1 I\right]_-\right){({\bf V V}^{\top})^{-1}} {\bf V},\]
it holds $\tilde{f}_{\bf A}(x) = v(x)^{\top} {\bf B} v(x)$. Since $v(x) = (k(x,x_i))_{1 \leq i \leq n} \in \R^n$, this shows the result. In particular, note that when $\bf V$ is invertible (i.e. when $\bf K$ is full rank) then the equation above is exactly \cref{expression_from_dual}, since ${\bf V}^{\top}{({\bf V V}^{\top})^{-1}} = {\bf V}^{-1}$.

\end{proof}

\begin{proof}[Proof of \cref{thm:dual}]
It is a direct consequence of the previous theorem.

\end{proof}

Note that the conditions of theorem \cref{thm:dual} are satisfied in many interesting cases, such as the ones described in the following proposition.
\bp
Assume one of the following conditions is satisfied : 
\begin{enumerate}[label = (\roman*)]
    \item $\dom(L) = \R^n$;
    \item $ \R^{n}_{++} \subset \dom(L)$ and $k(x_i,x_i) > 0$ for all $1 \leq i \leq n$
    \item $\bf K$ is full rank and there exists a continuity point $\alpha_0$ of $L$ such that $\alpha_0 \in \R^{n}_{+}$.
\end{enumerate}
Then there exists $\bf{A} \in {\cal S}(\R^n)_+$ such that $L$ is continuous in $(\tilde{f}_{\bf{A}}(x_1),...,\tilde{f}_{\bf{A}}(x_n))$.
\ep 

\bpr 
Let us prove these points.
\begin{itemize}
\item if $\dom(L) = \R^n$, since $L$ is convex, $L$ is continuous everywhere. Taking $\bf{A} = 0$, the result holds.
\item if $k(x_i,x_i) >0$ for all $i > 0$, then taking ${\bf A}= I_r$, we have $(\tilde{f}_{\bf A}(x_i))_{1 \leq i\leq n} = (k(x_i,x_i))_{1 \leq i\leq n}$ which is in  $\R^n_{++}$. Since $\R^n_{++} \subset \dom(L)$ and $\R^n_{++}$ is open, $L$ is continuous on $\R^n_{++}$ and hence, $\bf{A}$ satisfies the desired property.
\item Let $\alpha_0$ be a continuity point of $L$ in $\R^n_+$. If we assume ${\bf K}$ is full rank, then in particular, ${\bf V}\in \R^{n \times n}$ is of rank $n$ and invertible. Thus, there exists ${\bf A}\in {\cal S}(\R^r)_+$ such that 
\[{\bf V}^{\top} {\bf A}{\bf V} = \diag(\alpha_0) \implies (\tilde{f}_{\bf A}(x_i))_{1 \leq i\leq n}  = \alpha_0.\]
\end{itemize}
\epr

\paragraph{Discussion on how to solve  \cref{dualfinited}}

Proximal splitting methods can be applied to solve \cref{dualfinited} such as FISTA \cite{beck2009fast}, provided the proximal operator of $L^*$ can be computed (see \cite{boyd14} for the definition of the proximal operator). Indeed, \cref{dualfinited} can be written as 
\[\min_{\alpha \in \R^n}{F(\alpha) = f(\alpha) + g(\alpha)},\qquad f(\alpha) = \Omega_+^*(-{\bf V} \diag(\alpha){\bf V}^{\top}) ,~ g(\alpha) = L^*(\alpha).\]

where $\Omega_+^*$ has been defined in  \cref{lm:omega_grad} and has been shown to be smooth and differentiable. Thus, since $\alpha \mapsto {\bf V}\diag(\alpha) {\bf V}^\top$ is linear, $f$ is smooth and differentiable. Moreover, one can have access to the gradient of $f$ by performing an eigenvalue decomposition of ${\bf V}\diag(\alpha) {\bf V}^\top$ whose complexity is bounded above by ${\cal O}(r^3)$. Thus, one can apply one of the algorithms in section 4 of \cite{beck2009fast} in order to compute an optimal solution to \cref{dualfinited}. Moreover, a bound on the performance of the algorithm is given in theorem 4.4 of this same work. 
Note that if $L$ is of the form $L(\alpha) = \sum_{i=1}^n{\ell_i(\alpha_i)}$, it suffices to be able to compute the proximal operator of the $\ell_i$ to get a proximal operator for $L^*$ (see \cite{boyd14}).

\subsection{Proof and additional discussion of \cref{thm:universality}}
\label{app:proof-thm:universality}
We recall the notion of universality~\cite{micchelli2006universal}, in particular {\em cc-universality} \cite{sriperumbudur2011universality}, here explicited in the context of non-negative functions. A set ${\cal F}$ is a {\em universal approximator} for non-negative functions on $\X$ if, for any compact subset ${\cal Z}$ of ${\cal X}$, we have that the set ${\cal F}|_{\cal Z}$ of restrictions on ${\cal Z}$, defined as ${\cal F}|_{\cal Z} = \{f|_{\cal Z}~|~ f \in {\cal F}\}$, is dense in the set $C^+({\cal Z})$ of non-negative continuous functions over ${\cal Z}$ in the maximum norm. In the following theorem we prove the cc-universality of the proposed model
\begin{theorem}
Let $\X$ be a locally compact Hausdorff space, $\hh$ a separable Hilbert space and $\phi:\X \to \hh$ a $cc$-universal feature map. Let $\|\cdot\|_\circ$ be a norm for ${\cal S}(\hh)$ such that $\|\cdot\|_{\star} \trianglerighteq \|\cdot\|_\circ$. Then $\cal{F}^\circ_{\phi}$ is a $cc$-universal approximator for the non-negative functions on $\X$.
\end{theorem}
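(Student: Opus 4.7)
The strategy is to reduce cc-universality of $\mathcal{F}^\circ_\phi$ to cc-universality of the underlying linear model $\mathcal{G}_\phi$, by exploiting the fact that squares of real-valued continuous functions exhaust the non-negative continuous functions. Fix a compact subset $\mathcal{Z}\subseteq\X$, a non-negative continuous function $g:\mathcal{Z}\to\R_+$, and $\varepsilon>0$. Since $g\geq 0$ and $g$ is continuous on the compact set $\mathcal{Z}$, the function $\sqrt{g}$ is continuous on $\mathcal{Z}$, and $M:=\sup_{\mathcal{Z}}g<\infty$.

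The key step is to approximate $\sqrt{g}$ linearly and then square. Since $\phi$ is $cc$-universal, the linear space $\mathcal{G}_\phi=\{w^\top\phi(\cdot)\mid w\in\hh\}$ is dense in $C(\mathcal{Z})$ for the uniform norm; choose $w\in\hh$ with
\[
\sup_{x\in\mathcal{Z}}\bigl|w^\top\phi(x)-\sqrt{g(x)}\bigr|<\delta,
\]
where $\delta>0$ will be fixed below. Setting $A=ww^\top$, which is a rank-one PSD operator, we get $f_A(x)=\phi(x)^\top A\phi(x)=(w^\top\phi(x))^2$, so the factorization $a^2-b^2=(a-b)(a+b)$ gives
\[
\bigl|f_A(x)-g(x)\bigr|=\bigl|w^\top\phi(x)-\sqrt{g(x)}\bigr|\,\bigl|w^\top\phi(x)+\sqrt{g(x)}\bigr|<\delta\,(2\sqrt{M}+\delta),
\]
since $|w^\top\phi(x)|\leq\sqrt{g(x)}+\delta\leq\sqrt{M}+\delta$ on $\mathcal{Z}$. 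Choosing $\delta$ so that $\delta(2\sqrt{M}+\delta)<\varepsilon$ yields the desired uniform approximation.

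It remains to check that $A=ww^\top$ lies in $\mathcal{F}^\circ_\phi$. By construction $A\succeq 0$. Its nuclear norm is $\|A\|_\star=\|w\|^2<\infty$, and the assumption $\|\cdot\|_\star\trianglerighteq\|\cdot\|_\circ$ gives a constant $C>0$ with $\|A\|_\circ\leq C\|A\|_\star<\infty$. Hence $f_A\in\mathcal{F}^\circ_\phi$, proving density of $\mathcal{F}^\circ_\phi|_\mathcal{Z}$ in the non-negative continuous functions on $\mathcal{Z}$ in the uniform norm. Since $\mathcal{Z}$ was an arbitrary compact subset, this establishes cc-universality.

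The only potentially delicate point is the existence of a continuous square root for $g$, which is immediate here because $g$ is real-valued and non-negative (so $\sqrt{g}$ is the composition of $g$ with the continuous map $t\mapsto\sqrt{t}$ on $[0,\infty)$); no differentiability of $g$ is required. Everything else is a routine combination of cc-universality of $\phi$, the elementary squaring bound above, and the norm comparison $\|\cdot\|_\star\trianglerighteq\|\cdot\|_\circ$ that ensures rank-one PSD operators belong to $\mathcal{F}^\circ_\phi$.
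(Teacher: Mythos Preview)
Your proof is correct and follows essentially the same approach as the paper: approximate $\sqrt{g}$ uniformly by a linear model $w^\top\phi(\cdot)$ using cc-universality, take the rank-one operator $A=ww^\top$, and bound $|f_A-g|$ via the difference-of-squares factorization. Your explicit verification that $A\in\mathcal{F}^\circ_\phi$ through the norm comparison $\|\cdot\|_\star\trianglerighteq\|\cdot\|_\circ$ is a nice addition that the paper leaves implicit.
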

\begin{proof}
Proving that the proposed model is a cc-universal approximator for non-negative functions, is equivalent to require that given a compact set ${\cal Z} \subseteq {\cal X}$, a non-negative function $g:{\cal Z} \to \R_+$ and $\epsilon > 0$, there exists $f_{A_{g,{\cal Z},\epsilon}} \in \cal{F}^\circ_{\phi}$ such that $\|g - f_{A_{g,{\cal Z},\epsilon}}\|_{C(Z)} \leq \epsilon$.
In particular, let $Q = 2\|g\|^{1/2}_{C({\cal Z})} + \epsilon^{1/2}$, since $\phi$ is $cc$-universal, given ${\cal Z}, g, \epsilon$, there exists $w_{\sqrt{g},{\cal Z},\tfrac{\epsilon}{Q}}$ such that $\|\sqrt{g} - \phi(\cdot)^\top w_{\sqrt{g},{\cal Z},\tfrac{\epsilon}{Q}}\|_{C(Z)} \leq \tfrac{\epsilon}{Q}$.
Define $A_{g,{\cal Z},\epsilon} = w_{\sqrt{g},{\cal Z},\tfrac{\epsilon}{Q}} \otimes w_{\sqrt{g},{\cal Z},\tfrac{\epsilon}{Q}}$. Note that for any $x \in \X$,
\eqal{
f_{A_{g,{\cal Z},\epsilon}}(x) = \phi(x)^\top A_{g,{\cal Z},\epsilon} \phi(x) = \phi(x)^\top \left(w_{\sqrt{g},{\cal Z},\tfrac{\epsilon}{Q}} \otimes w_{\sqrt{g},{\cal Z},\tfrac{\epsilon}{Q}}\right) \phi(x) = (\phi(x)^\top w_{\sqrt{g},{\cal Z},\tfrac{\epsilon}{Q}})^2.
}
Then, by denoting with $h(x) = \sqrt{g(x)} - \phi(x)^\top w_{\sqrt{g},{\cal Z},\tfrac{\epsilon}{Q}}$, we have 
\eqal{
\|g - f_{A_{g,{\cal Z},\epsilon}}\|_{C(Z)} & = \sup_{x \in {\cal Z}} |g(x) - (\phi(x)^\top w_{\sqrt{g},{\cal Z},\tfrac{\epsilon}{Q}})^2|  \\
& =\sup_{x \in {\cal Z}} \left|\left(\sqrt{g(x)} - \phi(x)^\top w_{\sqrt{g},{\cal Z},\tfrac{\epsilon}{Q}}\right)\left(\sqrt{g(x)} + \phi(x)^\top w_{\sqrt{g},{\cal Z},\tfrac{\epsilon}{Q}}\right)\right| \\
& =\sup_{x \in {\cal Z}} |h(x)(2\sqrt{g(x)} - h(x))| \\
& \leq \|h\|_{C({\cal Z})}(2\|\sqrt{g}\|_{C({\cal Z})} + \|h\|_{C({\cal Z})})\\
& \leq \frac{\epsilon}{Q}\left(2\|g\|^{1/2}_{C({\cal Z})} + \frac{\epsilon}{Q}\right) \leq \epsilon.
}
The last step is due to the fact that $\epsilon/Q \leq \sqrt{\epsilon}$, then $2\|g\|^{1/2}_{C({\cal Z})} + \frac{\epsilon}{Q} \leq Q$.
\end{proof}

\subsection{Proof and additional discussion of \cref{thm:model-richer-than-exponential}}
\label{app:proof-thm:model-richer-than-exponential}

In \cref{thm:general-version-banach-algebra}, stated below, we prove that ${\cal E}_\phi \subseteq {\cal F}_\phi$ under the very general assumption that ${\cal G}_\phi$ is a multiplication algebra, i.e.. if ${\cal G}_\phi$ is closed under pointwise product of the functions. In \cref{thm:general-model-richer-than-exponential} we specify this result when ${\cal G}_\phi$ is a Sobolev space, proving that ${\cal E}_\phi \subsetneq {\cal F}^\circ_\phi$. \cref{thm:model-richer-than-exponential} is a direct consequence of the latter theorem.

\paragraph{General result when ${\cal G}_\phi$ is a multiplication algebra.}
First we endow ${\cal G}_\phi$ with a Hilbertian norm. Define $\|\cdot\|_{{\cal G}_\phi}$ as $\|f_w\|_{{\cal G}_\phi} = \|w\|_\hh,$ for any $w \in \hh$. 

\begin{definition}\label{def:mult-algebra}
${\cal G}_\phi$ is a multiplication algebra, when there exists a constant $C$ such that the unit function $u:\X \to \R$ that maps $x \mapsto 1$ for any $x \in \X$ is in ${\cal G}_\phi$ and
\eqal{
\label{eq:banach-algebra-ineq}
\|f \cdot g\|_{{\cal G}_\phi} \leq C\|f\|_{{\cal G}_\phi}\|g\|_{{\cal G}_\phi}, \qquad \forall~ f,g \in {\cal G}_\phi,
}
where we denote by $f\cdot g$ the pointwise multiplication, i.e.,  $(f\cdot g)(x) = f(x)g(x)$ for all $x \in \X$.
\end{definition}

\begin{remark}[Renormalizing the constant]
Note that when ${\cal G}_\phi$ is a multiplication algebra for a constant $C$, it is always possible to define an equivalent norm $\|\cdot\|'_{{\cal G}_\phi}$ as $\|\cdot\|'_{{\cal G}_\phi} = C \|\cdot\|_{{\cal G}_\phi}$ for which ${\cal G}_\phi$ is a multiplication algebra with constant $1$.
\end{remark}

\begin{theorem}[General version when ${\cal G}_\phi$ is an algebra]\label{thm:general-version-banach-algebra}
 Let $\|\cdot\|_{\star} \trianglerighteq \|\cdot\|_\circ$. Let $\X$ be a compact space and $\phi$ be a bounded continuous map such that ${\cal G}_\phi$ is a multiplication algebra, then ${\cal E}_\phi \subseteq {\cal F}^\circ_{\phi}$.
\end{theorem}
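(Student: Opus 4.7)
The plan is to reduce the statement to showing that $e^{f/2} \in {\cal G}_\phi$ whenever $f \in {\cal G}_\phi$, and then exploit the identity $e^f = (e^{f/2})^2$ to exhibit a rank-one positive semidefinite operator $A$ with $f_A = e^f$.

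First I would endow ${\cal G}_\phi$ with the RKHS norm associated to the kernel $k(x,x') = \phi(x)^\top \phi(x')$, so that $({\cal G}_\phi, \|\cdot\|_{{\cal G}_\phi})$ is genuinely a Hilbert space (hence complete as a Banach space), and the evaluation maps $g \mapsto g(x)$ are bounded linear functionals satisfying $|g(x)| \leq \|\phi(x)\|\, \|g\|_{{\cal G}_\phi}$. Using the remark preceding the theorem I would also rescale $\|\cdot\|_{{\cal G}_\phi}$ so that the multiplication algebra constant becomes $C = 1$.

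Next I would establish $e^{f/2} \in {\cal G}_\phi$ via the Taylor expansion $\sum_{k \geq 0} (f/2)^k / k!$. The $k=0$ term lies in ${\cal G}_\phi$ because the unit function belongs to ${\cal G}_\phi$ by \cref{def:mult-algebra}, and induction on $k \geq 1$ using \cref{eq:banach-algebra-ineq} (with $C=1$) yields $(f/2)^k \in {\cal G}_\phi$ together with $\|(f/2)^k\|_{{\cal G}_\phi} \leq \|f/2\|_{{\cal G}_\phi}^k$. Hence the series is absolutely summable in the Banach space ${\cal G}_\phi$ and therefore converges in norm to some $g \in {\cal G}_\phi$. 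Continuity of evaluation forces $g(x) = \sum_{k \geq 0} (f(x)/2)^k / k! = e^{f(x)/2}$ for every $x \in \X$, so $e^{f/2} = v^\top \phi(\cdot)$ for some $v \in \hh$.

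To finish, I would take $A = v \otimes v$, the rank-one positive semidefinite operator defined by $h \mapsto (v^\top h)\, v$. Then
\begin{equation*}
 f_A(x) \;=\; \phi(x)^\top A \phi(x) \;=\; (v^\top \phi(x))^2 \;=\; e^{f(x)},
\end{equation*}
and $\|A\|_\star = \|v\|_\hh^2 < \infty$, which combined with $\|\cdot\|_\star \trianglerighteq \|\cdot\|_\circ$ gives $\|A\|_\circ < \infty$, hence $e^f = f_A \in {\cal F}^\circ_\phi$. The main subtlety I anticipate lies in making sure ${\cal G}_\phi$ is endowed with a norm that makes it complete (so that absolute summability implies norm convergence); once this is handled cleanly via the RKHS identification and once the normalization $C=1$ is in place, the remainder is essentially the standard Banach-algebra argument that the exponential series converges.
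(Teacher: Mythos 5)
Your proof is correct and follows essentially the same route as the paper's: expand $e^{f/2}$ as its Taylor series, use the multiplication-algebra inequality to show the partial sums are Cauchy in ${\cal G}_\phi$, identify the limit with $\sqrt{g}$, and take the rank-one operator $s \otimes s$. The only (cosmetic) difference is that you rescale the norm so that $C=1$ and make the completeness argument explicit, whereas the paper keeps $C$ around and leaves the norm convergence implicit; the content is identical.
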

\begin{proof}
Let $g \in {\cal E}_\phi$ and take $f \in {\cal G}_\phi$ such that $g(x) = e^{f(x)}$ for all $x \in \X$. 
First we prove that ${\cal E}_\phi \subseteq {\cal F}^\circ_\phi$. With this goal, first we prove that $\sqrt{g} \in {\cal G}_\phi$ and then we construct a rank one positive operator such that $f_{A_g}(x) = g(x)$ for every $x \in \X$. We start noting that, given $f \in {\cal G}_\phi$ and $t \in \N$, $f^t$ defined by $f \cdot f^{t-1}$ for $t \in \N$ satisfies $f^t \in {\cal G}_\phi$, with $\|f^t\|_{{\cal G}_\phi} \leq C^t \|f\|^t_{{\cal G}_\phi}$, by repeated application of the \cref{eq:banach-algebra-ineq}. Moreover note that the function $s = \sum_{t \in \N} \frac{1}{2^t t!} f^t,$
satisfies $s \in {\cal G}_\phi$, indeed
\[\|s\|_{{\cal G}_\phi} \leq \sum_{t \in \N} \frac{1}{2^t t!} \|f^t\|_{{\cal G}_\phi} \leq \sum_{t \in \N} \frac{1}{2^t t!} C^t\|f\|^t_{{\cal G}_\phi} \leq  e^{C \|f\|_{{\cal G}_\phi}/2}.\]
Moreover $s$ satisfies $s(x) = \sqrt{g(x)}$ for all $x \in \X$, indeed for $x \in \X$ we have 
\[s(x) = \phi(x)^\top s = \sum_{t \in \N} \frac{1}{2^t t!} \phi(x)^\top f^t = \sum_{t \in \N} \frac{1}{2^t t!} f^t(x) = e^{f(x)/2} = \sqrt{g(x)}.\]
Now let $A_g = s \otimes s$, we have that $\|A_g\|_\circ \leq \|A_g\|_\star$ by assumption, and $\|A_g\|_\star = \|s\|^2_{{\cal G}_\phi} < \infty$, so the function $f_{A_g} \in {\cal F}^\circ_\phi$ and for any $x \in \X$
\[f_{A_g}(x) = \phi(x)^\top A_g \phi(x) = \phi(x)^\top (s \otimes s) \phi(x) = (\phi(x)^\top s)^2 = g(x).\]
Since for any $g \in {\cal E}_\phi$ there exists $f_{A_g} \in {\cal F}^\circ_\phi$ that is equal to $g$ on their domain of definition, we have that ${\cal E}_\phi \subseteq {\cal F}^\circ_\phi$.
\end{proof}

Now we are going to specialize the result above for Sobolev spaces.

\paragraph{Result for Sobolev spaces}

The result below is based on the general result in \cref{thm:general-version-banach-algebra}, however it is possible to do a proof based only on norm inequalities for compositions of functions in Sobolev space (see for example \cite{brezis2001gagliardo}). While more technical, this second approach would allow to derive also a more quantitative analysis on the norms of the functions in ${\cal G}_\phi$ and ${\cal F}^\circ_\phi$. We will leave this for a longer version of this work. 

\begin{theorem}\label{thm:general-model-richer-than-exponential}
 Let $\|\cdot\|_{\star} \trianglerighteq \|\cdot\|_\circ$. Let $\X \subseteq \R^d$ and $\X$ compact with locally Lipschitz boundary and let ${\cal G}_\phi = W^m_2(\X)$. Let $x_0 \in \X$. Then the following holds: 

(a)  ${\cal E}_\phi \subsetneq {\cal F}^\circ_{\phi}.$ ~~~ (b) The function $f_{x_0}(x) = e^{-\|x- x_0\|^{-2}} \in C^\infty(\X)$ satisfies $f_{x_0} \in {\cal F}^\circ_{\phi}$ and $f_{x_0} \notin {\cal E}_\phi$.
\end{theorem}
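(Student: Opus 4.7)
The plan is to deduce (a) and (b) jointly: the soft inclusion ${\cal E}_\phi \subseteq {\cal F}^\circ_\phi$ will come directly from \cref{thm:general-version-banach-algebra}, and strictness in (a) will be witnessed by the explicit function $f_{x_0}$ exhibited in (b). So the work reduces to two tasks: (i) verifying the multiplication-algebra hypothesis of \cref{def:mult-algebra} for ${\cal G}_\phi = W^m_2(\X)$, and (ii) proving both assertions in (b).

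For (i), I would appeal to the standard Sobolev multiplication theorem: when $\X \subset \R^d$ is compact with locally Lipschitz boundary and $m > d/2$, $W^m_2(\X)$ is a Banach algebra under pointwise product, the estimate $\|fg\|_{W^m_2} \leq C\|f\|_{W^m_2}\|g\|_{W^m_2}$ being obtained by Leibniz and the Sobolev embedding $W^m_2(\X)\hookrightarrow C^0(\X)$. The constant function $u \equiv 1$ lies in $W^m_2(\X)$ because $\X$ has finite Lebesgue measure. Invoking \cref{thm:general-version-banach-algebra} then gives ${\cal E}_\phi \subseteq {\cal F}^\circ_\phi$.

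For (ii), the natural candidate is the classical flat bump. Set $s(x) = \exp(-\tfrac{1}{2}\|x-x_0\|^{-2})$ for $x \neq x_0$ and $s(x_0) = 0$. A standard computation (all derivatives of $t \mapsto e^{-1/t}$ vanish at $0^+$) shows $s \in C^\infty(\R^d)$, hence $s \in C^\infty(\X) \subseteq W^m_2(\X) = {\cal G}_\phi$, so $s = \phi(\cdot)^\top w_0$ for some $w_0 \in \hh$. Taking $A_0 = w_0 \otimes w_0 \succeq 0$ yields
\[
f_{A_0}(x) = (\phi(x)^\top w_0)^2 = s(x)^2 = f_{x_0}(x),
\]
with $\|A_0\|_\circ \leq \|A_0\|_\star = \|w_0\|_\hh^2 < \infty$ by the assumption $\|\cdot\|_\star \trianglerighteq \|\cdot\|_\circ$. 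Hence $f_{x_0} \in {\cal F}^\circ_\phi$. Conversely, if $f_{x_0} = e^g$ with $g \in {\cal G}_\phi$, then on $\X \setminus \{x_0\}$ one has $g(x) = -\|x-x_0\|^{-2}$, which is unbounded near $x_0$; but the Sobolev embedding $W^m_2(\X) \hookrightarrow C^0(\X)$ forces every element of ${\cal G}_\phi$ to be continuous, hence bounded on the compact set $\X$, contradiction. This gives $f_{x_0} \notin {\cal E}_\phi$, which both completes (b) and yields strict inclusion in (a). The main subtlety is the implicit regularity requirement $m > d/2$, used simultaneously to make $W^m_2(\X)$ an algebra and to embed it into $C^0(\X)$; it is the hypothesis on which the whole argument turns, and under which (as the paper's remarks indicate) kernels such as the Abel kernel fall.
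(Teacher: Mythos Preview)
Your argument is correct and follows essentially the same route as the paper: verify that $W^m_2(\X)$ is a multiplication algebra, invoke \cref{thm:general-version-banach-algebra} for the inclusion, then take the square root $s$ of $f_{x_0}$, realize it as $\phi(\cdot)^\top w_0$, and use the rank-one operator $w_0\otimes w_0$; non-membership in ${\cal E}_\phi$ is argued via the Sobolev embedding into $C^0(\X)$. The only notable difference is that you treat $m>d/2$ as an implicit side assumption, whereas the paper \emph{derives} it in a preliminary step from the hypothesis ${\cal G}_\phi = W^m_2(\X)$: since ${\cal G}_\phi$ is by definition representable via a feature map, $W^m_2(\X)$ must be an RKHS, which forces $m>d/2$.
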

\begin{proof}
First we prove that ${\cal E}_\phi \subseteq {\cal F}^\circ_\phi$, via \cref{thm:general-version-banach-algebra}, then we.  To apply this result we need first to prove that ${\cal G}_\phi = W^m_2(\X)$ is a multiplication algebra when $W^m_2(\X)$ is a RKHS as in our case.

\noindent{\bf Step 1, $m > d/2$.} First note that ${\cal G}_\phi$ satisfies $m > d/2$ since $W^m_2(\X)$ admits a representation in terms of a separable Hilbert space $\hh$ and a feature map $\phi:\X  \to \hh$, i.e., it is a {\em reproducing Kernel Hilbert space} and for the same reason $\|\cdot\|_{{\cal G}_\phi}$ is equivalent to $\|\cdot\|_{W^m_2(\X)}$ \cite{wendland2004scattered}.

\noindent{\bf Step 2. ${\cal G}_\phi$ is a multiplication algebra. Applying \cref{thm:general-version-banach-algebra}.} Since ${\cal G}_\phi = W^m_2(\X)$ with $m > d/2$, then it is a multiplication algebra. This result is standard (e.g. see pag. 106 of \cite{adams2003sobolev} for $m \in \N$ and $\X = \R^d$) and we report it in \cref{lm:mult-algebra} in \cref{app:additional-proofs}.  Then we apply \cref{thm:general-version-banach-algebra} obtaining ${\cal E}_\phi \subseteq {\cal F}^\circ_\phi$.

\noindent{\bf Step 3. Proving that $f_{x_0} \in {\cal F}^\circ_\phi$ and not in ${\cal E}_\phi$.} 
By construction the function $v(x) = e^{-1/(2\|x - x_0\|^2)}$ is in $C^\infty(\X)$ and so in $W^m_2(\X)$ for any $m \geq 0$. Since ${\cal G}_\phi = W^m_2(\X)$, then $v \in {\cal G}_\phi$, i.e., there exists $w \in \hh$ such that $w^\top \phi(\cdot) = v(\cdot)$. Define $A_v = w \otimes w$, then 
\[f_{A_v}(x) = \phi(x)^\top A_v \phi(x) = (w^\top \phi(x))^2 = v^2(x) = f_{x_0}(x), \quad \forall x \in \X.\]
Then $f_{x_0} = f_{A_v}$ on $\X$, i.e., $f_{x_0} \in {\cal F}^\circ_\phi$. To conclude note that, $f_{x_0}$ does not belong to ${\cal E}_\phi$, since $x_0 \in \X$ and $f_{x_0}(x_0) = 0$, while for any $g \in {\cal E}_\phi$ we have $\inf_{x \in X} g(x) > 0$. Indeed, we have that for any $f \in {\cal G}_\phi$, $\|f\|_{C(\X)} = \sup_{x \in\X} |f(x)| < \infty$, since ${\cal G}_\phi = W^m_2(\X) \subset C(\X)$. Moreover, given $g \in {\cal G}_\phi$, and denoting by $f \in {\cal G}_\phi$ the function such that $g = e^f$, we have that $\inf_{x \in \X} g(x) \geq e^{-\|f\|_{C(\X)}} > 0$. Finally, since ${\cal E}_\phi \subseteq {\cal F}^\circ_\phi$, but there exists $f_{x_0} \in {\cal F}^\circ_\phi$ and not in ${\cal E}_\phi$, then ${\cal E}_\phi \subsetneq {\cal F}^\circ_\phi$.
\end{proof}

\paragraph{Proof of \cref{thm:model-richer-than-exponential}.} This result is a direct application of \cref{thm:general-model-richer-than-exponential}, since $\X = [-R,R]^d$, with $R \in (0,\infty)$ is a compact set with Lipschitz boundary.


\subsection{Proof of \cref{thm:rademacher}}
\label{app:proof-thm:rademacher}
We recall here the Rademacher complexity and prove \cref{thm:rademacher}. 
This latter theorem is obtained from the following \cref{thm:emp-rademacher} that bounds the {\em empirical Rademacher complexity} introduced below. First we recall that the function class ${\cal F}^\circ_{\phi, L}$ is defined as 
\[{\cal F}^\circ_{\phi, L} = \{f_A ~|~ A \succeq 0, \|A\|_{\circ} \leq L \},\]
for a given norm $\|\cdot\|_\circ$ on operators, a feature map $\phi:\X \to \hh$ and $L > 0$.
Now we define the empirical Rademacher complexity and the Rademacher complexity \cite{bartlett2002rademacher}.
Given $x_1,\dots, x_n \in \X$, the empirical Rademacher complexity for a class ${\cal F}$ of functions mapping $\X$ to $\R$, is defined as
\[\widehat{R}_n({\cal F}) = 2 \mathbb{E} \sup_{f \in {\cal F}} \left|\frac{1}{n} \sum_{i=1}^n \sigma_i f(x_i) \right|,\]
where $\sigma_i$ independent Rademacher random variables, i.e., $\sigma_i = -1$ with probability $1/2$ and $+1$ with probability $1/2$ and the expectation is on $\sigma_1, \dots, \sigma_n$.  Let $\rho$ be a probability distribution on $\X$ and $x_1, \dots, x_n$ sampled independently according to $\rho$. The {\em Rademacher complexity} $R_n({\cal F})$ is defined as
\[ R_n({\cal F}) = \mathbb{E} \widehat{R}_n({\cal F}),\]
where the last expectation is on $x_1,\dots,x_n$. In the following theorem we bound $\widehat{R}_n$.
\begin{theorem}\label{thm:emp-rademacher}
Let $\|\cdot\|_\circ \trianglerighteq \|\cdot\|_{F}$. 
Let $x_1,\dots,x_n \in \X$, $L \geq 0$. \[ \widehat{R}_n({\cal F}^\circ_{\phi,L}) \leq \frac{2L}{n} \sqrt{\sum_{i=1}^n \|\phi(x_i)\|^4}.\]
\end{theorem}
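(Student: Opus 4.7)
\textbf{Proof plan for \cref{thm:emp-rademacher}.} The plan is to exploit the linearity of the model $f_A$ in the parameter $A$: although $\mathcal{F}^\circ_{\phi,L}$ is a class of functions, the map $A \mapsto f_A$ is linear and $f_A(x_i) = \phi(x_i)^\top A \phi(x_i) = \langle A, \phi(x_i)\phi(x_i)^\top \rangle_F$, where $\langle \cdot,\cdot \rangle_F$ denotes the Hilbert--Schmidt (Frobenius) inner product on $\mathcal{S}(\mathcal{H})$. This rewriting converts the Rademacher supremum into a supremum of a linear functional over a Frobenius ball, which is the standard setup for kernel-type Rademacher bounds.

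First I would write, for fixed Rademacher signs $\sigma \in \{\pm 1\}^n$,
\[
\sum_{i=1}^n \sigma_i f_A(x_i) \;=\; \Big\langle A,\; M_\sigma \Big\rangle_F, \qquad M_\sigma := \sum_{i=1}^n \sigma_i \,\phi(x_i)\phi(x_i)^\top \in \mathcal{S}(\mathcal{H}).
\]
By Cauchy--Schwarz in the Frobenius inner product and the assumption $\|\cdot\|_\circ \trianglerighteq \|\cdot\|_F$ (so $\|A\|_F \leq \|A\|_\circ \leq L$ for $A$ in the class), one obtains
\[
\sup_{A \succeq 0,\, \|A\|_\circ \leq L} \Big| \langle A, M_\sigma \rangle_F \Big| \;\leq\; L\, \|M_\sigma\|_F.
\]
Taking expectation over $\sigma$ and multiplying by $2/n$ gives $\widehat{R}_n(\mathcal{F}^\circ_{\phi,L}) \leq \tfrac{2L}{n} \,\mathbb{E}_\sigma \|M_\sigma\|_F$.

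Next I would bound $\mathbb{E}_\sigma \|M_\sigma\|_F$ by Jensen's inequality through the second moment:
\[
\mathbb{E}_\sigma \|M_\sigma\|_F \;\leq\; \Big(\mathbb{E}_\sigma \|M_\sigma\|_F^2\Big)^{1/2}.
\]
Expanding the squared Frobenius norm and using that $\langle \phi(x_i)\phi(x_i)^\top, \phi(x_j)\phi(x_j)^\top \rangle_F = (\phi(x_i)^\top\phi(x_j))^2 = k(x_i,x_j)^2$, the cross terms $i \neq j$ vanish under the expectation because $\mathbb{E}[\sigma_i \sigma_j] = 0$, leaving
\[
\mathbb{E}_\sigma \|M_\sigma\|_F^2 \;=\; \sum_{i=1}^n k(x_i,x_i)^2 \;=\; \sum_{i=1}^n \|\phi(x_i)\|^4.
\]
Combining the two displays yields the claimed bound $\widehat{R}_n(\mathcal{F}^\circ_{\phi,L}) \leq \tfrac{2L}{n}\sqrt{\sum_{i=1}^n \|\phi(x_i)\|^4}$.

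There is no real obstacle here; the only mild point to check is that the Frobenius/Hilbert--Schmidt machinery (Cauchy--Schwarz, the identity $\langle \phi\phi^\top, \psi\psi^\top\rangle_F = \langle \phi,\psi\rangle^2$) is justified in the possibly infinite-dimensional setting, which it is because the rank-one operators $\phi(x_i)\phi(x_i)^\top$ are Hilbert--Schmidt. The move from \cref{thm:emp-rademacher} to \cref{thm:rademacher} is then a one-line consequence: under $\sup_x \|\phi(x)\| \leq c$, each summand is at most $c^4$, so $\sqrt{\sum_i \|\phi(x_i)\|^4} \leq c^2 \sqrt{n}$, and taking expectation over the sample gives $R_n(\mathcal{F}^\circ_{\phi,L}) \leq 2Lc^2/\sqrt{n}$.
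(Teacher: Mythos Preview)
Your proposal is correct and follows essentially the same approach as the paper: rewrite $f_A(x_i)$ as a Frobenius/trace pairing $\langle A, \phi(x_i)\phi(x_i)^\top\rangle_F$, enlarge the constraint set via $\|\cdot\|_\circ \trianglerighteq \|\cdot\|_F$, apply Cauchy--Schwarz, then bound $\mathbb{E}_\sigma\|M_\sigma\|_F$ by Jensen and use $\mathbb{E}[\sigma_i\sigma_j]=0$ for $i\neq j$. The paper phrases things via $\tr(A\cdot)$ rather than $\langle\cdot,\cdot\rangle_F$, but the argument is the same step for step.
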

\begin{proof}
Given $f_A \in {\cal F}^\circ_{\phi, L}$, since $\|\cdot\|_\circ$ is stronger or equivalent to Hilbert-Schmidt norm, we have that $\|A\|_{F} \leq \|A\|_\circ \leq L$.  Since $A$ is bounded and $\phi(\cdot) \in \hh$, by linearity of the trace we have $f_A(x) = \phi(x)^\top A \phi(x) = \tr(A ~ \phi(x) \otimes \phi(x))$ for any $x \in \X$. Then, by linearity of the trace 
\eqal{
{\hat R}_n({\cal F}^\circ_{\phi, L}) & = 2 \mathbb{E} \sup_{f \in {\cal F}^\circ_{\phi,L}} \left|\frac{1}{n} \sum_{i=1}^n \sigma_i f(x_i) \right|  = 2 \mathbb{E} \sup_{A \succeq 0, \|A\|_\circ \leq L} \left|\frac{1}{n} \sum_{i=1}^n \sigma_i \phi(x_i)^\top A \phi(x_i) \right| \\
& = 2 \mathbb{E} \sup_{A \succeq 0, \|A\|_\circ \leq L} \left|\frac{1}{n} \sum_{i=1}^n \sigma_i \tr(A ~ (\phi(x_i) \otimes \phi(x_i))) \right| \\
& = 2 \mathbb{E} \sup_{A \succeq 0, \|A\|_\circ \leq L} \left|\tr\left(A~ \left(\frac{1}{n} \sum_{i=1}^n \sigma_i \phi(x_i) \otimes \phi(x_i) \right)\right) \right|
}
Now since $\|\cdot\|_\circ$ is stronger or equivalent to $\|\cdot\|_{F}$ this means that 
$\{A \in {\cal S}(\hh)~|~\|A\|_{\circ} \leq L\} \subseteq \{A \in {\cal S}(\hh)~|~\|A\|_{F} \leq L\}$, then 
\eqal{
2 \mathbb{E} \sup_{A \succeq 0, \|A\|_\circ \leq L} & \left|\tr\left(A~ \left(\frac{1}{n} \sum_{i=1}^n \sigma_i \phi(x_i) \otimes \phi(x_i) \right)\right) \right|   \\
& \leq 2 \mathbb{E} \sup_{A \succeq 0, \|A\|_{F} \leq L} \left|\tr\left(A~ \left(\frac{1}{n} \sum_{i=1}^n \sigma_i \phi(x_i) \otimes \phi(x_i) \right)\right) \right|\\
& \leq 2 \mathbb{E} \sup_{A \succeq 0, \|A\|_F \leq L} \|A\|_{F}~ \Big\|\frac{1}{n} \sum_{i=1}^n \sigma_i \phi(x_i) \otimes \phi(x_i) \Big\|_F\\
& \leq 2 L ~ \mathbb{E} ~ \Big\|\frac{1}{n} \sum_{i=1}^n \sigma_i \phi(x_i) \otimes \phi(x_i) \Big\|_F.
}
To conclude denote by $\zeta_i$ the random variable $\sigma_i \phi(x_i) \otimes \phi(x_i)$. 
Then
\eqals{
\mathbb{E} ~ \Big\|\tfrac{1}{n} \sum_{i=1}^n \sigma_i \phi(x_i) \otimes \phi(x_i) \Big\|^2_F & = \mathbb{E} ~ \|\tfrac{1}{n} \sum_{i=1}^n \zeta_i \|_F \\
& = \mathbb{E} ~ \sqrt{\tr\left(\Big(\tfrac{1}{n} \sum_{i=1}^n \zeta_i\Big)^*\Big(\tfrac{1}{n} \sum_{i=1}^n \zeta_i\Big)\right)} = \mathbb{E} ~ \sqrt{\tr\Big(\tfrac{1}{n^2} \sum_{i,j=1}^n \zeta_i \zeta_j \Big)}.
}
By Jensen inequality, the concavity of the square root, and the linearity of the trace
\eqals{
\mathbb{E} ~ \sqrt{\tr\Big(\tfrac{1}{n^2} \sum_{i,j=1}^n \zeta_i \zeta_j \Big)} \leq
 \sqrt{\mathbb{E} ~\tr\Big(\tfrac{1}{n^2} \sum_{i,j=1}^n \zeta_i \zeta_j \Big)} =  \sqrt{\tfrac{1}{n^2} \sum_{i,j=1}^n ~\tr(\mathbb{E} \zeta_i \zeta_j)}.
}
Now note that for $i \in \{1,\dots,n\}$, we have $\mathbb{E}_{\sigma_i}\zeta_i = 0$, moreover $\mathbb{E} \sigma_i^2 = \|\phi(x_i)\|^2 \phi(x_i) \otimes \phi(x_i)$. Finally, given $x_1,\dots, x_n$, we have that $\zeta_i$ is independent from $\zeta_j$, when $i \neq j$. Then when $i \neq j$ we have
$\tr(\mathbb{E} \zeta_i \zeta_j) = \tr((\mathbb{E}_{\sigma_i}\zeta_i)(\mathbb{E}_{\sigma_j} \zeta_j)) = 0$. When $i=j$ we have $\tr(\mathbb{E} \zeta_i \zeta_j) = \tr(\mathbb{E} \zeta_i^2) = \|\phi(x_i)\|^4$. So
\eqals{
\frac{1}{n^2} \sum_{i,j=1}^n ~\tr(\mathbb{E} \zeta_i \zeta_j) & = \frac{1}{n^2} \sum_{i=1}^n \|\phi(x_i)\|^4.
}
From which we obtain the desired result.
\end{proof}

Now we are ready to bound $R_n$ as follows
\paragraph{Proof \cref{thm:rademacher}.}
The proof is obtained by applying \cref{thm:emp-rademacher} and considering that $\|\phi(x)\|$ is uniformly bounded by $c$ on $\X$. \qed{}


\subsection{Proof of \cref{prop:model-satisfies-P4}}
\label{app:proof-prop:model-satisfies-P4}

See \cref{app:notation} for the basic technical assumptions on $\X$, $\hh$ and $\phi$. In particular $\X$ is Polish and $\phi$ is continuous and uniformly bounded by a constant $c$. 

\bpr[Proof of \cref{prop:model-satisfies-P4}]
In the following we will consider integrability and measurability with respect to a measure $dx$ on $\X$. In particular $p : \cx \rightarrow \R$ is an integrable function on $\cx$ with respect to the measure $dx$. Now define $\Psi(x) = p(x) \phi(x) \phi(x)^{\top}$. We have that $\Psi$ is measurable, since $\phi$ and $p$ are measurable.
Since $p$ is integrable, $p$ is finite almost everywhere, and hence $\Psi(x) = p(x) \phi(x) \phi(x)^{\top}$ is  defined and trace class almost everywhere, and satisfies 
\[\|\Psi(x)\|_\star = |p(x)|~\|\phi(x)\|^2_{\hh} \leq |p(x)| c^2 \text{ almost everywhere}.\]
Since the space of trace class operators is separable, this shows that $\Psi$ is Bochner integrable and thus that the operator $W_p = \int_{x \in \X}{\phi(x) \phi(x)^\top p(x) dx}$ is well defined and trace class, with trace norm bounded by $\kappa^2 \|p\|_{L^1(\X)}$. Moreover, by linearity of the integral, for any $A \in {\cal S}(\hh)$, 
\[\tr(A W_p) = \int_{\cx}{\tr(A \phi(x)\phi(x)^{\top}) p(x)dx} = \int_{\cx}{f_A(x)p(x)dx},\]
where the last equality follows from the definition of $f_A$ and the fact that 
\[\tr(A \phi(x)\phi(x)^\top) = \tr(\phi(x)^{\top} A \phi(x)) = \phi(x)^{\top} A \phi(x) = f_A(x).\]
\epr

\begin{remark}[Extension to more general linear functionals.]
Note that the linearity of the model in $A$ allows to generalize very easily the construction above to any linear functional that we want to apply to the model. This is especially true when the model has a finite dimensional representation as \cref{eq:char-optimal-solution-representer}, i.e. $f_{\bf B} = \sum_{ij=1}^n {\bf B}_{i,j} k(x,x_i) k(x,x_j)$ with ${\bf B} \succeq 0$. In this case, given a linear functional ${\cal L}: C(\X) \to \R$, we have
\[{\cal L}(f_{\bf B}) = \sum_{i,j=1}^n {\bf B}_{i,j} {\cal L}(k(x,x_i) k(x,x_j)) = \tr({\bf B} {\bf W}_{\cal L}),\]
where $({\bf W}_{\cal L})_{i,j} = {\cal L}(k(x,x_i) k(x,x_j))$ for $i,j = 1,\dots,n$.
\end{remark}

\subsection{Proof of \cref{prop:convex-cone}}
\label{app:proof-prop:convex-cone}
In \cref{app:proof-prop:convex-cone} and  \cref{app:proof-thm:representer-convex-cone}, we will use the following notations. 

Let $h,p \in \N$ and  $\hh,\hh_1,\hh_2$ be separable Hilbert spaces.
\begin{itemize}
\item $A = (A_s)_{1 \leq s \leq p} \in {\cal S}(\hh)^p$ will denote a family of self-adjoint operators;
\item Given a feature map $\phi : \X \rightarrow \hh$ and $A = (A_s)_{ 1 \leq s \leq p} \in {\cal S}(\hh)^p$ we will define the function $f_A$ as follows 
\[\forall x \in \X,~ f_A(x) = (f_{A_s}(x))_{ 1 \leq s \leq p}  = \left(\phi(x)^{\top} A_s \phi(x)\right)_{1 \leq s \leq p} \in \R^p,\qquad f_A : \X \rightarrow \R^p\]
\item Given a matrix $C \in \R^{p\times h}$ which corresponds to a list of column vectors $(c^t)_{1 \leq t \leq h} \in (\R^p)^h$, we define 
\[K^C(\hh) := \left\{A = (A_s)_{1 \leq s \leq p} \in {\cal S}(\hh)^p~:~\sum_{s=1}^p{c^{t}_s A_s} \succeq 0,~1 \leq t \leq h\right\}\]
\item For any $A = (A_s)_{1 \leq s \leq p} \in {\cal S}(\hh_1)^p$ and any bounded linear operator $L : \hh_1 \rightarrow \hh_2$, $LAL^*$ will be a slight abuse of notation to denote the family $(L A_s L^*)_{1 \leq s \leq p} \in {\cal S}(\hh_2)^p$. 
\end{itemize}

\begin{proof}[Proof of \cref{prop:convex-cone}] Let $p,h \in \N$ and 
let $C \in \R^{p \times h}$ be a matrix representing the column vectors $c^1...c^h$.\\
Let $\cy$ be the polyhedral cone defined by $C$, i.e. $\cy = \left\{ y \in \R^p~:~C^{\top} y \geq 0\right\}$. \\
Let $\hh$ be a separable Hilbert space and $\phi :\X \rightarrow \hh$ be a fixed feature map. \\
With our previous notations, our goal is to prove that for any $A = (A_s)_{1 \leq s \leq p} \in {\cal S}(\hh)^p$, 
\[A \in K^C(\hh) \implies \forall x \in \X,~ f_A(x) \in \cy.\]
Assume $A \in K^C(\hh)$ and let $x \in \X$. By definition, $f_A(x) = (\phi(x)^{\top}A_s \phi(x))_{1 \leq s \leq p} \in \R^p$. Hence, 
\[C^{\top} f_A(x) = \left(\sum_{s = 1}^p{c^t_s \phi(x)^{\top}A_s \phi(x)}\right)_{1 \leq t \leq h} = \left(\phi(x)^{\top}\left(\sum_{s = 1}^p{c^t_s A_s }\right)\phi(x)\right)_{1 \leq t \leq h} .\]
Since $A \in K^C(\hh)$, for all $1 \leq t \leq h$, it holds $\sum_{s = 1}^p{c^t_s A_s } \succeq 0$. In particular, this implies $\phi(x)^{\top}\sum_{s = 1}^p{c^t_s A_s }  \phi(x) \geq 0$ for all $1 \leq t \leq h$. Hence 
\[C^{\top} f_A(x)  \geq 0 \implies f_A(x) \in \cy.\]
\end{proof}

\subsection{Proof of \cref{thm:representer-convex-cone}}
\label{app:proof-thm:representer-convex-cone}

Using the notations of the previous section, the goal of this section is to solve a problem of the form 

\eqal{\tag{\ref{eq:prototypical-problem-convex-cones}}
\inf_{ A \in K^C(\hh)}{L(f_A(x_1),...,f_A(x_n)) + \Omega(A)},}

for given $p,h \in \N$,~ $C \in \R^{p \times h}$, separable Hilbert space $\hh$, feature map $\phi : \X \rightarrow \hh$, regularizer $\Omega$, loss function $L:\R^n \rightarrow \R \cup {+\infty}$ and $x_1,...,x_n \in \X$.

We start by stating the form of the regularizers we will be using.

\ba \label{ass:omegap}
Let $p \in \N$. For any separable Hilbert space $\hh$ and any $A = (A_s)_{1 \leq s \leq p} \in {\cal S}(\hh)^p$, $\Omega$ is of the form
\[\Omega(A) = \sum_{s = 1}^p{\Omega_s(A_s)},\qquad\Omega_s(A_s) = \lambda_{s,1} \|A_s\|_\star + \frac{\lambda_{s,2}}{2}\|A_s\|_{F}^2,\]
where $\la_{s,1},\la_{s,2} \geq 0$ and $\la_{s,1} + \la_{s,2} > 0$.
\ea

\blm[Properties of $\Omega$]\label{lm:prop_omegap} Let $\Omega$ be a regularizer such that $\Omega$ satisfies \cref{ass:omegap}. Then $\Omega$ satisfies the following properties. 
\begin{enumerate}[label = (\roman*)]
    \item For any separable Hilbert spaces $\hh_1,\hh_2$ and any linear isometry $O : \hh_1 \rightarrow \hh_2$, i.e., such that $O^* O = I_{\hh_1}$, it holds 
    \[ \forall A   \in {\cal S}(\hh_1)^p,~ \Omega(O A O^*) = \Omega(A).\]
    \item For any separable Hilbert space $\hh$ and any orthogonal projection $\Pi \in {\cal S}(\hh_1)$, i.e. satisfying $\Pi = \Pi^*,~ \Pi^2 = \Pi$, it holds
    \[ \forall A \in {\cal S}(\hh)^p,~ \Omega(\Pi A \Pi) \leq \Omega(A).\]
    \item For any finite dimensional Hilbert space $\hh_n$, taking $||A_s||_{op}$ to be the operator norm on $\hh_n$,
    \[\Omega \text{ is continuous},\qquad \Omega(A) \underset{\sup_{s}||A_s||_{op} \rightarrow +\infty}{\longrightarrow} + \infty\]
\end{enumerate}
\elm 

\bpr Note that since
\[\Omega(A) = \sum_{s = 1}^p{\Omega_s(A_s)},\qquad \Omega_s(A_s) = \lambda_{s,1} \|A_s\|_\star + \frac{\lambda_{s,2}}{2}\|A_s\|_{F}^2,\]
where $\la_{s,1},\la_{s,2} \geq 0$ and $\la_{s,1} + \la_{s,2} > 0$,
it is actually sufficient to prove the following result.\\\
Let $\lambda_1,\lambda_2 \geq 0$ and assume $\lambda_1+\lambda_2 > 0$. Let for any $A \in {\cal S}(\hh)$, $\Omega(A) = \lambda_1 \|A\|_\star + \frac{\lambda_2}{2}\|A\|_F^2$. Then the following hold:
\begin{enumerate}[label = (\roman*)]
    \item For any separable Hilbert spaces $\hh_1,\hh_2$ and any linear isometry $O : \hh_1 \rightarrow \hh_2$, i.e., such that $O^* O = I_{\hh_1}$, it holds 
    \[ \forall A   \in {\cal S}(\hh_1)^p,~ \Omega(O A O^*) = \Omega(A).\]
    \item For any separable Hilbert space $\hh$ and any orthogonal projection $\Pi \in {\cal S}(\hh_1)$, i.e. satisfying $\Pi = \Pi^*,~ \Pi^2 = \Pi$, it holds
    \[ \forall A \in {\cal S}(\hh)^p,~ \Omega(\Pi A \Pi) \leq \Omega(A).\]
    \item For any finite dimensional Hilbert space $\hh_n$, 
    \[\Omega \text{ is continuous},\qquad \Omega(A) \underset{\|A\|_{op} \rightarrow +\infty}{\longrightarrow} + \infty,\]
    where we denote by $\|\cdot\|_{op}$ the operatorial norm.
\end{enumerate}
\paragraph{1.} (i)
has already been proven in \cref{lm:main_properties_ass}. 
\paragraph{2.} Let us prove (ii).
Let $\hh$ be a separable Hilbert space, $\Pi$ an orthogonal projection on $\hh$ and $A \in {\cal S}(\hh)$.\\

Using the fact that $\|B\|_{\star} = \sup_{\|C\|_{op} \leq 1}{\tr(BC)}$, where $\|C\|_{op}$ denotes the operator norm on ${\cal S}(\hh)$, we have  by property of the trace
\[\|\Pi A \Pi\|_{\star}  = \sup_{\|C\|_{op} \leq 1}{\tr( \Pi A\Pi C )} = \sup_{\|C\|_{op} \leq 1}{\tr(  A(\Pi C  \Pi))}.\]
Now since $\|\Pi C \Pi\|_{op} \leq \|C\|_{op} \leq 1$, it holds $\sup_{\|C\|_{op} \leq 1}{\tr(  A(\Pi C  \Pi))} \leq \sup_{\|C\|_{op} \leq 1}{\tr(  A  C  )} = \|A\|_{\star}$. Thus:
\[\|\Pi A \Pi\|_{\star} \leq \|A\|_\star.\]

Moreover, since $\Pi \preceq I$, it holds $ \Pi A\Pi A \Pi \preceq \Pi A^2 \Pi$. Hence,
\eqals{
\| \Pi A \Pi \|_F^2 = \tr(\Pi A \Pi \Pi A \Pi) \leq \tr(\Pi A^2 \Pi)  
}
Now using the fact that $\tr(\Pi A^2 \Pi) = \tr(A \Pi A)$, we can once again use the fact that $\Pi \preceq I$ to show that $A \Pi A \preceq A^2$ and hence $\tr(A \Pi A) \leq \tr(A^2)$. Putting things together, we have shown 
\[\tr(\Pi A \Pi \Pi A \Pi) \leq \tr(A^2) \implies \|\Pi A \Pi\|_F^2 \leq \|A\|_F^2.\]
Thus, by summing the inequalities, $\Omega(\Pi A \Pi) \leq \Omega(A)$. 

\paragraph{3.} The proof of (iii) is straightforward. The continuity of $\Omega$ comes from the fact that it is a norm on any finite dimensional Hilbert space. Moreover, since $\lambda_1 > 0$ or  $\lambda_2 > 0$ , $\Omega$ goes to infinity. 
\epr

\br 
As in the previous sections, the fact that $\Omega$ satisfies these three properties is actually sufficient to complete the proof.
\er

Recall that $\hh_n$ is the finite dimensional subset of $\hh$ spanned by the $\phi(x_i)$. Recall that $\Pi_n$ is the orthogonal projection on $\hh_n$, i.e.
    \[\Pi_n \in {\cal S}(\hh),~\Pi_n^2 = \Pi_n,~ \range(\Pi_n) =\hh_n.\]

 Define $K^C_{n}(\hh)$ to be the following subspace of $K^C(\hh)$ : 
 \[K^C_{n}(\hh) :=  \left\{\Pi_n A \Pi_n~:~ A  \in  K^C(\hh)\right\}.\]
 It is straightforward to show that $K^C_{n}(\hh) \subset K^C(\hh)$ since projecting left and right preserves the linear inequalities.

\bp \label{prp:prelim_thm6}
Let $L$ be a lower semi-continuous function which is bounded below, and assume $\Omega$ satisfies \cref{ass:omegap}. Then \cref{eq:prototypical-problem-convex-cones}  has a solution $A^*$ which is in $K^C_n(\hh)$. 
\ep

\bpr In this proof, denote with $J$ the function defined by 
\[\forall A \in {\cal S}(\hh)^p,~ J(A) := L(f_A(x_1),...,f_A(x_n)) + \Omega(A).\]
Our goal is to prove that the problem $\inf_{A \in K^C(\hh)}{J(A)}$ has a solution which is in $K_n^C(\hh)$, i.e. of the form  $\Pi_n A \Pi_n$ for some $A \in K_n^C(\hh)$.\\

\paragraph{1.}

Let us start by fixing $A \in K^C(\hh)$.\\
First note that since $\Pi_n$ is the orthogonal projection on $\lspan(\phi(x_i))_{1\leq i \leq n}$, in particular $\Pi_n \phi(x_i) = \phi(x_i)$ for all $1 \leq i \leq n$. Thus, for any $1 \leq i \leq n$, 
    \[f_A(x_i) = (\phi(x_i)^{\top} A_s \phi(x_i))_{ 1 \leq s \leq p} = (\phi(x_i)^{\top}\Pi_n A_s \Pi_n \phi(x_i))_{1\leq s \leq p} = f_{\Pi_n A \Pi_n}(x_i).\]
    Here, the first and last equalities come from the definition of $f_A$ and $f_{\Pi_n A \Pi_n}$. Thus, 
    \[J(A) =L(f_{\Pi_n A \Pi_n}(x_1),...,f_{\Pi_n A \Pi_n}(x_n)) + \Omega(A).\]
Now since $\Omega$ satisfies \cref{ass:omegap}, by the second point of \cref{lm:prop_omegap}, it holds $\Omega(\Pi_n A \Pi_n) \leq \Omega(A)$, hence 
    \[J(\Pi_n A \Pi_n) \leq J(A).\]

This last inequality combined with the fact that $K_n^C(\hh) = \left\{\Pi_n A \Pi_n~:~ A  \in  K^C(\hh)\right\} \subset K^C(\hh)$ show that 
\eqals{
\textstyle  \inf_{A \in K_n^C(\hh)} J(A) =  \inf_{K^C(\hh)} J(A).
}
\paragraph{2.}
Let us now show that $\inf_{A \in K^C_n(\hh)} J(A) $ has a solution. Let us exclude the case where $J  = +\infty$, in which case $A = 0$ can be taken to be a solution.\\\

Let $V_n$ be the injection $V_n : \hh_n \hookrightarrow \hh$. Note that $V_n V_n^* = \Pi_n$ and $V_n^* V_n = I_{\hh_n}$. These simple facts easily show that 

\[K_n^C(\hh) = V_n K^C(\hh_n) V_n^* = \left\{V_n \tilde{A} V_n^*~:~ \tilde{A} \in K_n^C(\hh_n)\right\}.\]

Thus, our goal is to show that
$\inf_{\tilde{A} \in K^C_n(\hh_n)} J(V_n A V_n^*)$ has a solution.\\

By the first point of \cref{lm:prop_omegap}, since $V_n^* V_n = I_{\hh_n}$, it holds 
\[\forall \tilde{A} \in {\cal S}(\hh_n),~\Omega(V_n \tilde{A} V^*_n) = \Omega(\tilde{A}) \implies J(V_n \tilde{A} V_n^*) = L(f_{V_n \tilde{A} V_n^*}(x_1),...,f_{V_n \tilde{A} V_n^*}(x_n)) + \Omega(\tilde{A}).\]

Let $\tilde{A}_0 \in K^C(\hh_n)$ be a point such that $J_0 :=J(V_n\tilde{A}_0 V_n^*) < \infty$. Let $c_0$ be a lower bound for $L$. By the third point of \cref{lm:prop_omegap}, there exists a radius $R_0$ such that for all $\tilde{A} \in {\cal S}(\hh_n)$,
\[ \|\tilde{A}\|_F > R_0  \implies \Omega(\tilde{A}) > J_0 - c_0.\]
Since $c_0$ is a lower bound for $L$, this implies
\eqals{
\textstyle  \inf_{\tilde{A} \in K^C(\hh_n)}J(V_n \tilde{A} V^*_n) =  \inf_{\tilde{A} \in K^C(\hh_n),~\|\tilde{A}\|_F \leq R_0}J(V_n \tilde{A} V^*_n).
}
Now since $L$ is lower semi-continuous, $\Omega$ is continuous by the last point of \cref{lm:prop_omegap}, and $\tilde{A} \mapsto (f_{V_n \tilde{A} V_n^*}(x_i))_{1 \leq i \leq n}$ is linear hence continuous, the mapping $A \mapsto  J(V_n \tilde{A} V_n^*)$ is lower semi-continuous. Hence, it reaches its minimum on any non empty compact set. Since $\hh_n$ is finite dimensional, the set $\left\{\tilde{A} \in K^C(\hh_n)~:~\|\tilde{A}\|_F \leq R_0\right\}$ is compact (closed and bounded) and non empty (it contains $\tilde{A}_0$), and hence there exists $\tilde{A}_* \in K^C_n(\hh)$ such that $J(V_n \tilde{A}_* V_n^*) =   \inf_{\tilde{A} \in K^C(\hh_n),~\|\tilde{A}\|_F \leq R_0}J(V_n \tilde{A} V^*_n)$. 
Going back up the previous equalities, this shows that $A_* := V_n \tilde{A}_* V_n^* \in K_n^C(\hh)$ and $J(A_*) = \inf_{A \succeq 0}J(A)$.

\epr

\blm \label{lm:thm6}
The set $K^C_n(\hh)$ can be represented in the following way
\[K^C_n(\hh) = \left\{\left(S_n^* {\bf B}_s S_n\right)_{1 \leq s \leq p} \in {\cal S}(\hh)^p~:~ {\bf B} = ({\bf B}_s)_{1 \leq s \leq p} \in K^C(\R^n) \right\}\]
In particular, for any $A \in K^C_n(\hh)$, there exists $p$ symmetric matrices $\mathbf{B} = ({\bf B}_s)_{1 \leq s \leq p} \in K^C(\R^n)$ such that 
\[\forall x \in \X,~ f_A(x) = \left(\sum_{1 \leq i,j \leq n}{[\mathbf{B}_s]_{i,j} k(x_i,x)k(x_j,x)}\right)_{1 \leq s \leq p}.\]
\elm 

\bpr The proof is exactly analoguous to the proof of \cref{lm:thm1}.
\epr 

We will prove the following \cref{thm:6bis} which statement is that of  \cref{thm:representer-convex-cone} with more precise assumptions.

\bt \label{thm:6bis} Let $L$ be lower semi-continuous and bounded below, and $\Omega$ satisfying \cref{ass:omegap}. Then \cref{eq:problem-for-representer} has a solution of the form
\eqals{
\textstyle f_{*}(x) = \left(\sum_{i,j=1}^n [\mathbf{B}_s]_{i,j} k(x,x_i)k(x,x_j)\right)_{1 \leq s \leq p}, 
}
for some family  $\mathbf{B} = ({\bf B}_s)_{1 \leq s \leq p} \in K^C(\R^n)$. Moreover, if $L$ is convex, this solution is unique. 
\et 

\begin{proof}[Proof of \cref{thm:6bis}]
The proof is completely analoguous to that of \cref{thm:1bis}, combining \cref{lm:thm6} and \cref{prp:prelim_thm6}.
\end{proof}

\section{Additional proofs}
\label{app:additional-proofs}

\begin{lemma}
\label{lm:mult-algebra}
Let $\X \subset \R^d$, $d \in \N$, be a compact set with Lipschitz boundary. Let $m > d/2$. Then $W^m_2(\X)$ is a multiplication algebra (see \cref{def:mult-algebra}).
\end{lemma}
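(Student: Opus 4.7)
The plan is to reduce to the algebra property on all of $\R^d$ via an extension operator, then handle $\R^d$ with Fourier analysis and Peetre's inequality.

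First, since $\X$ is compact, the constant function $u \equiv 1$ lies in $W^m_2(\X)$: indeed $\|u\|_{L^2(\X)}^2 = |\X| < \infty$ and all weak derivatives of order $\geq 1$ vanish. So only the multiplicative inequality \cref{eq:banach-algebra-ineq} needs to be established.

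Because $\X$ has Lipschitz boundary, a standard result (see e.g. Stein, \emph{Singular Integrals and Differentiability Properties of Functions}, Chapter VI, or \cite{adams2003sobolev}) yields a bounded linear extension operator
\[E : W^m_2(\X) \to W^m_2(\R^d), \qquad \|Ef\|_{W^m_2(\R^d)} \leq C_E \|f\|_{W^m_2(\X)},\]
such that $(Ef)|_\X = f$ almost everywhere. For $f,g \in W^m_2(\X)$, the product $Ef \cdot Eg$ is then an extension of $f \cdot g$ to $\R^d$, so
\[\|f \cdot g\|_{W^m_2(\X)} \leq \|(Ef)(Eg)\|_{W^m_2(\R^d)}.\]
It therefore suffices to prove the existence of a constant $C'$ with $\|F G\|_{W^m_2(\R^d)} \leq C' \|F\|_{W^m_2(\R^d)} \|G\|_{W^m_2(\R^d)}$ for all $F,G \in W^m_2(\R^d)$; combining with the bound on $E$ yields the lemma with $C = C' C_E^2$.

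For the $\R^d$ step, I would use the Fourier characterization $\|F\|_{W^m_2(\R^d)}^2 \asymp \int_{\R^d} (1+|\xi|^2)^m |\widehat F(\xi)|^2\, d\xi$, together with $\widehat{FG} = \widehat F * \widehat G$. Peetre's inequality
\[(1+|\xi|^2)^{m/2} \leq 2^{m/2}\bigl[(1+|\xi-\eta|^2)^{m/2} + (1+|\eta|^2)^{m/2}\bigr]\]
splits the weighted convolution into two symmetric pieces; each is estimated by Cauchy--Schwarz, using crucially that $(1+|\eta|^2)^{-m} \in L^1(\R^d)$ when $2m > d$, i.e.\ exactly under the hypothesis $m > d/2$. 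This yields the desired bilinear bound on $\R^d$.

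The main (only) subtlety is that the statement allows non-integer $m$, so one cannot simply use Leibniz's formula plus Sobolev embedding and H\"older; the Fourier/Peetre route handles integer and fractional $m$ uniformly. Assuming $m \in \N$ one could alternatively expand $D^\alpha(FG)$ with Leibniz and bound each $\|D^\beta F \cdot D^{\alpha-\beta} G\|_{L^2}$ by H\"older combined with the Sobolev embedding $W^m_2 \hookrightarrow L^\infty$ (valid since $m > d/2$) and Gagliardo--Nirenberg interpolation for the intermediate orders, but the Fourier approach is cleaner and fully general.
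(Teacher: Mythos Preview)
Your proof is correct and follows essentially the same strategy as the paper: reduce to $\R^d$ via a bounded extension operator (valid for Lipschitz domains), invoke the multiplication algebra property of $W^m_2(\R^d)$ for $m>d/2$, and note that the constant function lies in $W^m_2(\X)$ by compactness. The only difference is cosmetic: the paper handles the $\R^d$ step by citing \cite{adams2003sobolev} for integer $m$ and Triebel's identification $W^m_2(\R^d)=F^m_{2,2}(\R^d)$ for fractional $m$, whereas you sketch a direct Fourier/Peetre argument that treats both cases uniformly---this is a standard and valid route, and arguably more self-contained.
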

\begin{proof}
When $m \in \N$ and $m > d/2$, then $W^m_2(\R^d)$ is a multiplication algebra \cite{adams2003sobolev}. When $m \notin \N$, by Eq.~2.69 pag. 138 of \cite{triebel2006function} we have that $F^m_{2,2}(\R^d)$ is a multiplication algebra when $m > d/2$, where $F^m_{2,2}$ is the Triebel-Lizorkin space of smoothness $m$ and order $2,2$ and corresponds to $W^m_2(\R^d)$, i.e., $F^m_{2,2}(\R^d) = W^m_2(\R^d)$ \cite{triebel2006function}. 

So far we have that $m > d/2$ implies that $W^m_2(\R^d)$ is a multiplication algebra, now we extend this result to $W^m_2(\X)$. Note that since $\X$ is compact and with Lipschitz boundary, for any $f \in W^m_2(\X)$ there exists an {\em extension} $\tilde{f} \in W^m_2(\R^d)$ such that $\tilde{f}|_\X = f$ and $\|\tilde{f}\|_{W^m_2(\R^d)} \leq C_1 \|f\|_{W^m_2(\X)}$ with $C_1$ depending only on $m,d,\X$ (see Thm.~5.24 pag. 154 for $m \in \N$ and 7.69 when $m \notin \N$ pag 256 \cite{adams2003sobolev}). Then, since for any $f:\R^d \to \R$, by construction we have $\|f|_X\|_{W^m(\X)} \leq \|f\|_{W^m(\R^d)}$ \cite{adams2003sobolev}. Then, for any $f, g \in W^m_2(\X)$, denoting by $\tilde{f}, \tilde{g}$ the extensions of $f, g$, we have
\eqal{
\|f \cdot g\|_{W^m_2(\X)} &= \|\tilde{f}|_\X \cdot \tilde{g}|_\X\|_{W^m_2(\X)} \leq \|\tilde{f} \cdot \tilde{g}\|_{W^m_2(\R^d)} \\
& \leq  C\|\tilde{f}\|_{W^m_2(\R^d)} \|\tilde{g}\|_{W^m_2(\R^d)} \leq C C_1^2 \|f\|_{W^m_2(\X)} \|g\|_{W^m_2(\X)}.
}
To conclude $u:\X \to \R$ that maps $x \mapsto 1$ has bounded norm corresponding to $\|u\|^2_{W^m_2(\X)} = \int_\X dx$.
So $W^m_2(\X)$ when $m > d/2$ and $\X$ is compact with Lipschitz boundary is a multiplication algebra.
\end{proof}

\section{Additional details on the other models}
\label{app:details-other-models}

Recall that the goal is to solve a problem of the form \cref{eq:prototypical-problem}, i.e. 
\[\min_{f \in {\cal F}} L(f(x_1),...,f(x_n)) + \Omega(f).\]

In this section, $\phi : \X \rightarrow \hh$ will always denote a feature map, $k : \X \times \X \rightarrow \R$ a positive semi definite kernel on $\X$ ($k(x,x^{\prime}) = \phi(x)^{\top} \phi(x^{\prime})$ if $k$ is the positive semi-definite kernel associated to $\phi$). Given a kernel $k$, ${\bf K} \in \R^{n \times n}$ will always denote the positive semi-definite kernel matrix with coefficients ${\bf K}_{i,j} = k(x_i,x_j),~1 \leq i,j \leq n$.

\noindent{\bf Generalized linear models (GLM).} Consider generalized linear models of the form, 
$ f_w(x) = \psi(w^\top \phi(x))$. Assume the regularizer is of the form $\Omega(f_w) = \frac{\lambda}{2} \|w\|^2$. Using the representer theorem \cite{cheney2009course}, any solution to \cref{eq:prototypical-problem} is of the form $w = \sum_{i=1}^n{\alpha_i \phi(x_i)}$ and thus \cref{eq:prototypical-problem} becomes the following finite dimensional problem in $\alpha$:

\eqal{\label{protoglm}
\min_{\alpha \in \R^n}{L(\psi({\bf K}\alpha))}
 + \frac{\lambda}{2}\alpha^\top {\bf K}\alpha.
}

 In the case where one wishes to learn a density function with respect to a basis measure $\nu$, a common choice of model is functions of the form 
 \[
 p_{\alpha}(x) = \frac{\exp(g(x))}{\int_{\tilde{x} \in \X}{\exp(g(\tilde{x}))d\nu(\tilde{x})}},\qquad g(x) = \sum_{i=1}^n{\alpha_i k(x_i,x)}.
 \]

 where $k$ is a positive semi-definite kernel on $\X$. The prototypical problem one solves to find the best $p_\alpha$ is 
 
 \eqal{\label{protodensity}
 \min_{\alpha \in \R^n}{L(p_\alpha(x_1),...,p_{\alpha}(x_n))} + \frac{\lambda}{2}\alpha^\top {\bf K}\alpha.
 }

 In the specific case where the loss function is the negative log likelihood $L(z_1,...,z_n) = \frac{1}{n}\sum_{i=1}^n{-\log(z_i)}$, it can be shown that \cref{protodensity} is convex in $\alpha$.\\\

 In practice, we solve \cref{protoglm} by applying standard gradient descent with restarts, as the problem is non convex.
 
 To solve \cref{protodensity}, since the problem is convex, the algorithm is guaranteed to converge. However, since we can only estimate the quantity $\int_{\tilde{x} \in \X}{\exp(g(\tilde{x}))d\nu(\tilde{x})}$; we do so by taking a measure $\nu$ from which we can sample. However, this becomes intractable as the dimension grows, as the experiments on density estimation will put into light.

\noindent{\bf Non-negative coefficients models (NCM).} Recall the definition of an NCM. It represent non-negative functions as
$f_{\alpha}(x) = \sum_{i=1}^n \alpha_i k(x,x_i)$, with $\alpha_1,\dots \alpha_n \geq 0$,
given a kernel $k(x,x') \geq 0$ for any $x,x' \in \X$. In this case, the prototypical problem is of the form : 

\eqal{\label{proto_NW}
\min_{\alpha \geq 0}{L({\bf K}\alpha) + \frac{\lambda}{2} \alpha^{\top} {\bf K} \alpha.
}}

If we are performing density estimation with respect to the measure $\nu$, one wishes to impose  $\int_{\X}{f_{\alpha}(x)d\nu(x)} = 1$, which can be seen as an affine constraint over $\alpha$, since

\[\int_{\X}{f_{\alpha}(x)d\nu(x)} = {\bf u}^\top \alpha,\qquad {\bf u} = \left(\int_{\X}{k(x,x_i)d\nu(x)}\right)_{1 \leq i \leq n} \in \R^n.\]
In this case, the prototypical problem will be of the form

\eqal{\label{proto_NWDE}
\min_{\substack{\alpha \geq 0 \\ {\bf u}^{\top} \alpha = 1}}{L({\bf K}\alpha) + \frac{\lambda}{2} \alpha^{\top} {\bf K} \alpha.
}}

If $L$ is a convex smooth function, both problems \cref{proto_NW} and \cref{proto_NWDE} can be solved using projected gradient descent, since the projections on the set $\alpha \geq 0$ and the simplex
 $$\set{\alpha \in \R^n~:~ \alpha \geq 0,~  {\bf u}^{\top} \alpha = 1}$$ can be computed in closed form.

In the main paper, we mention that NCM models do not satisfy {\bf P2} i.e. that they cannot approximate any function arbitrarily well. We implement  \cref{ex:NCM} in the following way. Let $g (x) = e^{-\|x\|^2/2}$.  Take $k(x,x^{\prime}) = e^{-\|x-x^{\prime}\|^2}$, $n$ points $(x_1,...,x_n)$ taken uniformly in the interval $[-5,5]$. To find the function $f_\alpha$ which best approximates $g$, we perform least squares regression, i.e. solve the prototypical problem \cref{proto_NW} with the square loss function 
\[L(y) = \frac{1}{2n}\sum_{i=1}^n{|y_i - g(x_i)|^2}.\] 
We perform cross validation to select the value of $\lambda$ for each value of $n$. In \cref{fig:NCM}, we show the obtained function $f_{\alpha}$ for $n = 100,1000,10000$. This clearly illustrates that with this model, we cannot approximate $g$ in a good way, no matter how many points $n$ we have.

\begin{figure}[t!]
    \centering
    \includegraphics[width=0.33\textwidth]{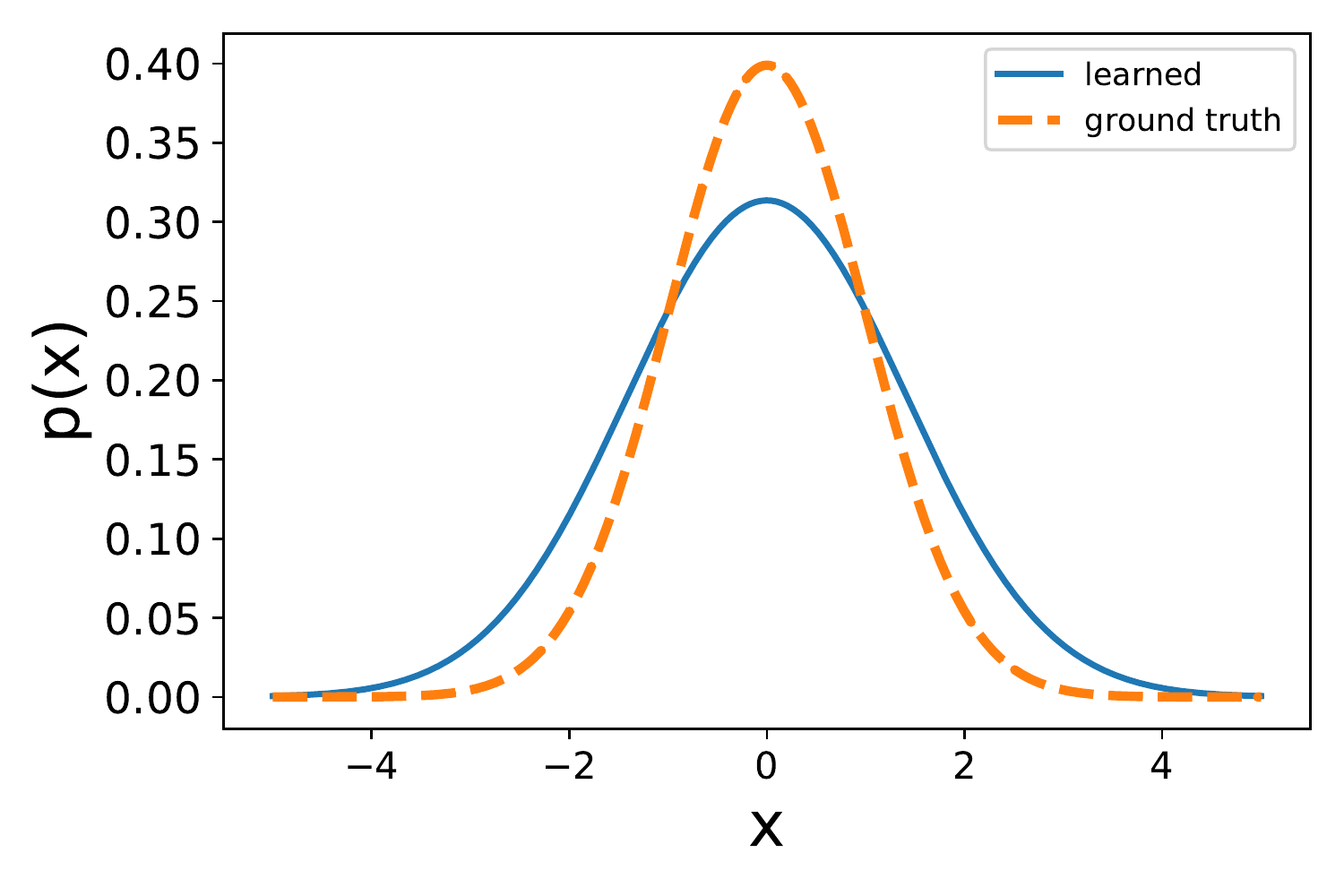}~
    \includegraphics[width=0.33\textwidth]{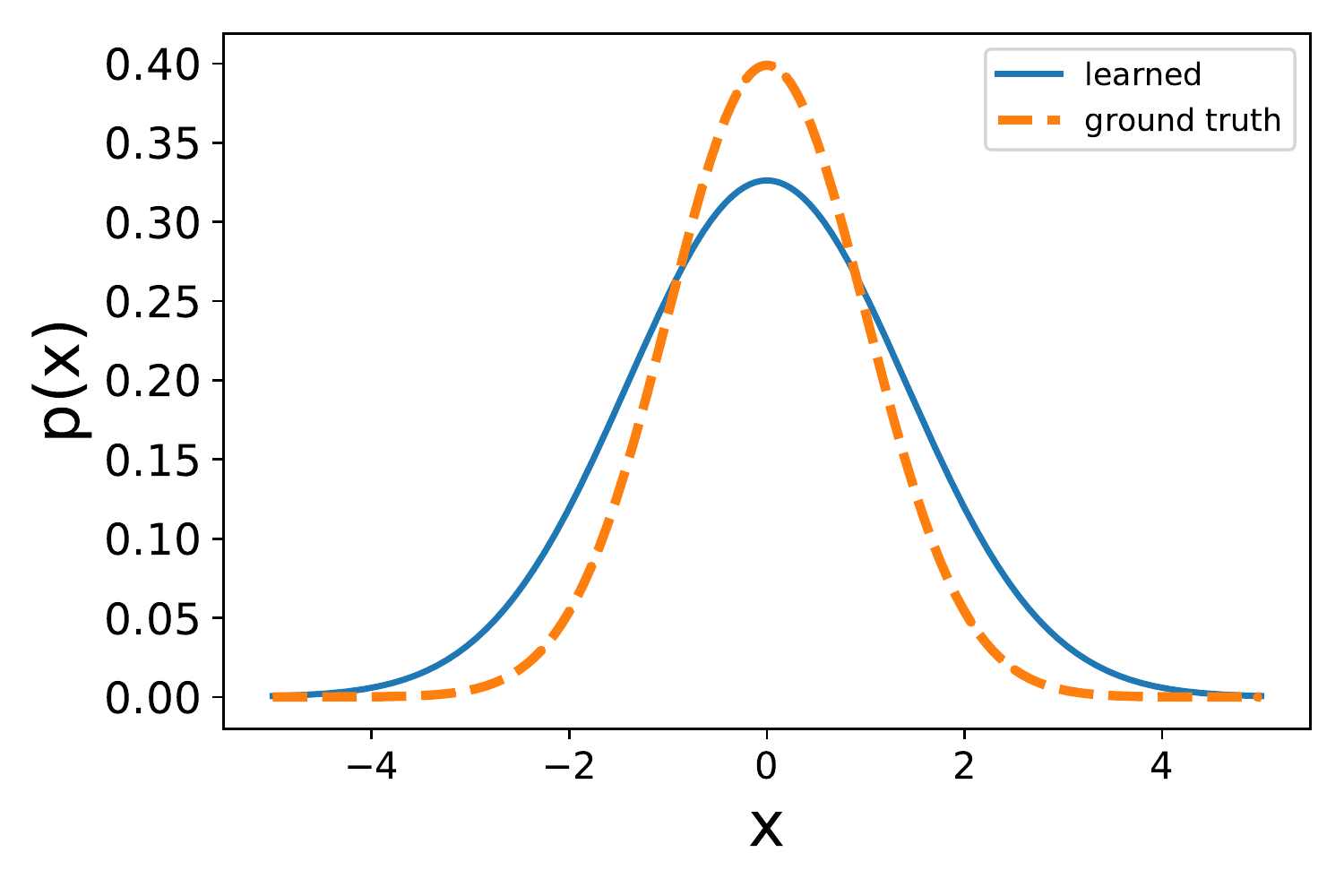}~
    \includegraphics[width=0.33\textwidth]{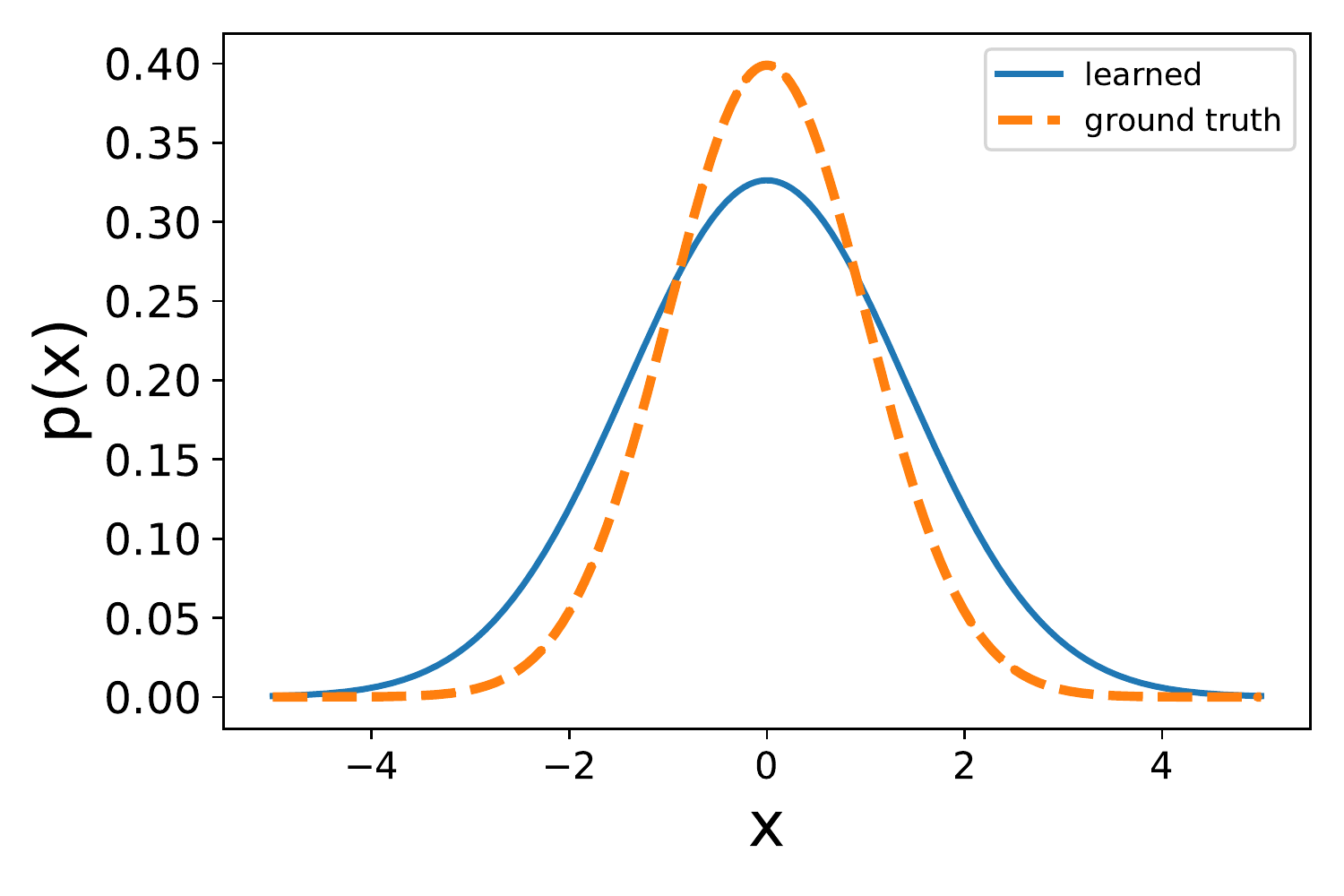}
    \caption{%
    Best approximation of $g$ using NCM with \emph{(left)} $n = 100$  \emph{(center)} $n = 1000$  \emph{(right)} $n = 10000$ points.}
    \label{fig:NCM}
\end{figure}

\noindent{\bf Partially non-negative linear models (PNM).} Consider partially non negative models of the form $f_w(x) = w^\top \phi(x)$, with $w \in \{ w \in \hh ~|~ w^\top \phi(x_1) \geq 0,\dots,w^\top \phi(x_n) \geq 0 \}$ (that is we impose $f_w(x_i) \geq 0$). Take $\Omega$ to be of the form $\frac{\lambda}{2}\|w\|^2$ in \cref{eq:prototypical-problem}. Using the representer theorem in \cite{cheney2009course}, we can show that there is a solution of this problem of the form $f_\alpha = \sum_{i=1}^n{\alpha_i k(x,x_i)}$, leading to the following optimization problem in $\alpha$ to recover the optimal solution: 
\eqal{
\label{proto_PNM}
\min_{{\bf K}\alpha \geq 0}{L({\bf K}\alpha) + \frac{\lambda}{2}\alpha^\top {\bf K} \alpha}
}
If we want to impose that the resulting $f_{\alpha}$ sums to one for a given measure $\nu$ on $\X$, we proceed as in \cref{proto_NWDE} and solve
\eqal{\label{proto_PNMDE}
\min_{\substack{{\bf K}\alpha \geq 0 \\ {\bf u}^{\top} \alpha = 1}}{L({\bf K}\alpha) + \frac{\lambda}{2} \alpha^{\top} {\bf K} \alpha.
}}
However, there is no guarantee that the resulting $f_\alpha$ will be a density, as will be made clear in the next section on density estimation.

In the experiments, we solve \cref{proto_PNM} and \cref{proto_PNMDE} in the following way. We first compute a cholesky factor of ${\bf K}$ : $  {\bf K}= {\bf V}^{\top} {\bf V}$. Changing variables by setting ${\bf V} \beta = \alpha$, the objective functions become strongly convex in $\beta$. We then compute the dual of these problems and apply a proximal algorithm like FISTA, since the proximal operator of $L$ is always known in our experiments.

\section{Additional details on the experiments}
\label{app:details-experiments}

In this section, we provide additional details on the experiments. The code will be available online. Recall that we consider four different models for functions with non-negative outputs : GLM, PNM, NCM and our model.

\paragraph{Kernels.}

All the models we consider depend on certain positive semi definite kernels $k$. In all the experiments, we have taken the kernels to be Gaussian kernels with width $\sigma$:
\[\forall x,x^{\prime} \in \R^d,~ k(x,x^{\prime}) = \exp\left(-\frac{\|x-x^{\prime}\|^2}{2\sigma^2}\right).\]

\paragraph{Regularizers.}

For GLM, PNM and NCM, the regularizer for the underlying linear models are always of the form $\frac{\lambda}{2}\|w\|^2$ where $w$ is the parameter of the linear model, which translates to $\frac{\lambda}{2}\alpha^\top {\bf K} \alpha$  where the $\alpha$ are the coefficients of the finite dimensional representation. 
For our model, we always take the regularizer to be of the form $\lambda \left(\|A\|_{\star} + 0.05 \|A\|_F^2\right)$.

\paragraph{Parameter selection.}

In all experiments except for the one on density estimation in the main paper (in which we fix $\sigma = 1$ and select $\lambda$), we select the parameters $\sigma$ of the kernels involved as well as the parameters $\lambda$ for the regularizers using $K$ fold cross validation with $K = 7$. This means that once the data set has been generated, we randomly divide it into two sets : the training set containing $70\%$ of the data and the test set containing $30\%$ of the data. We then train our model for the given $\sigma,\lambda$ and report the performance on the test set. We repeat this operation $K = 7$ times and consider the mean performance on the test set to be a good indicator of the performance of our model for a given set of parameters. We then select the best parameters by doing a grid search. The code for this cross-validation will be available online.

\paragraph{Formulations and algorithms.}

The formulations of our three problems : density estimation, regression with Gaussian heteroscedastic errors, and multiple quantile regression, have been expressed in the main paper in a generic way involving functions with unconstrained outputs, and functions with outputs constrained to be non negative and sometimes summing to one. We always model functions with unconstrained outputs with a linear model with gaussian kernel, and model the functions with constrained outputs with the four models for non-negative functions we consider: ours, PNM, GLM and NCM.

In practice, we implement the methods PNM, GLM and NCM as explained in 
\cref{app:details-other-models}. In particular, we use FISTA for PNM, and our model, dualizing the equality constraints for density estimation.
This relies on the fact that the proximal operators of the log likelihood, the objective function for heteroscedastic regression as well as the pinball loss can be computed in closed form, and that the regularization is smooth in the right coordinates.

\paragraph{Details on the experiments of the main text.}

Here, we add a few precisions on the toy distributions we have used to sample data and the number of sampled used when not specified in the main text.

\begin{itemize}
    \item For heteroscedastic regression, the data was generated as the toy data in section 5 of \cite{le2005heteroscedastic}, with $n = 80$ points.
    \item For quantile regression, the data points $(x_i,y_i)$ were generated according to the following distribution for $(X,Y)$ : $X \sim \frac{1}{2}U(0,1/3) + \frac{1}{2}U(2/3,1) $ and $Y|x \sim {\cal N}(0,\sigma(x))$ where 
    \[\sigma(x) = \begin{cases}-x + 1/3 & \text{ for } 0 \leq x \leq 1/3 \\
    x-2/3 &\text{ for } 2/3 \leq x \leq 1\\
    0 & \text{ otherwise }.
    \end{cases}.\]
    Here, $U$ stands for the uniform distribution.  Moreover, in order to perform the experiments in the main paper, we have used $500$ sample points.
\end{itemize}

\paragraph{Density estimation in dimension $10$ with $n = 1000$.}

In this paragraph, we consider the following experiment. Let $d = 10$, $X \in \R^d$ be a random variable distributed as a mixture of Gaussians :
\[X \sim \frac{1}{2}{\cal N}(-2e_1,1/\sqrt{2\pi} I_d) + \frac{1}{2}{\cal N}(2e_1,1/\sqrt{2\pi} I_d)\]
where $e_1$ is the first vector of the canonical basis of $\R^d$.

Let $n = 1000$ and let $(x_1,...,x_n)$ be $n$ iid samples of $X$.  We perform the four different methods, cross validating both the regularization parameter $\la$ and the kernel parameter $\sigma$ at each time. We learn the density in the form 
$$p(x) = f(x)\nu(x),\qquad \nu \text{ is the density associated with } {\cal{N}}(0,5 I_d).$$

We then use our models for densities to compute the best $f$ in its class using the negative log-likelihood as a loss function. It is crucial that we can sample from $\nu$ in order to approximate the integral in the case of GLMs.

In order to visualize the results of the different algorithms in \cref{fig:experimentdensity}, we compute the learnt distribution $p$, and then sample randomly $n_0 = 500$ points from a uniform distribution on the box centered at $0$ and of width $5$ in order to explore regions where the density is close to zero, $n_0$ points sampled from the true distribution of the data, in order to explore points where the density is representative, and $n_0$ points on the line $[-4,4] \times \set{0}^{d-1}$ where the density is at its highest. We then project onto the first coordinate, i.e. given a point $x = (x_i)_{1 \leq i \leq d}$ and the associated predicted density $p(x)$, we plot the point $(x_1, p(x))$. Note that for readability, we have used the same scale for our model and the PNM, and a smaller scale for  the two others since the learnt density is much flatter.  

\begin{figure}[t!]
    \centering
    {\tiny ${}~~~\quad{}$ PNM ${}~\qquad\qquad\qquad\qquad\qquad\qquad\qquad{}$ Our model ${}\qquad\qquad\qquad\qquad\qquad\qquad\quad~~~{}$ NCM ${}\qquad\qquad\qquad\qquad\qquad\qquad\quad~~~{}$ GLM}\\
    \hspace{-0.4cm}\vspace{-0.1cm}
    \includegraphics[width=0.248\textwidth]{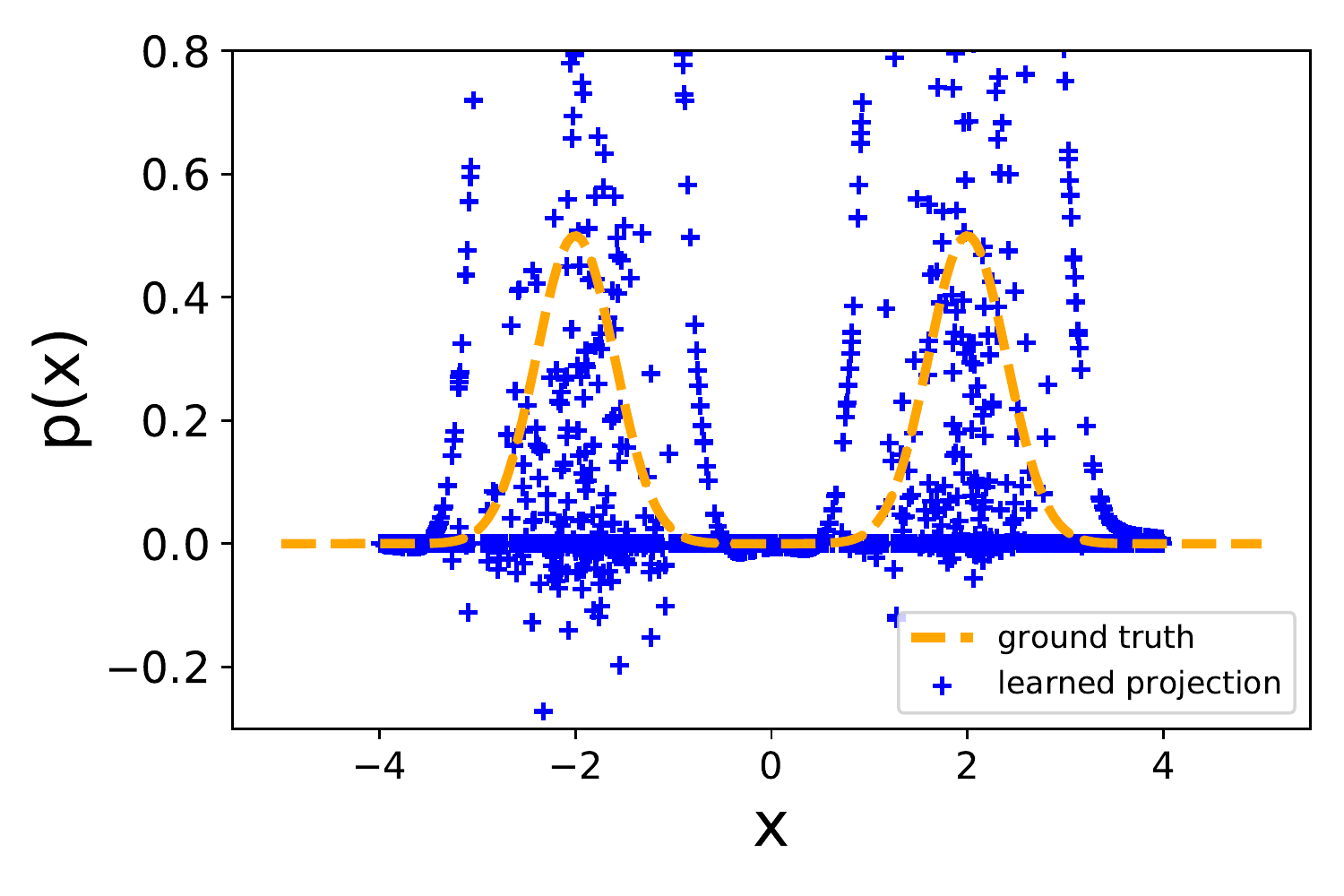}~
    \includegraphics[width=0.248\textwidth]{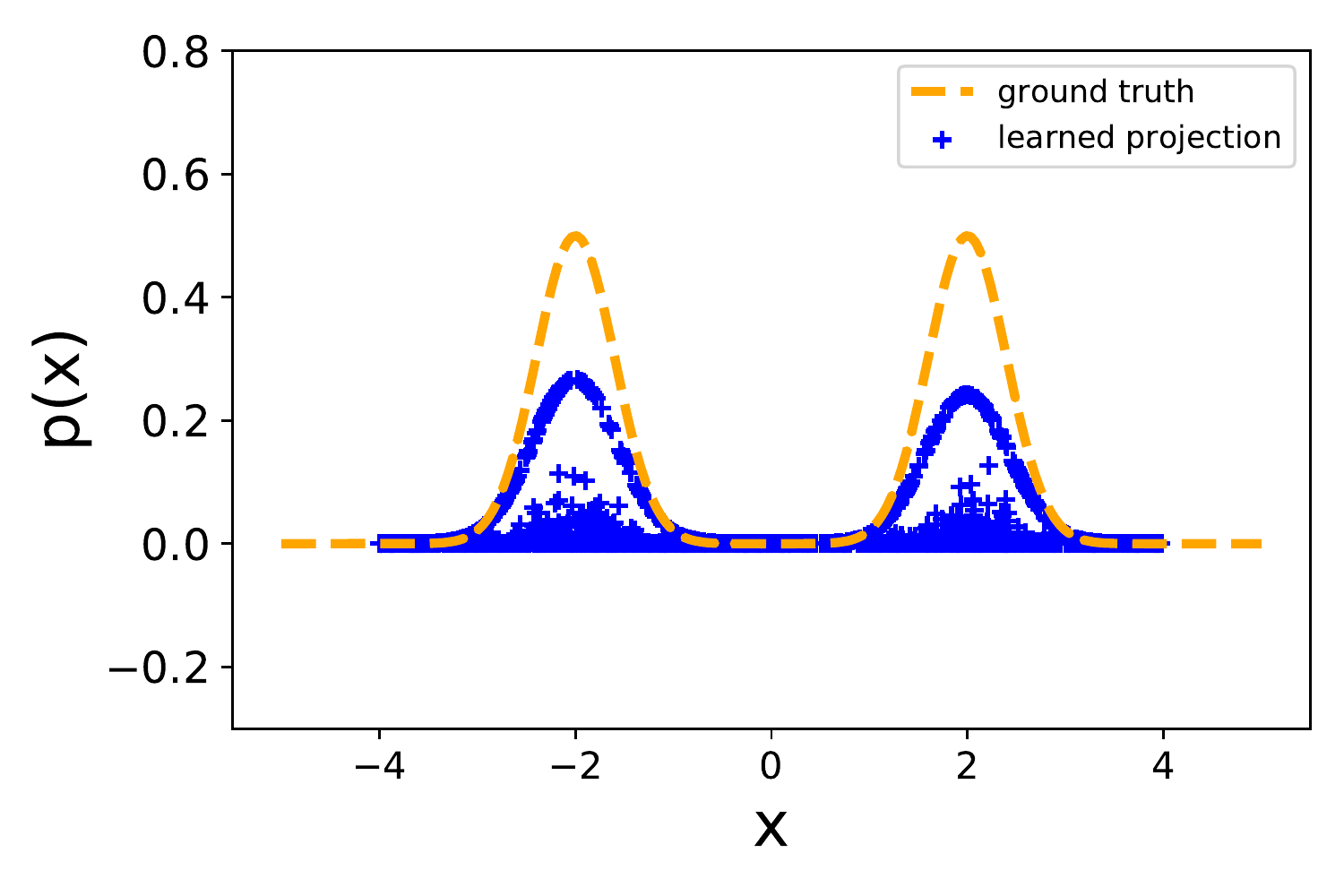}~
    \includegraphics[width=0.248\textwidth]{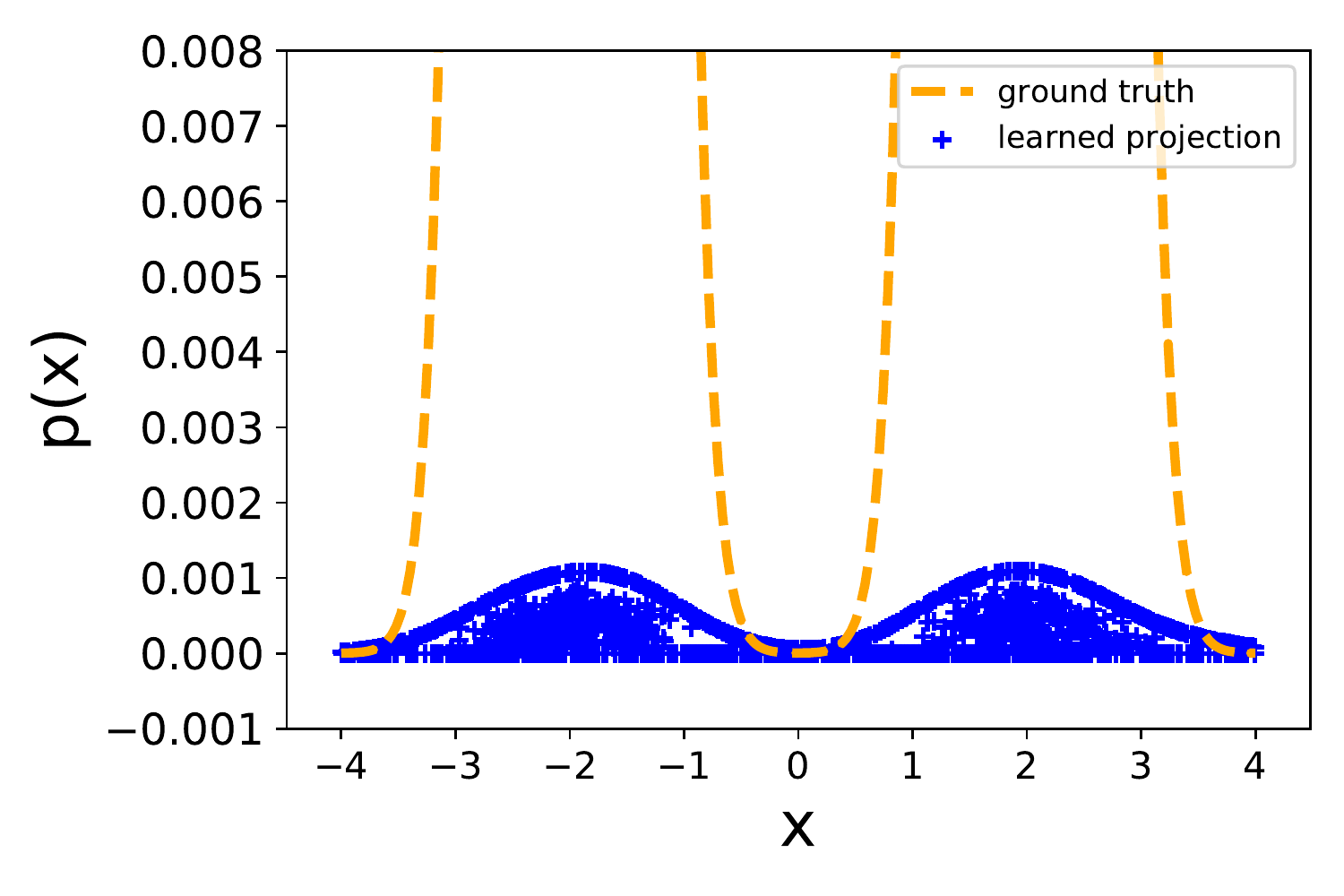}~
    \includegraphics[width=0.248\textwidth]{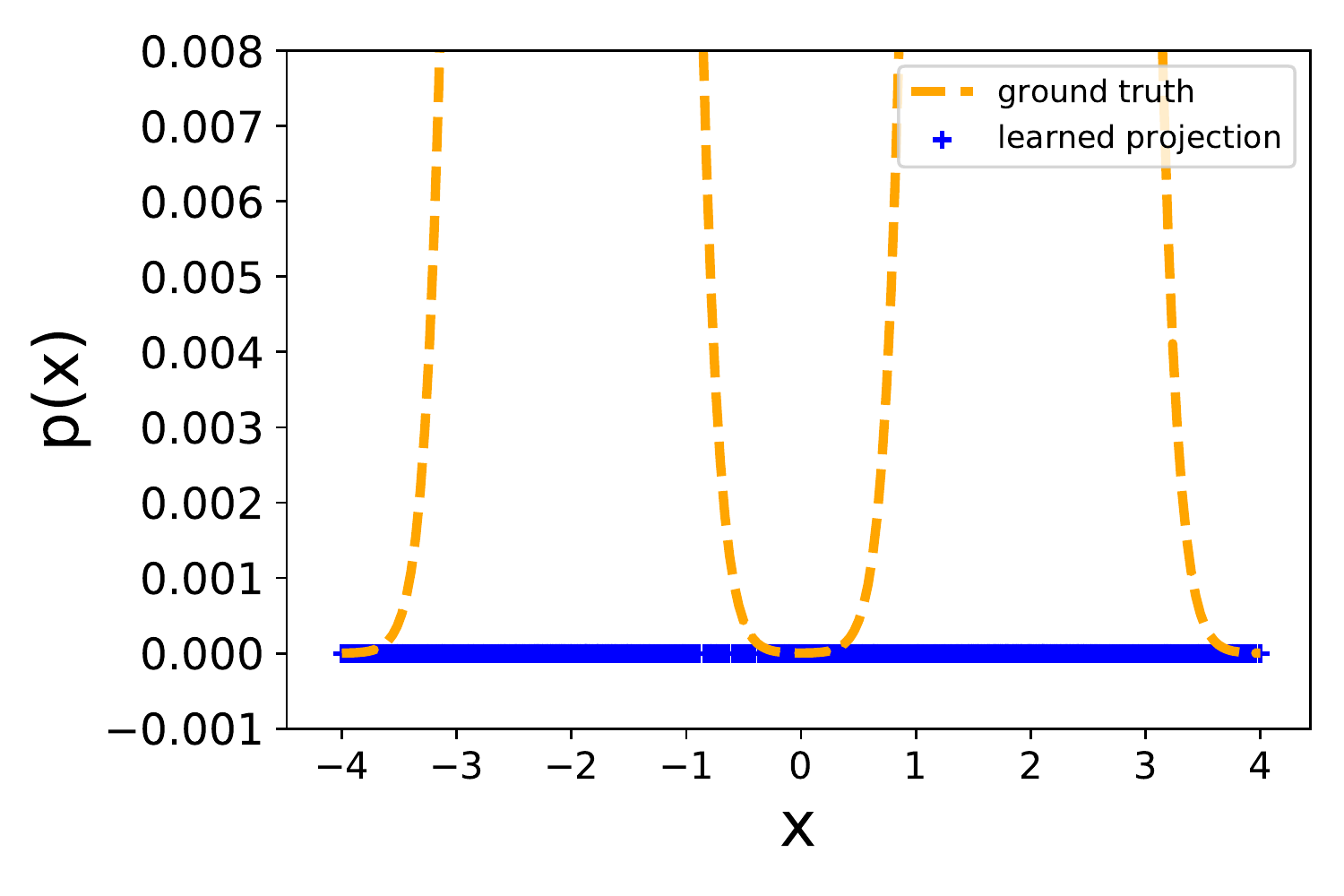}\\
    \vspace*{-.3cm}
    \caption{%
     Representation of the densities learned by the different models.}
    \label{fig:experimentdensity}
\end{figure}

Let us now analyse the results in \cref{fig:experimentdensity}. Note that in terms of performance, i.e. log likelihood on the test set, the first two models (PNM and our model) are quite close and are better than the two others.

\begin{itemize}
    \item \textbf{PNM}. As in $d=1$ we see that for $d=10$ the problems of non-negativity for PNM are exacerbated, making it not suitable to learn a probability distribution. Indeed there are low density regions where the optimization problem pushes the model to be negative. Since by constraint we have $\int fd \nu = 1$, the volume of the negative regions is used to push up the function in the regions with high density. So $\int |f| d\nu \gg 1$, while it should be $\int |f| d\nu = 1$. This is confirmed by the behavior of the cross validation.    

    \item \textbf{Our model} Our model seems to perform reasonably well. 
    \item \textbf{NCM}. This problem is particularly difficult for NCM.
    Indeed, as the width of the kernel decreases, the model is unable to learn since it overfits in the direction $e_1$ and it would require way more points than $n=1000$. However, as soon as the width of the kernel is good for $e_1$, the learnt distribution becomes too heavy tailed in the direction orthogonal to $e_1$. 
    \item \textbf{GLM}. It is interesting to note that GLM completely fails, because the measure $\nu$ which we take as a reference measure has a support which has only double variance compared to $p$, but in $10$ dimensions it corresponds to a support with way larger volume compared to the one of the target distribution. In particular, the estimation of the integral, which was possible in $d=1$ with $10000$ i.i.d. points from $\nu$, in $10$ dimensions becomes almost impossible (it would require way more sampling points). Note that we sample the points from $\nu$ to simulate the real-world situation where $p$ is a measure from which it is difficult to sample from, while $\nu$ is an simple measure to sample from which contains the support of $p$. Further experiments show that if one takes the target distribution to sample, one obtains a good model, which reassures us in the fact that this is not a coding error but a real phenomenon.
\end{itemize}

\section{Relationship to~\cite{bagnell2015learning}}
\label{app:kernelsos}
As mentioned in the main paper, the model in \cref{eq:model-non-negative} has already been considered in \cite{bagnell2015learning} with a similar goal as ours. This paper is a workshop publication that has only be lightly peer-reviewed and contains fundamental flaws. In particular, they provide an incorrect characterization of the solution of \cref{eq:problem-for-representer}, that limits the representation power of the model to the one of non-negative coefficients models, that, as we have seen in \cref{sec:background-other-models} and in \cref{ex:NCM}, has   poor approximation properties and cannot be universal. This severe limitation affects also the optimization framework (which also     only relies on general-purpose toolboxes such as CVX (\url{http://cvxr.com/cvx/}), which are not scalable to large $n$).

Indeed, in their main result, the representer theorem incorrectly characterizes $A^*$ the solution of \cref{eq:problem-for-representer} as 
\[A^* ~\in~ R_n \cap  {\cal S}(\hh)_+, \qquad R_n = \left\{\sum_{i=1}^n \alpha_i \phi(x_i) \otimes \phi(x_i) ~|~ \alpha \in \R^n \right\},\]
 and ${\cal S}(\hh)_+ = \{A \in {\cal S}(\hh)~|~ A \succeq 0\}$.
Note, however that $R_n \subseteq {\cal S}(\hh_n) \subset {\cal S}(\hh)$ by construction, where $\hh_n = \textrm{span}\{\phi(x_1),\dots, \phi(x_n)\}$. So their characterization corresponds to
\[A^* ~\in~ \set{A = \sum_{i=1}^n \alpha_i \phi(x_i) \otimes \phi(x_i) ~|~ \alpha \in \R^n, A \succeq 0}.\]
Now, for simplicity, consider the interesting case where $\phi$ is universal and $x_1,\dots,x_n$ are distinct points. Then $(\phi(x_i))_{i=1}^n$ forms a basis for $\hh_n$ and the only $\alpha_1,\dots, \alpha_n \in \R$ that guarantee $A\succeq 0$ are $\alpha_1 \geq 0,\dots, \alpha_n \geq 0$, i.e.,
\[R_n = \set{A = \sum_{i=1}^n \alpha_i \phi(x_i) \otimes \phi(x_i) ~|~ \alpha_1 \geq 0,\dots, \alpha_n \geq 0}.\]
Note that this class of operators leads only to {\em non-negative coefficients models}. Indeed, let $A \in R_n$ and denote by $k(x,x')$ the function $k(x,x') = (\phi(x)^\top \phi(x'))^2$, then
\[f_{A}(x) = \phi(x)^\top A \phi(x) = \sum_{i=1}^n \alpha_i (\phi(x)^\top \phi(x_i))^2 = \sum_{i=1}^n \alpha_i k(x,x_i), \quad \forall ~ x \in \X.\]
Since $k$ is a kernel (it is an integer power of $\phi(x)^\top \phi(x')$ that is a kernel \cite{scholkopf2002learning}) and $\alpha_1 \geq 0, \dots, \alpha_n \geq 0$, then $f_A$ belongs to the non-negative coefficients models.

Instead, we know by our \cref{thm:representer-non-negative} that $A^* \in {\cal S}(\hh_n)_+$ and more explicitly, by \cref{thm:dual} that $A^*$, the solution of \cref{eq:problem-for-representer} is characterized by the non-positive part operator of a symmetric matrix $[\cdot]_+$. By \cref{thm:universality} we already know that our model is universal while NCM and thus the characterization in \cite{bagnell2015learning} cannot be universal.



\putbib[biblio.bib]

\end{bibunit}

\end{document}